%% file: main.tex
\documentclass{article}

\usepackage{microtype}
\usepackage{graphicx}
\usepackage{subcaption}
\usepackage{booktabs} 

\usepackage[accepted]{icml2026}

\usepackage{amsmath}
\usepackage{amssymb}
\usepackage{mathtools}
\usepackage{amsthm}

\usepackage[pagebackref=true]{hyperref}       

\renewcommand*{\backref}[1]{}
\renewcommand*{\backrefalt}[4]{\ifcase#1\relax\or(page #2)\else(pages #2)\fi}
\usepackage{xurl}
\usepackage{booktabs}       
\usepackage{amsfonts}       
\usepackage{nicefrac}       
\usepackage{microtype}      

\usepackage{xstring}
\usepackage{soul}
\hypersetup{
    breaklinks=true,   
}

\usepackage{bm}
\usepackage{amsmath}
\usepackage{amsthm}
\usepackage{amssymb}
\usepackage{bbm}
\usepackage{float}
\usepackage{makecell}
\usepackage{adjustbox}
\usepackage{multirow}
\usepackage{wrapfig}
\usepackage{subcaption}
\usepackage{rotating}
\usepackage{tikz}
\usepackage{ctable}
\usepackage{enumitem}
\usepackage{mathtools}
\usepackage[algo2e,linesnumbered,ruled,vlined]{algorithm2e}

\SetCommentSty{mycommfont}
\usepackage{placeins}

\usepackage[makeroom]{cancel}
\usepackage{wrapfig}

\setlist[enumerate,1]{leftmargin=2em}
\setlist[itemize,1]{leftmargin=2em}

\input{math_commands.tex}

\usepackage{hyperref}
\usepackage{url}

\usepackage{booktabs}
\usepackage[table]{xcolor}
\usepackage{tabularx}
\usepackage{multirow}
\usepackage{makecell}
\definecolor{hdr}{gray}{0.85}
\definecolor{subhdr}{gray}{0.93}
\newcolumntype{L}{>{\raggedright\arraybackslash}X}
\newcolumntype{C}{>{\centering\arraybackslash}m{2.7cm}}
\newcolumntype{R}{>{\raggedleft\arraybackslash}m{2.2cm}}
\usepackage{siunitx}
\usepackage{threeparttable}
\usepackage[capitalize,noabbrev]{cleveref}
\usepackage{thmtools,thm-restate}
\crefformat{equation}{(#2#1#3)}
\crefrangeformat{equation}{(#3#1#4) to (#5#2#6)}
\crefmultiformat{equation}{(#2#1#3)}{ and (#2#1#3)}{, (#2#1#3)}{ and (#2#1#3)}
\crefformat{section}{\S#2#1#3} 
\crefformat{subsection}{\S#2#1#3}
\crefformat{subsubsection}{\S#2#1#3}
\crefformat{appendix}{\S#2#1#3}
\crefname{section}{Sec.}{Sec.}
\Crefname{section}{Sec.}{Sec.}
\crefname{figure}{Fig.}{Figs.}
\Crefname{figure}{Fig.}{Figs.}
\crefformat{figure}{Fig.~#2#1#3}
\crefname{table}{Tab.}{Tabs.}
\Crefname{table}{Tab.}{Tabs.}
\crefformat{table}{Tab.~#2#1#3}
\crefformat{algorithm}{Alg.~#2#1#3}
\crefformat{proposition}{Prop.~#2#1#3}
\crefformat{lemma}{Lem.~#2#1#3}
\crefformat{theorem}{Thm.~#2#1#3}
\crefformat{corollary}{Cor.~#2#1#3}
\crefname{assumption}{Asm.}{Asm.}
\Crefname{assumption}{Asm.}{Asm.}
\crefformat{assumption}{Asm.~#2#1#3}

\theoremstyle{plain}
\newtheorem{theorem}{Theorem}[section]
\newtheorem{proposition}[theorem]{Proposition}
\newtheorem{lemma}[theorem]{Lemma}

\theoremstyle{definition}

\theoremstyle{remark}

\usepackage[textsize=tiny]{todonotes}

\icmltitlerunning{Inverting Data Transformations via Diffusion Sampling}

\begin{document}

\twocolumn[
  \icmltitle{Inverting Data Transformations via Diffusion Sampling}



  \icmlsetsymbol{equal}{*}
  \icmlsetsymbol{equaladvising}{$\dag$}

  \begin{icmlauthorlist}
    \icmlauthor{Jinwoo Kim}{equal,kaist}
    \icmlauthor{S\'ekou-Oumar Kaba}{equal,mila,mcgill}
    \icmlauthor{Jiyun Park}{kaist}
    \icmlauthor{Seunghoon Hong}{equaladvising,kaist}
    \icmlauthor{Siamak Ravanbakhsh}{equaladvising,mila,mcgill}
  \end{icmlauthorlist}

  \icmlaffiliation{kaist}{KAIST, South Korea}
  \icmlaffiliation{mila}{Mila - Quebec Artificial Intelligence Institute, Montr\'eal, Canada}
  \icmlaffiliation{mcgill}{School of Computer Science, McGill University, Montr\'eal, Canada}

  \icmlcorrespondingauthor{Jinwoo Kim}{jinwoo-kim@kaist.ac.kr}
  \icmlcorrespondingauthor{S\'ekou-Oumar Kaba}{sekou.oumar.kaba@gmail.com}

  \icmlkeywords{Deep learning, Transformation, Symmetry, Diffusion, Sampling, Equivariance, Test-time, PDE, Homography}

  \vskip 0.3in
]



\printAffiliationsAndNotice{* Equal contribution $\dag$ Equal advising}

\begin{abstract}
We study the problem of \emph{transformation inversion} on general Lie groups: a datum is transformed by an unknown group element, and the goal is to recover an inverse transformation that maps it back to the original data distribution. 
Such unknown transformations arise widely in machine learning and scientific modeling, where they can significantly distort observations.
We take a probabilistic view and model the posterior over transformations as a Boltzmann distribution defined by an energy function on the data space. To sample from this posterior, we introduce a diffusion process on Lie groups that keeps all updates on-manifold and only requires computations in the associated Lie algebra. Our method, \textit{Transformation-Inverting Energy Diffusion} (TIED), relies on a new trivialized target-score identity that enables efficient score-based sampling of the transformation posterior.
As a key application, we focus on \emph{test-time equivariance}, where the objective is to improve the robustness of pretrained neural networks to input transformations.
Experiments on image homographies and PDE symmetries demonstrate that TIED can restore transformed inputs to the training distribution at test time, showing improved performance over strong canonicalization and sampling baselines.
\end{abstract}

\setcounter{footnote}{0}
\input{introduction}
\input{relatedwork}
\input{background}
\input{method}
\input{experiments}
\input{conclusion}

\section*{Acknowledgment}
This work was supported in part by the National Research Foundation of
Korea (RS-2024-00351212, RS-2024-00436165), the Institute of Information \& Communications Technology Planning \& Evaluation (IITP) (RS-2024-00509279, RS-2022-II220926, RS-2022-II220959, RS-2019-II190075), the InnoCORE program of the Ministry of Science and ICT (AI Meta-Scientist, N10260110), and the “HPC support” project, funded by the Korean government (MSIT). SR and S-OK would like to acknowledge support by CIFAR, NSERC Discovery and Samsung AI Labs. Computational resources were provided in part by Mila and Compute Canada.

\section*{Impact Statement}
This work introduces a probabilistic framework and algorithm for inverting unknown data transformations from general Lie groups, aiming to improve the generalization of pretrained deep neural networks. Our contributions are theoretical and methodological, and we performed evaluations on publicly available benchmarks either in terms of artifacts or details of simulation. We do not anticipate direct ethical risks associated with this approach.

\bibliography{main}
\bibliographystyle{icml2026}

\newpage
\appendix
\onecolumn

\newpage
\input{appendix}

\end{document}

%% file: math_commands.tex
\usepackage{amsmath,amsfonts,bm}

\def\eqref#1{equation~\ref{#1}}

\def\1{\bm{1}}

\DeclareMathAlphabet{\mathsfit}{\encodingdefault}{\sfdefault}{m}{sl}
\SetMathAlphabet{\mathsfit}{bold}{\encodingdefault}{\sfdefault}{bx}{n}

\definecolor{gray75}{gray}{0.75}

\newcommand{\pr}[1]{\left(#1 \right)} 
\newcommand{\br}[1]{\left[#1 \right]} 

\newcommand{\diff}{\mathrm{d}}

\renewcommand{\v}[1]{\ensuremath{\mathbf{#1}}} 

\DeclarePairedDelimiter\abs{\lvert}{\rvert}

%% file: introduction.tex
\section{Introduction}\label{sec:introduction}

\begin{figure*}
\centering
\includegraphics[width=\linewidth]{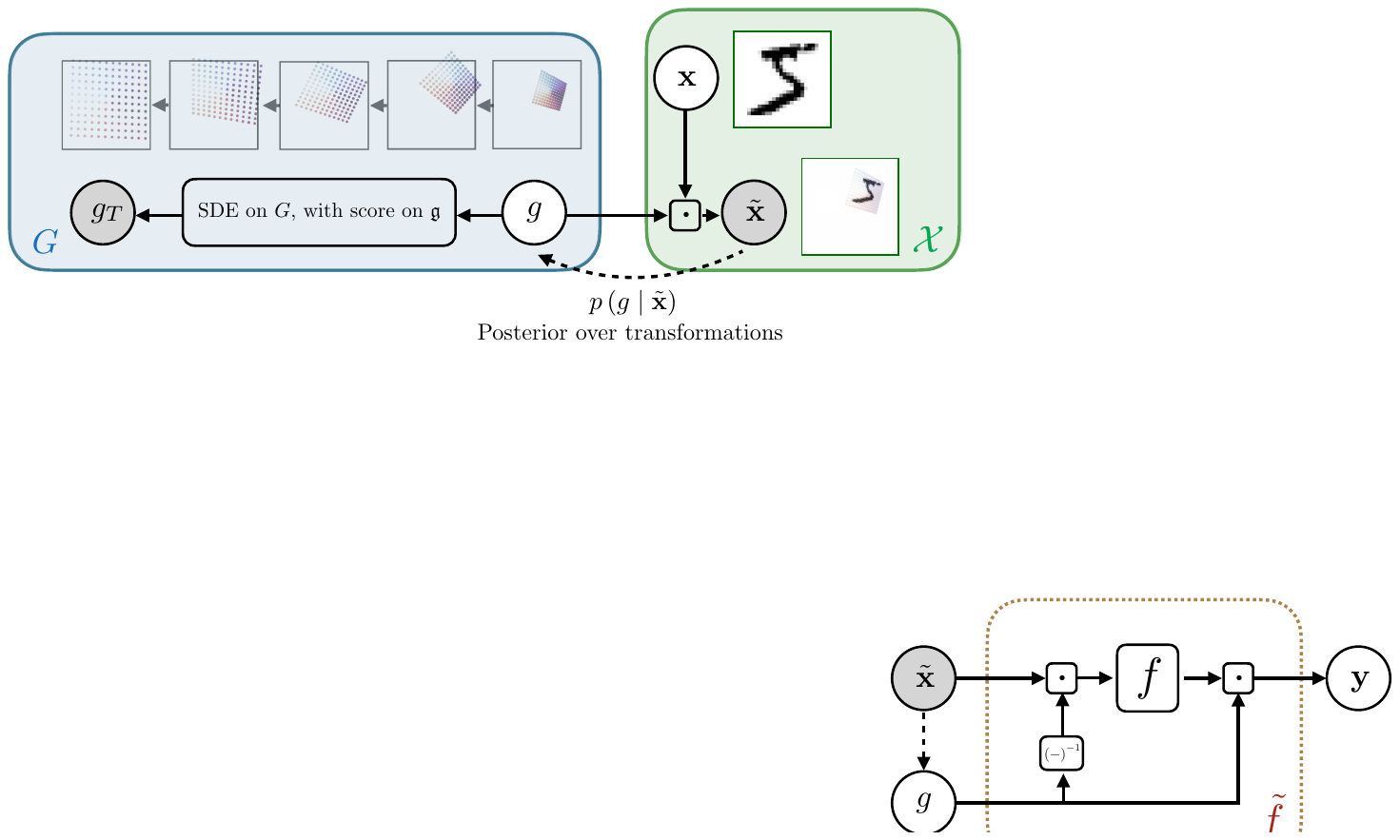}
\caption{Graphical model describing our problem and method (with observed variables in gray and unobserved variables in white). $\mathcal{X}$ denotes the data space and $G$ a group of transformations, here the group of image homographies ${\rm PGL}(3,\mathbb{R}) $. We are provided a data sample $\tilde{\v{x}}$ that is generated by transforming an unknown in-distribution sample $\v{x}$ with an unknown transformation $g$. We wish to sample from the posterior over transformations. Inspired by diffusion models, we construct a fast sampler that reverts a diffusion process on the Lie group starting from a random group element. The scores of the SDE are computed in the Lie algebra $\mathfrak{g}$. }
\label{fig:placeholder}
\end{figure*}

Many data modalities are observed after unknown or unusual geometric transformations. We ask a general question: given data transformed by an unknown element of a Lie group, can we recover an inverse transformation based on a given data distribution? This problem appears in many scenarios, including viewpoint changes and projective warps in computer vision, sensor motion and registration in medical imaging, and changes of reference frame in scientific modeling \citep{olver1993applications,hartley2003multiple,celledoni2021equivariant}. It is an instance of a {blind inverse problem} since the nuisance transform is unknown and does not admit a unique solution in general. Probabilistic methods are therefore a natural fit \citep{kaipio2005statistical}. Unlike existing approaches for blind inverse problems that sample in data space \citep{chung2022diffusion} or are specialized to particular transformation families \citep{zitova2003image,beg2005computing,hartley2013rotation}, we tackle the problem for general Lie group actions, with all inference and sampling done on the group.

We model inverse transformations using energy-based models that quantify the likelihood of different transformations of the data. This formulation allows us to leverage pretrained energy-based models or energies designed using domain expertise. Sampling from energy-based models can however be slow and challenging, especially when the energy landscape is rugged and multimodal. To address this, we introduce a new sampling method on Lie groups based on diffusion \citep{song2020score}, which follows scores along a noise schedule. We show that the sampling can be performed on the Lie group by estimating scores in the Lie algebra (\cref{fig:placeholder}), extending trivialization approaches for handling curved manifolds \citep{lezcano2019trivializations,zhu2025trivialized}. Our method covers a very general class of groups without assuming compactness, a bi-invariant metric, or linear actions.


As a key application, we focus on the case where transformed data are inputs to a neural network. It is known that pretrained networks can exhibit a marked lack of robustness to geometric transformations \citep{hendrycks2019benchmarking,ollikka2024comparison}.
Equivariant neural networks address this for some groups and modalities \citep{cohen2016group,ravanbakhsh2017equivariance,kondor2018generalization,finzi2021practical}, but require highly specialized architectures that can be difficult to scale. Instead of using equivariant models, we wish to improve the robustness of generic, pretrained networks. The ability to invert data transformations ensure that we can bring samples back in-distribution and that the model does not perform inference on out-of-distribution transformed data. Prior methods based on deterministic canonicalization \citep{jaderberg2015spatial,esteves2017polar,kaba2023equivariance} pursue this idea but either rely on training additional equivariant networks \citep{mondal2023equivariant} or use optimization methods \citep{schmidt2024tilt,shumaylov2024lie,singhal2025test} which can be brittle and do not readily extend to general Lie groups.

Our method allows us to make any pretrained model equivariant to Lie groups at test-time using an energy-based model. The energy can be approximated via the pretrained model itself \citep{grathwohl2019your}, which results in a training-free pipeline. This can be interpreted as a form of inference-time scaling for equivariance. The inference-time cost comes from sampling in-distribution transformations, which is made efficient and fast with our novel diffusion sampler.

Our \textbf{main contributions} are as follows\footnote{Code is available at \href{https://github.com/jw9730/tied}{https://github.com/jw9730/tied}.}:
\begin{enumerate}[leftmargin=*]
\item We introduce Transformation-Inverting Energy Diffusion (TIED), a novel diffusion sampler for inverting data transformations. It is guaranteed to stay on-manifold (i.e., exactly transform the input data) and handles general Lie groups and nonlinear actions, only requiring knowledge of the Lie algebra.
\item We prove that inversion of data transformations can be used for test-time equivariance of pretrained models, only requiring an approximate energy model.
\item We demonstrate that our method can invert transformations on challenging groups such as image homographies and Lie point symmetries for partial differential equations (PDEs). On these problems, we demonstrate that our method consistently outperforms baselines in improving the performance of pretrained networks.
\end{enumerate}

%% file: relatedwork.tex
\section{Background and Related Work}\label{sec:related_work}

\subsection{Inverse problems on groups}

Recovering unknown transformations has a long history in problems that are naturally formulated on a group. In alignment and rotation synchronization, the aim is to recover poses on orthogonal or Euclidean groups from pairs of relative measurements \citep{singer2011angular,hartley2013rotation}. Related are also image registration problems in which one seeks to recover the affine or homography transformation between pairs of images \citep{zitova2003image,beg2005computing,lorenzi2013geodesics}. In contrast to our setting, in these problems we are often provided pairs of samples, as opposed to a single one. In addition, these methods typically rely on group-specific knowledge. A recent line of work proposes to use generative models to recover inverse samples \citep{venkatakrishnan2013plug,kadkhodaie2020solving,song2020score,chung2022diffusion}. However, since they operate in data space instead of the group, they require large diffusion models and, for non-linear actions, do not recover transformations that lie on the manifold.

\subsection{Test-time robustness and canonicalization}
Canonicalization methods improve robustness of neural networks by mapping inputs to a reference pose before prediction \citep{jaderberg2015spatial,esteves2017polar,kaba2023equivariance,mondal2023equivariant}. 
These methods offer an alternative paradigm to equivariant neural networks \citep{cohen2016group,bronstein2021geometric} and data augmentation \citep{shorten2019survey, cubuk2019autoaugment} that does not require specialized architectures or more expensive training procedures.
Canonicalization yields equivariance for any predictor if the canonicalizer is itself equivariant. Previous works have proposed probabilistic variants \citep{kim2023learning,dym2024equivariant,cornish2024stochastic,lawrenceimproving}, but they rely on equivariant networks and are therefore limited in applications.
Most closely related to our method are optimization-based canonicalization methods \citep{kaba2023equivariance,schmidt2024tilt,shumaylov2024lie,singhal2025test}, which obtain an equivariant canonicalizer without requiring equivariant primitives. They are, however, purely deterministic, often specialize in specific groups and necessitate optimization of highly rugged energy functions.
Test-time data augmentation methods are an alternative \citep{krizhevsky2012imagenet,szegedy2015going,wang2019aleatoric,kim2020learning,shanmugam2021better}, and improve prediction through an ensembling effect \citep{hansen2002neural}. They, however, lack equivariance guarantees and use heuristic augmentation distributions. Group averaging \citep{yarotsky2022universal} and its frame-based variant \citep{puny2021frame} guarantee equivariance, yet are often intractable for Lie groups and may average over out-of-distribution transformations.

\subsection{Diffusion sampling}\label{sec:diffusion_sampling}

Diffusion models \citep{sohl2015deep,song2020score} are a class of generative models that produce samples by first sampling from a simple base distribution $p_1\pr{\v{x}}$ and integrating along the stochastic differential equation (SDE)
$$
\diff\v{x}_t = -\gamma(t)^2\,\nabla_{\v{x}_t}\log p_t(\v{x}_t) \, \diff t + \gamma(t)\, \diff \bar{\v{w}}_t,
$$
where $t\in \br{0,1}$, $\gamma(t)$ is a scalar diffusion coefficient, and $\bar{\v{w}}_t$ is Brownian motion run backward in time. $p_t(\v{x}_t)$ is a marginal `noisy' distribution under the forward process
\begin{equation}\label{eq:forward_sde}
\diff\v{x}_t = \gamma(t)\, \diff {\v{w}}_t,
\end{equation}
starting from the target distribution $p_0(\v{x}) \equiv p(\v{x})$. We consider here the variance-exploding (VE) SDE. In generative modeling, the scores $\nabla_{\v{x}_t}\log p_t(\v{x}_t)$ can be estimated from data \citep{vincent2011connection}. Since the scores of the noisy densities are smoother than those of the original density and non-vanishing in regions of low energy, diffusion-based sampling has an advantage over more traditional score-based MCMC  \citep{song2019generative}.
The success of diffusion models has inspired a variety of methods for \emph{sampling} from Boltzmann distributions using reverse SDEs; see, e.g., \citet{richter2023improved,vargas2023denoising,akhound2024iterated,de2024target}. These methods enable the estimation of scores along the probability path, assuming access to the energy and its gradients, rather than training data samples. We discuss later extensions to Lie groups, which are relevant to our problem.  Diffusion models over transformations have also been proposed \citep{bansal2023cold}, but unlike our method, use deterministic non-invertible transformations.

%% file: background.tex
\section{Probabilistic Transformation Inversion}

\subsection{Preliminaries}\label{sec:preliminaries}

\paragraph{Lie groups}
We consider a set of transformations $G$ forming a connected Lie group.
While a Lie group is a curved manifold, a lot of its properties derive from its tangent space at the identity, the Lie algebra, denoted by $\mathfrak{g}\equiv T_eG$. We denote by $\exp:\mathfrak{g}\to G$ the exponential map and by $\mu$ the Haar measure on $G$.
The left multiplication map $L_g:G\to G$ is defined as $L_g:h\mapsto gh$, and its tangent map ${\rm d}(L_g)_h:T_hG\to T_{gh}G$ is one-to-one.
We highlight two cases:
$$
{\rm d}(L_g)_e:\mathfrak{g}\to T_gG,\quad {\rm d}(L_{g^{-1}})_g:T_gG\to\mathfrak{g}.
$$
These tangent maps are useful for working in the Lie algebra instead of the tangent spaces of arbitrary group elements, a technique called \textit{(left-)trivialization} in the literature~\citep{lezcano2019trivializations, kong2024convergence}.
A particular example we frequently use is the trivialized gradient operator $\nabla_\mathfrak{g}\equiv \diff(L_{g^{-1}})_g\nabla$, which takes a function on $G$ and evaluates its gradient in $\mathfrak{g}$. Notably, it can be computed with standard automatic differentiation tools without the need to explicitly handle tangent maps. We refer the reader to \cref{apd:background} for additional background on Lie groups and trivialization.

\paragraph{Group actions} We consider data samples $\v{x} \in \mathcal{X}$ and outputs $\v{y} \in \mathcal{Y}$, where $\mathcal{X},\mathcal{Y}\subseteq\mathbb{R}^d$. The data density is denoted $p_{\v{x}}\pr{\v{x}}$. We assume a diffeomorphic but not necessarily linear action of $g\in G$ on data samples $\phi_g\pr{\v{x}}$ also denoted by $g \cdot \v{x}$. The Jacobian of the group action $\phi_g$ evaluated at $\v{x}$ is denoted by $J_{g}\pr{{\v{x}}}$. For linear actions, $J_{g}$ is a group representation independent of $\v{x}$. The orbit of $\v{x}$ is the set of samples that can be obtained through transformations and is denoted by $G\cdot \v{x}$. We denote by $\mu_{G\cdot \v{x}}$ the pushforward of the Haar measure using $\phi_g\pr{\v{x}}$.
A function $f: \mathcal{X}\to \mathcal{Y}$ is equivariant if $f(g\cdot \v{x}) = g\cdot f(\v{x})$ for all $g\in G$, $\v{x}\in \mathcal{X}$ and invariant if $f(g\cdot \v{x}) = f(\v{x})$. Similarly, a conditional density is equivariant if $p\pr{g\cdot \v{y}\mid g\cdot \v{x}} = p\pr{\v{y}\mid \v{x}} \abs{\det J_{g}\pr{{\v{y}}}}^{-1}$ for all $g\in G$, $\v{x}\in \mathcal{X}$, $\v{y} \in \mathcal{Y}$.

\subsection{Problem statement}\label{sec:problem_statement}

We now formally introduce the problem of transformation inversion. We assume that we are given an out-of-distribution sample $\tilde{\v{x}} = g\cdot \v{x}$, generated by applying an unknown transformation $g$ to an in-distribution sample $\v{x}$. Our aim is to solve the blind inverse problem and recover the pair $\pr{g, \v{x}}$.
Since this problem does not admit a unique solution, a probabilistic approach is a natural choice. Assuming a uniform prior over the unknown transformation, a classical solution to inverse problems \citep[see e.g.,][]{stuart2010inverse} is to consider the Bayesian posterior,
$$
p\pr{\v{x} \mid \tilde{\v{x}}} = \frac{p_{\v{x}}\pr{\v{x}} p\pr{ \tilde{\v{x}} \mid\v{x} }}{p\pr{ \tilde{\v{x}}}} \propto p_{\v{x}}\pr{\v{x}} \mathbbm{1}_{G\cdot{{\v{x}}}} \pr{\tilde{\v{x}}},
$$
where the density is with respect to $\mu_{G\cdot \v{x}}$ and $\mathbbm{1}_{G\cdot{{\v{x}}}}$ is the indicator function over the orbit of $\v{x}$.

Rather than modeling the posterior in data space, we can alternatively model the posterior $p\pr{g\mid \tilde{\v{x}}}$ directly on the group. This has the advantage that the group is (often significantly) lower dimensional than the data space, and that this allows recovering the inverse transformation $g$. We can show that the posterior is given by the following.
\begin{restatable}[Transformation inversion posterior]{proposition}{transformationposterior}
\label{prop:transformation_posterior}
Let $G$ act freely on $\mathcal{X}$. Then, 
\begin{enumerate}[leftmargin=*]
    \item The posterior distribution of $g$ has density $p\pr{{g}\mid \tilde{\v{x}}} \propto {p_{\v{x}}\pr{g^{-1}\cdot \tilde{\v{x}}}} \abs{\det J_{g^{-1}}\pr{\tilde{\v{x}}}}$.
\item The random variable $\v{x}' = h^{-1}\cdot \tilde{\v{x}}$, with $h \sim p\pr{{g}\mid \tilde{\v{x}}} $ has density
$p\pr{\v{x}' \mid \tilde{\v{x}}} \propto p_{\v{x}}\pr{\v{x}'} \mathbbm{1}_{G\cdot{\tilde{\v{x}}}} \pr{{\v{x}'}}$ w.r.t. $\mu_{G\cdot \tilde{\v{x}}}$.
\end{enumerate}
\end{restatable}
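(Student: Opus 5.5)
We model the generative process explicitly: $g\sim\mu$ (the uniform, possibly improper, Haar prior) and $\v{x}\sim p_{\v{x}}$ independently, with $\tilde{\v{x}}=\phi_g(\v{x})$. The plan is to compute the joint density of $(g,\tilde{\v{x}})$ by a change of variables in the $\v{x}$-coordinate for fixed $g$, and then condition on $\tilde{\v{x}}$.

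\textbf{Part 1.} For fixed $g$, the map $\v{x}\mapsto\phi_g(\v{x})$ is a diffeomorphism of $\mathcal{X}$ with inverse $\phi_{g^{-1}}$. Hence the conditional density of $\tilde{\v{x}}$ given $g$, with respect to Lebesgue measure, is
\[
p(\tilde{\v{x}}\mid g)=p_{\v{x}}\pr{g^{-1}\cdot\tilde{\v{x}}}\,\abs{\det J_{g^{-1}}\pr{\tilde{\v{x}}}}.
\]
Multiplying by the constant prior density with respect to $\mu$ gives the joint density $p(g,\tilde{\v{x}})$ with respect to $\mu\otimes\mathrm{Leb}$. Bayes' rule then yields
\[
p(g\mid\tilde{\v{x}})\propto p_{\v{x}}\pr{g^{-1}\cdot\tilde{\v{x}}}\abs{\det J_{g^{-1}}\pr{\tilde{\v{x}}}}
\]
with respect to $\mu$, the normalizer being $p(\tilde{\v{x}})=\int_G p(\tilde{\v{x}}\mid g)\,\diff\mu(g)$, assumed finite. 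Freeness is not needed for this part; it enters in Part 2.

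\textbf{Part 2.} Freeness makes the orbit map $\psi:G\to G\cdot\tilde{\v{x}}$, $h\mapsto h^{-1}\cdot\tilde{\v{x}}$, a bijection. I would apply $\psi$ to $h\sim p(g\mid\tilde{\v{x}})$ and compute the pushforward density with respect to $\mu_{G\cdot\tilde{\v{x}}}$. To match the paper's definition of $\mu_{G\cdot\tilde{\v{x}}}$ as the pushforward of Haar measure under $g\mapsto g\cdot\tilde{\v{x}}$, I would use that inversion preserves the Haar measure; this holds when $G$ is unimodular, and otherwise I would fix the convention, e.g. a right Haar measure or the pushforward under $\psi$ itself. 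Under that convention, $\psi_\#\mu=\mu_{G\cdot\tilde{\v{x}}}$. Since $\psi$ is a bijection onto the orbit, the density of $\v{x}'=\psi(h)$ at the point $\v{x}'=h^{-1}\cdot\tilde{\v{x}}$ equals the density of $h$ at the unique preimage $h$:
\[
p(\v{x}'\mid\tilde{\v{x}})\propto p_{\v{x}}(\v{x}')\,\abs{\det J_{h^{-1}}\pr{\tilde{\v{x}}}}\,\mathbbm{1}_{G\cdot\tilde{\v{x}}}(\v{x}').
\]

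\textbf{Main obstacle.} The residual Jacobian factor $\abs{\det J_{h^{-1}}(\tilde{\v{x}})}$ does not cancel in general, since $p_{\v{x}}$ is a density on ambient $\mathbb{R}^d$ while the target density lives on the orbit. I would handle it in one of two ways. The first is to assume $\abs{\det J}=1$, as for volume-preserving actions such as rotations and translations. The second, more generally, is to read the statement's ambient density restricted to the orbit as the density of the disintegration of the ambient Lebesgue measure, $p_{\v{x}}$, along orbits. In that reading, the Jacobian factor is exactly the Radon--Nikodym derivative between the disintegrated Lebesgue measure on $G\cdot\tilde{\v{x}}$ and $\mu_{G\cdot\tilde{\v{x}}}$. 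Establishing this identification carefully is the main technical step. I would prove it via the coarea formula, using freeness to give local product coordinates $G\times(\text{slice})$ in which Lebesgue measure factorizes as Haar measure times a slice measure, up to the same Jacobian.
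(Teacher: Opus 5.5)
Your Part 1 coincides with the paper's argument: a change of variables under the diffeomorphism $\phi_g$, then Bayes with a constant prior density. You are right that freeness plays no role there.

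In Part 2 you depart from the paper, and your objection is correct. The paper writes $\mathbb{P}(\v{x}'\in B\mid\tilde{\v{x}})\propto\int_G p_{\v{x}}(g^{-1}\cdot\tilde{\v{x}})\abs{\det J_{g^{-1}}(\tilde{\v{x}})}\mathbbm{1}_B(g^{-1}\cdot\tilde{\v{x}})\,\diff\mu(g)$. It then asserts that the substitution $\v{x}'=g^{-1}\cdot\tilde{\v{x}}$ turns this into $\int p_{\v{x}}(\v{x}')\mathbbm{1}_B(\v{x}')\,\diff\mu_{G\cdot\tilde{\v{x}}}(\v{x}')$. This silently identifies $\abs{\det J_{g^{-1}}(\tilde{\v{x}})}\,\diff\mu(g)$ with $\diff\mu_{G\cdot\tilde{\v{x}}}(\v{x}')$. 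Because $\mu_{G\cdot\tilde{\v{x}}}$ is a Haar pushforward, that substitution carries no Jacobian (only a modular factor from $g\mapsto g^{-1}$, as you note). The $d\times d$ Jacobian of $\phi_{g^{-1}}$ on $\mathcal{X}$ is a different object, so the factor survives, as you say.

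A minimal check: let $G=(\mathbb{R}_{>0},\cdot)$ act freely on $\mathcal{X}=\mathbb{R}_{>0}$ by scaling. Then $\diff\mu=\diff g/g$, $\mu_{G\cdot\tilde{x}}=\diff x/x$ and $\abs{\det J_{h^{-1}}(\tilde{x})}=x'/\tilde{x}$. So $x'$ has density $\propto x'\,p_{\v{x}}(x')$ with respect to $\mu_{G\cdot\tilde{x}}$, i.e.\ $\propto p_{\v{x}}(x')$ with respect to Lebesgue measure. Part 2 as literally stated therefore fails. Your second reading, where the reference measure on the orbit is the disintegration $\nu$ of Lebesgue measure, is exactly the one under which the paper's one-line step becomes true, at least for unimodular $G$. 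Your route buys a correct statement with an explicit reference measure; the paper's buys brevity at the price of this gap.

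Two refinements to your plan.

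First, the identification you flag as the main technical step follows more directly from equivariance than from coarea and local slices. The map $\phi_k$ sends each orbit to itself, scales Lebesgue measure by $\abs{\det J_k}$, and preserves $\mu_{G\cdot\tilde{\v{x}}}$ (by left-invariance of $\mu$). Uniqueness of the disintegration then forces $\diff\nu/\diff\mu_{G\cdot\tilde{\v{x}}}$ at $k\cdot\tilde{\v{x}}$ to equal $c\,\abs{\det J_k(\tilde{\v{x}})}$, and taking $k=h^{-1}$ gives exactly your residual factor. For full-dimensional orbits this is just the identity $\rho(kg)=\abs{\det J_k(g\cdot\tilde{\v{x}})}\,\rho(g)$, where $\rho$ is the density, with respect to $\mu$, of Lebesgue measure pulled back along $g\mapsto g\cdot\tilde{\v{x}}$.

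Second, your two conventions for non-unimodular $G$ are not interchangeable. With a left-Haar uniform prior, $\v{x}'$ has density $\propto\lambda(h)\,p_{\v{x}}(\v{x}')$ with respect to $\nu$, no matter how $\mu_{G\cdot\tilde{\v{x}}}$ is defined. Declaring $\mu_{G\cdot\tilde{\v{x}}}:=\psi_\#\mu$ only moves the modular factor into $\diff\nu/\diff\psi_\#\mu$; you can verify this on $\mathrm{Aff}^+(1)$ acting on itself. The clean conclusion needs either unimodularity or a uniform prior with respect to the right Haar measure $\mu_R(A)\equiv\mu(A^{-1})$, for which $\psi_\#\mu_R=\mu_{G\cdot\tilde{\v{x}}}$ exactly. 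This is not academic, since $\mathrm{Aff}(2,\mathbb{R})$, used in the experiments, is not unimodular.
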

A proof is in \cref{app:proof_transformation_posterior}. For the first result, it expresses $p\pr{\tilde{\v{x}}\mid g}$ with $p_{\v{x}}$ and a Jacobian correction and applies Bayes' rule with a uniform prior, and for the second, it makes a change of variables that cancels out the Jacobian.
This result shows that when a prior model of the data distribution $p_{\v{x}}$ is available, the posterior over transformations can be directly recovered.
Additionally, sampling $h\sim p\pr{{g}\mid \tilde{\v{x}}}$ and canonicalizing the out-of-distribution sample via $h^{-1}\cdot \tilde{\v{x}}$ yields samples from the prior data distribution, which conforms to intuition.
The free action assumption simplifies the proof, and does not impose a limitation in practice, as we discuss in \cref{app:proof_transformation_posterior}.

Writing the posterior over $g$ as a Boltzmann density, we have
\begin{equation}\label{eq:boltzmann}
\begin{aligned}
p\pr{g\mid \tilde{\v{x}}} &= e^{{- E_\v{x}\pr{{g}^{-1}\cdot \tilde{\v{x}}}}}\abs{\det J_{g^{-1}}\pr{\tilde{\v{x}}}} / Z\pr{\tilde{\v{x}}} , \\
Z\pr{\tilde{\v{x}}} &= \int_G e^{{- E_\v{x}\pr{{g}^{-1}\cdot \tilde{\v{x}}}}}\abs{\det J_{g^{-1}}\pr{\tilde{\v{x}}}}\diff \mu(g),
\end{aligned}
\end{equation}
where $E_\v{x}\pr{\v{x}} = -\log p_{\v{x}}\pr{\v{x}}$ is the energy associated with the data prior. Under mild conditions on the energy, the normalization constant $Z\pr{\tilde{\v{x}}}$ is finite even for non-compact groups and the density is well-defined. Note that any function of $\tilde{\v{x}}$ can be added to the energy without changing the density, since it will cancel in the normalization. Therefore, the energy does not need to be accurate across orbits, but only within each orbit. This is significantly simpler than modeling the data space and is closer to a generative model over transformations \citep{allingham2024generative}.

\subsection{Application to test-time equivariance}

We now consider an application of our framework in improving the robustness of a pretrained neural network $f_\theta: \mathcal{X}\to \mathcal{Y}$. A common failure mode of neural networks, including those trained on large datasets and with data augmentation, is their poor generalization to transformed samples \citep{hendrycks2019benchmarking}. By investing a small amount of compute at test time to sample an inverse transformation, we aim to improve the pretrained model.

Following the canonicalization method \citep{kaba2023equivariance}, we define the test-time predictor $\tilde{f}$ as
$$
\tilde{f}\pr{\tilde{\v{x}}} = g\cdot f\pr{g^{-1}\cdot \tilde{\v{x}}}, \quad g\sim p\pr{g\mid \tilde{\v{x}}},
$$
where the input to the pretrained model $g^{-1}\cdot \tilde{\v{x}}$ is guaranteed to lie in distribution by \cref{prop:transformation_posterior}.
Results by \citet{bloem2020probabilistic, lawrenceimproving} show that this randomized predictor is equivariant if $p\pr{g \mid \tilde{\v{x}}}$ is an equivariant conditional distribution. The natural question is therefore, does the transformation inversion posterior satisfy conditional equivariance? This is indeed the case, which ensures the soundness of our approach.
\begin{restatable}[Equivariance of the posterior]{proposition}{posteriorequivariance}
\label{prop:equiv}
For any $E_\v{x}: \mathcal{X}\to \mathbb{R}$, the posterior density $p\pr{g\mid \tilde{\v{x}}}$ is $G$-equivariant.
\end{restatable}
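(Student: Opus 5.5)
We need to show $p\pr{hg\mid h\cdot\tilde{\v{x}}} = p\pr{g\mid \tilde{\v{x}}}$ for all $h,g\in G$. This is the conditional equivariance condition from \cref{sec:preliminaries}, specialized to the output space $\mathcal{Y}=G$ with the left-multiplication action and densities taken with respect to the Haar measure $\mu$. Since $\mu$ is left-invariant, the Jacobian factor for $g\mapsto hg$ is trivial. The plan is to treat the unnormalized density and the normalizer $Z$ separately, using the Boltzmann form \cref{eq:boltzmann}. Only the group-action axioms and the chain rule are needed; no property of $E_\v{x}$ enters, which is why the statement holds for arbitrary energies.

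\textbf{Step 1 (energy term).} Since $(hg)^{-1}\cdot(h\cdot\tilde{\v{x}}) = g^{-1}h^{-1}h\cdot\tilde{\v{x}} = g^{-1}\cdot\tilde{\v{x}}$, we get
\[
e^{-E_\v{x}((hg)^{-1}\cdot h\cdot\tilde{\v{x}})} = e^{-E_\v{x}(g^{-1}\cdot\tilde{\v{x}})}
\]
identically, for any function $E_\v{x}$.

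\textbf{Step 2 (Jacobian term).} We have $\phi_{(hg)^{-1}} = \phi_{g^{-1}}\circ\phi_{h^{-1}}$. By the chain rule,
\[
J_{(hg)^{-1}}(h\cdot\tilde{\v{x}}) = J_{g^{-1}}(\tilde{\v{x}})\, J_{h^{-1}}(h\cdot\tilde{\v{x}}).
\]
Taking absolute determinants, the unnormalized density transforms as $q(hg\mid h\cdot\tilde{\v{x}}) = q(g\mid\tilde{\v{x}})\, c_h(\tilde{\v{x}})$, where $c_h(\tilde{\v{x}}) = \abs{\det J_{h^{-1}}(h\cdot\tilde{\v{x}})}$ does not depend on $g$.

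\textbf{Step 3 (normalizer).} Substitute $g' = hg$ in the integral defining $Z(h\cdot\tilde{\v{x}})$. Left invariance of the Haar measure gives $\diff\mu(g') = \diff\mu(g)$, so by Step 2
\[
Z(h\cdot\tilde{\v{x}}) = c_h(\tilde{\v{x}})\, Z(\tilde{\v{x}}).
\]
Dividing the result of Step 2 by this, $c_h$ cancels and we obtain $p(hg\mid h\cdot\tilde{\v{x}}) = p(g\mid\tilde{\v{x}})$, which is the claimed equivariance.

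The main obstacle is bookkeeping rather than substance. First, we must state precisely which notion of equivariance is meant for a density on $G$: it is Haar-relative, so no extra Jacobian appears. Second, we must handle the constant $c_h$ correctly. It is nontrivial for non-volume-preserving actions such as homographies, so it cannot be dropped term by term; it cancels only after normalization. If the paper intends a right-multiplication convention instead, the same argument goes through with right Haar measure, or with the modular function absorbed into the $g$-independent constant. We will also note that finiteness of $Z(\tilde{\v{x}})$ implies finiteness of $Z(h\cdot\tilde{\v{x}})$, so both densities are well defined together.
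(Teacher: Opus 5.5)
Your proposal is correct and follows essentially the same route as the paper: the energy term cancels because $(hg)^{-1}\cdot h\cdot\tilde{\v{x}} = g^{-1}\cdot\tilde{\v{x}}$, and the Jacobian chain rule produces the $g$-independent factor $\abs{\det J_{h^{-1}}(h\cdot\tilde{\v{x}})}$. Left-invariance of the Haar measure under the substitution then shows that $Z(h\cdot\tilde{\v{x}})$ picks up the same factor, so it cancels after normalization. Your Step 3 is slightly tidier, since it reuses Step 2 inside the integral, whereas the paper redoes the chain rule for $J_{g^{-1}h}$ before substituting.
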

A proof is in \cref{app:proof_posterior_equivariance}, showing that if the sample $\tilde{\v{x}}$ is transformed, then the posterior must change in the opposite way to ensure that $g^{-1}\cdot \tilde{\v{x}}$ is still distributed according to the data prior.
An intuitive explanation is that transforming $\tilde{\v{x}}\mapsto h\cdot \tilde{\v{x}}$ and $g\mapsto h\cdot g$ leaves the data energy $E_\v{x}(g^{-1}\cdot\v{x})$ unchanged, as the transformation cancels out.

Following \citet{kim2023learning}, we can also sample an ensemble of diverse in-distribution samples and average predictions over them when $p\pr{g \mid \tilde{\v{x}}}$ is equivariant:
\begin{equation}\label{eq:ensemble}
\tilde{f}\pr{\tilde{\v{x}}} = \mathbb{E}_{g\sim p\pr{g \mid \tilde{\v{x}}}}\br{g\cdot f\pr{g^{-1}\cdot \tilde{\v{x}}}}.
\end{equation}
It has been shown that ensembling over augmentations can further improve the performance of pretrained neural networks \citep{shanmugam2021better}. Our method provides a principled form for the augmentation distribution $p\pr{g \mid \tilde{\v{x}}}$, whereas most methods use distributions motivated by heuristics. In practice, we use an ensembling strategy that selects the lowest-energy augmentation, which can be interpreted as sampling from a sharpened distribution then doing one-sample estimation of $\tilde{f}$. We discuss this in detail in \cref{sec:ensemble}.

There is significant flexibility in the choice of the energy function $E_\v{x}$. Following optimization-based canonicalization methods \citep{schmidt2024tilt,singhal2025test}, we consider defining the energy directly from the pretrained model $f_\theta$ using its prediction confidence if it is a classifier \citep{grathwohl2019your}. We also consider the evidence lower bound (ELBO) approximation of the energy using variational autoencoders (VAEs) \citep{kingma2013auto,shumaylov2024lie}.

%% file: method.tex
\section{Transformation-Inverting Energy Diffusion}\label{sec:method}

\begin{figure*}[t!]
\centering
\includegraphics[width=0.99\linewidth]{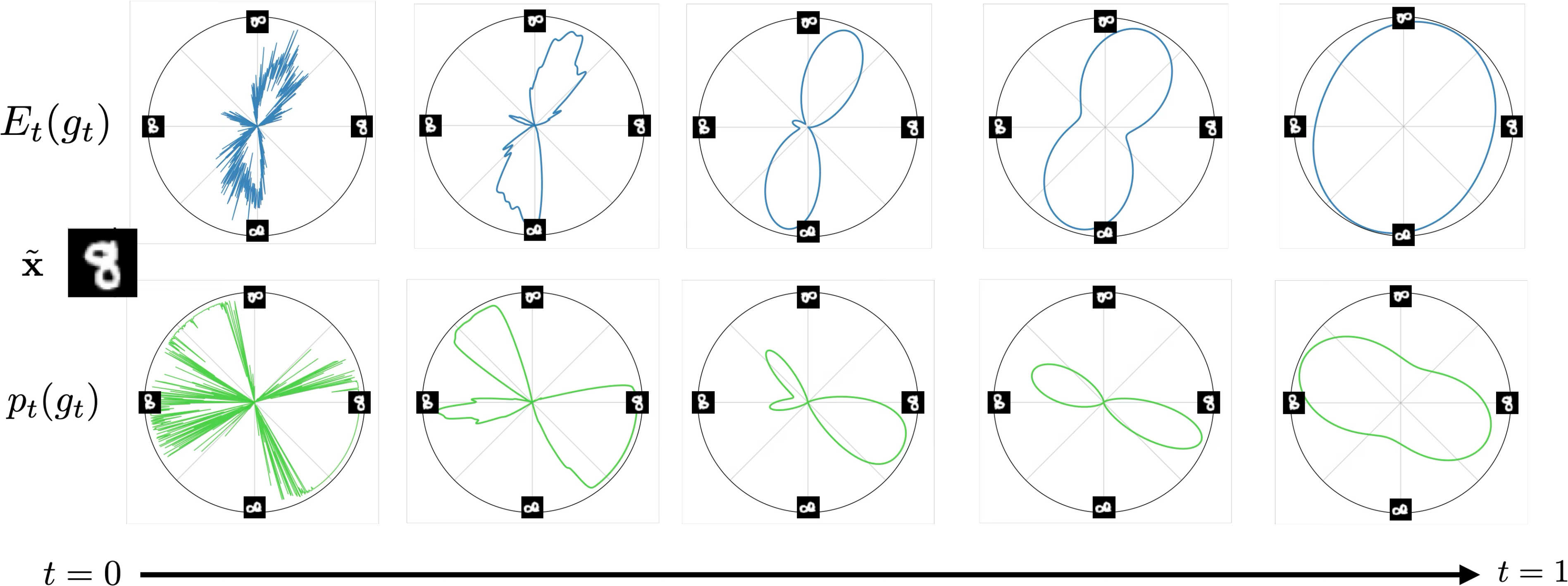}
\caption{Energy (top) and density (bottom) along the forward process \cref{eq:trivialized_forward_sde} for the group of rotations $G={\rm SO}(2)$. The energy of the prior $E_0\pr{g}\equiv E_{\v{x}}\pr{g^{-1}\cdot \tilde{\v{x}}}$ is defined using the \texttt{logsumexp} of classifier logits from a ResNet18 trained on MNIST. The energy at small timesteps (top left) is low for likely orientations of the MNIST digit $\tilde{\v{x}}$. The posterior $p_0(g)$ (bottom left) has modes centered around the most likely transformations.
Since the energy is obtained from a neural network, its landscape is highly rugged, resulting in exploding and vanishing scores. Going to the right, the densities along the forward process are plotted. The landscapes are increasingly smooth and address the ruggedness.}
\label{fig:forward}
\end{figure*}

\subsection{Challenges in sampling from general Lie groups}\label{sec:challenges}

The next question is how to sample from the transformation-inversion distribution \cref{eq:boltzmann}, assuming we have access to an energy-based model and can take its gradient. This is challenging for two reasons. First, the energy landscape can be highly rugged and multi-modal, especially when the energy is derived from a neural network \citep{montufar2014number}.
This leads to slow mixing for MCMC, even for gradient-based methods such as Langevin sampling \citep{roberts1996exponential,welling2011bayesian}. Second, we must sample on the Lie group and respect manifold constraints, while the energy is defined in data space rather than directly on the group. The first issue motivates the use of diffusion sampling methods, which have shown promise in sampling efficiently from Boltzmann densities. \cref{fig:forward} illustrates the benefits of diffusion. However, as we will see, existing methods do not address the second issue for general Lie groups.

\subsection{Diffusion sampling with trivialized target score}\label{sec:diffusion_sampling_with_tsi}

We introduce Transformation-Inverting Energy Diffusion (TIED), and its associated diffusion sampler on Lie groups, overcoming the aforementioned challenges.
To simplify the notation, we fix a datum $\tilde{\bf x}$ and write the transformation inversion posterior defined in \cref{sec:problem_statement} as $p(g)\equiv p(g\mid\tilde{\bf x})$.
Consider the energy $E:G\to\mathbb{R}$ associated with the posterior:
\begin{equation}\label{eq:energy}
    E(g) \equiv E_{\bf x}(g^{-1}\cdot\tilde{\bf x}) - \log{\abs{\det J_{g^{-1}}(\tilde{\bf x})}}.
\end{equation}
To sample from the posterior, we adopt \emph{diffusion sampling}.
The idea is to construct a \emph{forward} noising process $p_t(g_t)$ on the group that gradually transforms $p_0 \equiv p$ into a simple noise distribution $p_1$. We then run this process \emph{backwards in time} to turn noise samples into samples from the posterior.

For the forward process, we propose to use a direct Lie-group analogue of the Euclidean variance-exploding SDE in \cref{eq:forward_sde}. The idea is to draw infinitesimal perturbations in the Lie algebra $\mathfrak{g}$ and then transport them to the current point on the group via right multiplication (i.e., trivialization):
\begin{equation}
\label{eq:trivialized_forward_sde}
\diff g_t = \diff(L_{g_t})_e\bigl[\gamma(t)\,\diff \mathbf{w}_t^\mathfrak{g}\bigr], \quad g_0 \sim p,
\end{equation}
where $\diff \mathbf{w}_t^\mathfrak{g}$ is Brownian motion in the Lie algebra. Intuitively, we draw a small random step in $\mathfrak{g}$, then use the tangent map $\diff(L_g)_e: \mathfrak{g} \to T_g G$ to translate that step from the identity to the current location $g_t$ on the group. This plays the same role as adding Gaussian noise in the Euclidean setting.
Visual illustrations can be found in \cref{fig:forward}, along with \cref{fig:trivialized_sde} in \cref{apd:background}.

A convenient property of this SDE is that its solution can be written in a simple \emph{multiplicative} form:
$g_t = g_0\,w_t$,
where $w_t \sim k_t$ is a noise random variable on the group that is independent of $g_0$. This mirrors the Euclidean relationship $\mathbf{x}_t = \mathbf{x}_0 + \mathbf{w}_t$ for Gaussian noise $\mathbf{w}_t$, except that on a Lie group we combine the signal and noise via group multiplication rather than vector addition. While $k_t$ does not always admit a closed-form expression due to the group geometry, we can still sample from it efficiently using the exponential map, which is sufficient for our purposes (see \cref{sec:calculation}).

For diffusion sampling, we reverse this construction: we start from noise samples $w_1 \sim k_1$ and run the SDE backwards in time to obtain samples from $p_0 \equiv p$. This is valid when the diffusion strengths $\gamma(t)$ are large enough that $k_1$ approximates the marginal $p_1$ of the forward process. We prove that this time-reversed dynamics is again a trivialized diffusion: it evolves on the group via Lie-algebra noise, but now with an additional drift term given by a \emph{trivialized score} (score evaluated in the Lie algebra).

\begin{restatable}[Reverse trivialized SDE]{proposition}{reversesde}
\label{prop:reverse_sde}
If each $p_t(g_t)$ is smooth and positive with respect to the Haar measure, then the time-reversal of \Cref{eq:trivialized_forward_sde} is
\begin{equation}
\label{eq:trivialized_reverse_sde}
\diff g_t
= \diff (L_{g_t})_e\bigl[
-\gamma(t)^2\, \nabla_\mathfrak{g}\log p_t(g_t)\,\diff t
+ \gamma(t)\,\diff \bar{\mathbf{w}}_t^\mathfrak{g}
\bigr],
\end{equation}
where $\bar{\mathbf{w}}_t^\mathfrak{g}$ is Brownian motion in $\mathfrak{g}$ run backwards in time, and $\nabla_\mathfrak{g}\log p_t(g_t)$ is the trivialized score, i.e., the score expressed as an element of the Lie algebra.
\end{restatable}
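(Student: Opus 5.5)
Our plan is to reduce the statement to the standard time-reversal theorem for diffusions on manifolds (Haussmann--Pardoux / Anderson type), phrased through the generator and Fokker--Planck equation with respect to the Haar measure. Fix a basis $\{E_i\}$ of $\mathfrak{g}$, orthonormal for the inner product defining $\mathbf{w}_t^\mathfrak{g}$, and let $\tilde E_i$ be the corresponding left-invariant vector fields, $\tilde E_i(g) = \diff(L_g)_e E_i$. Interpreted in the Stratonovich sense, \cref{eq:trivialized_forward_sde} reads $\diff g_t = \gamma(t)\sum_i \tilde E_i(g_t)\circ \diff w^i_t$. Its generator is therefore $\mathcal{L}_t = \tfrac{\gamma(t)^2}{2}\sum_i \tilde E_i^2$. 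Because the vector fields are left-invariant, this operator commutes with left translations, which is consistent with the multiplicative solution $g_t = g_0 w_t$.

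The first step is to compute the Fokker--Planck equation for $p_t$ relative to the Haar measure $\mu$ used in the paper. For this we need the adjoint of $\tilde E_i$ in $L^2(\mu)$. Integration by parts gives $\tilde E_i^* = -\tilde E_i - \operatorname{div}_\mu \tilde E_i$. The divergence of a left-invariant field with respect to left Haar measure is constant and equals $\pm\operatorname{tr}\operatorname{ad}_{E_i}$. This constant vanishes for unimodular groups but not in general.

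This is the step we expect to be the main obstacle, since the paper assumes neither compactness nor bi-invariance. We would first check which Haar measure makes the divergences vanish. The flow of a left-invariant field is right multiplication by $\exp(sE_i)$, so the relevant measure is the \emph{right} Haar measure, which is preserved by right translations. The divergence term therefore disappears when $\mu$ is right Haar. In the general case we would carry the constant $c_i = \operatorname{div}_\mu \tilde E_i$ through the calculation and absorb it into the score: if we write $p_t = \rho_t/\Delta$ with $\Delta$ the modular function, then $\tilde E_i \log \Delta$ is exactly this constant, so $\nabla_\mathfrak{g}\log p_t$ computed with the matching Haar measure yields the stated drift. In either case the result is $\partial_t p_t = \tfrac{\gamma^2}{2}\sum_i \tilde E_i^2 p_t$, a heat equation for the left-invariant sub-Laplacian.

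With the Fokker--Planck equation in hand, we derive the reversal directly. We set $q_s = p_{1-s}$ and look for a reverse generator of the form $\mathcal{L}^{\rm rev}_s = \tfrac{\gamma^2}{2}\sum_i\tilde E_i^2 + \sum_i b_i \tilde E_i$ whose forward Kolmogorov equation is satisfied by $q_s$. We then use the identity
\[
\tilde E_i^2 p = \tilde E_i\bigl(p\,\tilde E_i \log p\bigr).
\]
Writing $-\tfrac{\gamma^2}{2}\tilde E_i^2 p = \tfrac{\gamma^2}{2}\tilde E_i^2 p - \gamma^2\tilde E_i(p\,\tilde E_i\log p)$ shows that the reverse drift is $b_i = \gamma^2\,\tilde E_i \log p_t$. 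Since $\tilde E_i \log p = \langle \nabla_\mathfrak{g}\log p, E_i\rangle$, this drift equals $\gamma^2\,\diff(L_g)_e\nabla_\mathfrak{g}\log p_t$. The sign flip in \cref{eq:trivialized_reverse_sde} reflects the fact that time runs backward there, so $\diff t<0$.

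Smoothness and positivity of $p_t$ are used to make $\log p_t$ and the integrations by parts legitimate. To upgrade the equality of marginals to equality of path laws, we appeal to the standard uniqueness argument for time reversal, applied in local charts or via an embedding of the group. Finally, we note that the Itô correction vanishes. In the trivialized coordinates the noise is additive in $\mathfrak{g}$, so the Stratonovich-to-Itô term $\tfrac12\sum_i \nabla_{\tilde E_i}\tilde E_i$ only affects how the SDE is read, and the same convention is used in the forward and reverse equations.
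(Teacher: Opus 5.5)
You correctly identify the crux, non-unimodular $G$, but your way around it does not prove the statement as posed. In the paper, $\mu$ is the \emph{left} Haar measure, and $p_t$ is a density with respect to it; \cref{prop:trivialized_tsi} and \cref{prop:mc_score_estimator}, with their modular factor $\lambda$, depend on this. Take your Stratonovich generator $\tfrac{\gamma^2}{2}\sum_i\tilde E_i^2$ and $c_i=\operatorname{div}_\mu\tilde E_i=-\operatorname{tr}\operatorname{ad}_{E_i}$. Then the Fokker--Planck equation for this $p_t$ is $\partial_t p_t=\tfrac{\gamma^2}{2}\sum_i(\tilde E_i+c_i)^2p_t$, which is not a heat equation. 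Redoing your reversal computation gives $b_i=\gamma^2(\tilde E_i\log p_t+c_i)$. That is a reverse drift $-\gamma^2(\nabla_\mathfrak{g}\log p_t+\mathbf{c})$ with the constant $\mathbf{c}=\sum_i c_iE_i$, which is nonzero for ${\rm Aff}(2,\mathbb{R})$, for example. ``Absorbing $c_i$ into the score'' replaces $p_t$ by the right-Haar density. That proves a different identity, and your claim that ``in either case the result is $\partial_tp_t=\tfrac{\gamma^2}{2}\sum_i\tilde E_i^2p_t$'' is false for the left-Haar density.

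The missing ingredient is what your last paragraph dismisses: the It\^o correction does not vanish. For a left-invariant metric, the Koszul formula gives $\sum_i\nabla_{\tilde E_i}\tilde E_i=\sum_k\operatorname{tr}(\operatorname{ad}_{E_k})\,\tilde E_k=-\sum_k c_k\tilde E_k$. So when $G$ is not unimodular, your Stratonovich process differs from the paper's process by a drift; the convention changes the process, not merely how it is read. The paper reads \cref{eq:trivialized_forward_sde} as a geometric It\^o SDE. Its generator, obtained through the Hessian in \cref{lemma:infinitesimal_generator}, is the Laplace--Beltrami operator $\tfrac{\gamma^2}{2}\Delta=\tfrac{\gamma^2}{2}\sum_i(\tilde E_i^2+c_i\tilde E_i)$. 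This first-order term combines with the $c_i$ from $\tilde E_i^*=-\tilde E_i-c_i$ to make the operator symmetric in $L^2(\mu)$ for left Haar measure, which is the Riemannian volume. Hence $\partial_tp_t=\tfrac{\gamma^2}{2}\Delta p_t$. Using $\operatorname{div}(p\,\nabla\log p)=\Delta p$, the reversal then yields exactly $-\gamma^2\nabla_\mathfrak{g}\log p_t$. If you keep this correction in both the forward and reverse generators, your adjoint computation goes through unchanged. For unimodular groups, where all $c_i=0$ (e.g.\ ${\rm SO}(10)$, ${\rm PGL}(3,\mathbb{R})$), your argument is already correct as written.
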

A proof is in \cref{app:proof_reverse_trivialized_sde}, which derives and compares the density evolutions induced by the forward and reverse processes.
If we can evaluate or estimate the trivialized score, diffusion sampling becomes straightforward: we discretize the reverse-time SDE and update group elements using the exponential map at each step. We present the resulting sampling scheme in \cref{sec:calculation} and focus next on how to estimate the score.

A natural first idea for estimating the score is to learn it with a neural network via score matching \citep{huang2022riemannian, de2022riemannian, zhu2025trivialized}.
However, these methods typically assume access to \emph{clean} samples from $p_0\equiv p$. In our setting, the posterior $p$ is only available through its associated energy $E \equiv -\log p$.
To address this problem, we introduce a new estimator of the trivialized score that uses the energy instead of requiring clean samples.

We build on the work of \citet{akhound2024iterated, de2024target}, who use the \emph{target score identity} to express scores of noisy distributions in terms of gradients of a clean energy. We present a new generalization of this identity to Lie groups that is compatible with trivialization:

\begin{restatable}[Trivialized target score identity]{proposition}{trivializedtsi}
\label{prop:trivialized_tsi}
For the forward SDE in \Cref{eq:trivialized_forward_sde}, we have
\begin{equation}
\label{eq:trivialized_tsi}
\nabla_\mathfrak{g} \log p_t(g_t)
=
\int_G \nabla_\mathfrak{g} \log p_0(g_0)\, p_{0|t}(g_0 \mid g_t)\diff \mu(g_0)
\end{equation}
where the argument of $\log p_0$ is interpreted as $g_t b$ for $b \equiv g_t^{-1} g_0$, and $\nabla_\mathfrak{g}$ is taken with respect to $g_t$.
\end{restatable}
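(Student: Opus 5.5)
The plan is to write the noisy marginal as a convolution on the group, move the trivialized gradient inside the integral, and then convert it into a gradient of $\log p_0$ by a change of variables that exploits left-invariance. Since the forward SDE \cref{eq:trivialized_forward_sde} has solution $g_t = g_0 w_t$ with $w_t\sim k_t$ independent of $g_0$, the marginal density with respect to Haar measure is
\begin{equation*}
p_t(g_t)=\int_G p_0(g_0)\,k_t(g_0^{-1}g_t)\,\diff\mu(g_0).
\end{equation*}
This uses the left-invariance of $\mu$ and the fact that the Haar density of $g_0w$ given $g_0$ is $k_t(g_0^{-1}\,\cdot)$. Substituting $g_0 = g_t b$, with $b = g_t^{-1}g_0$ and the left-invariance $\diff\mu(g_0)=\diff\mu(b)$, gives
\begin{equation*}
p_t(g_t)=\int_G p_0(g_t b)\,k_t(b^{-1})\,\diff\mu(b).
\end{equation*}
The point of this substitution is that $g_t$ now appears only in the argument of $p_0$; the kernel $k_t(b^{-1})$ no longer depends on $g_t$.

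Next we differentiate. Take the trivialized gradient $\nabla_\mathfrak{g}$ with respect to $g_t$ and move it under the integral sign, which needs smoothness of $p_0$ and enough integrability to dominate the derivative. The derivative lands only on $p_0(g_t b)$, so
\begin{equation*}
\nabla_\mathfrak{g} p_t(g_t)=\int_G \nabla_\mathfrak{g}\bigl[p_0(g_t b)\bigr]\,k_t(b^{-1})\,\diff\mu(b).
\end{equation*}
Writing $\nabla_\mathfrak{g}p_0(g_tb)=p_0(g_tb)\,\nabla_\mathfrak{g}\log p_0(g_tb)$, where the gradient is with respect to $g_t$ with $b$ fixed, gives the integrand
\begin{equation*}
\nabla_\mathfrak{g}\log p_0(g_tb)\; p_0(g_tb)\,k_t(b^{-1}).
\end{equation*}

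To finish, divide by $p_t(g_t)$, which is legitimate because $p_t>0$. Bayes' rule gives $p_{0|t}(g_0\mid g_t)=p_0(g_0)\,k_t(g_0^{-1}g_t)/p_t(g_t)$. Changing variables back from $b$ to $g_0 = g_tb$ turns the integral into one against $p_{0|t}(g_0\mid g_t)\,\diff\mu(g_0)$, which is \cref{eq:trivialized_tsi}. The gradient is understood exactly as the statement specifies: with respect to $g_t$, with the argument read as $g_tb$.

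The main obstacle is bookkeeping rather than analysis. First, the convolution must be written with the correct side and inverse convention, namely $k_t(g_0^{-1}g_t)$ rather than $k_t(g_tg_0^{-1})$. Second, the change of variables must use left-invariance only, since $G$ need not be unimodular. Left-invariance suffices for $g_0 = g_t b$ with $g_t$ fixed, but I would check this carefully, because the kernel appears evaluated at $b^{-1}$ and no inversion of the integration variable is ever needed. Exchanging differentiation and integration is a regularity assumption I would state explicitly, for example via dominated convergence under the smoothness and positivity hypotheses of \cref{prop:reverse_sde}.
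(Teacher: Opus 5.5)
Your argument is correct, and its skeleton matches the paper's proof: the convolution form of $p_t$, the trivialized gradient moved under the integral, the log-derivative trick, a change of variables back to $g_0$, and Bayes' rule. The one genuine difference is the integration variable.

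The paper integrates over the noise $w_t=g_0^{-1}g_t$ via Lemma~\ref{lemma:change_of_variables2}, writing $p_t(g_t)=\int_G k_t(w)\,p_0(g_tw^{-1})\,\lambda(w)^{-1}\diff\mu(w)$. The inversion $w\mapsto w^{-1}$ brings in the modular function. It is reabsorbed at the end through $\lambda(w_t)^{-1}\diff\mu(w_t)=\diff\mu(w_t^{-1})=\diff\mu(g_tw_t^{-1})=\diff\mu(g_0)$.

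You integrate over $b=w_t^{-1}=g_t^{-1}g_0$ instead. The only measure change is then a left translation by the fixed $g_t$, so left-invariance alone suffices and $\lambda$ never appears. Your concern about non-unimodular $G$ is therefore resolved exactly as you say. Your parametrization is also literally the ``$g_tb$'' reading in the statement. Since fixing $b$ is the same as fixing $w_t$, the two trivialized gradients coincide.

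What the paper's choice buys is downstream. An integral against $k_t(w)\,\diff\mu(w)$ is exactly the expectation over $w\sim k_t$ used in Proposition~\ref{prop:mc_score_estimator} and Algorithm~\ref{alg:sampling}. Starting from your form, you would have to observe that $b=w^{-1}$ with $w\sim k_t$ has Haar density $k_t(b^{-1})\lambda(b)^{-1}$. The $\lambda$ correction thus reappears as soon as you want to sample the noise. Your route shows that the modular function is irrelevant to the identity itself and only matters for its Monte Carlo form.
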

A proof is in \cref{app:proof_trivialized_tsi}, which writes $p_t$ as a convolution of the clean density $p_0$ and the noise kernel $p_{t|0}$, then takes the trivialized gradient followed by the log-derivative trick.
The result shows that the trivialized score at time $t$ can be written as an \emph{average} of the initial score $\nabla_\mathfrak{g} \log p_0$ over all possible starting points $g_0$ that could have led to the current state $g_t$, weighted by the conditional density $p_{0|t}$. Since $\nabla_\mathfrak{g} \log p_0 = - \nabla_\mathfrak{g} E$ and we can access $E$, this gives us a way to express the desired score entirely in terms of clean energy gradients.

A key feature of our result is that, unlike the Lie-group extension of the target score identity in \citet{de2024target}, it applies to \emph{general} Lie groups and does not require a bi-invariant metric. The reason is in the use of trivialization. By always expressing gradients as elements of the Lie algebra, we can meaningfully average energy gradients coming from different points on the group without explicitly transporting vectors between different tangent spaces. This removes the need for the right-invariance assumption used in \citet[$\S$A.4]{de2024target} to handle such transports. We remark that a Lie group admits a bi-invariant metric iff it is isomorphic to $\mathbb{R}^k\times K$, with $K$ compact~\citep[Lem.~7.5]{MILNOR1976293}. This is a strong constraint. Many practical groups fail this, including the Euclidean group, the general linear group in $d>1$, the affine group, and the projective group. Many Lie point symmetries of PDEs, relevant to scientific applications, also fall outside. By contrast, our result broadly applies to these practical groups, as we demonstrate in our experiments.

Based on the identity, we now derive a form of the trivialized score that is suitable for Monte Carlo estimation. The idea is to rewrite \cref{eq:trivialized_tsi}, currently weighted by the conditional density $p_{0|t}(g_0\mid g_t)$, into an average weighted by the noise density $k_t$ which can be sampled easily. We achieve this by exploiting the multiplicative form $g_t = g_0 w$ for $w \sim k_t$, and performing a corresponding change of variables on the group.
We leave the derivation in \cref{app:proof_trivialized_dem}, which is based on a nontrivial extension of the Euclidean case in \citet{akhound2024iterated} to Lie groups.

\begin{restatable}[Monte Carlo score estimator]{proposition}{mcscoreestimator}\label{prop:mc_score_estimator}
For the forward SDE in \Cref{eq:trivialized_forward_sde}, where $p_0$ is a Boltzmann density specified by a smooth energy $E$, we have
\begin{equation}\label{eq:logsumexp_estimator}
\begin{aligned}
\nabla_\mathfrak{g} \log p_t(g_t) &=
\nabla_\mathfrak{g} \log \int_G
k_t(w)\, h(g_t, w)\, \diff \mu(w),\\
h(g_t, w) &= \exp\bigl(-E(g_t w^{-1}) - \log \lambda(w)\bigr)
\end{aligned}
\end{equation}
where $\lambda$ accounts for the change of the Haar measure under inversion, $\diff \mu(w^{-1}) = \lambda(w)^{-1} \diff \mu(w)$.
\end{restatable}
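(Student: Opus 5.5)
The plan is to prove the identity at the level of densities first and only then take the trivialized gradient. I will show that $p_t(g_t)$ is, up to the constant $Z$, equal to the integral inside the logarithm in \cref{eq:logsumexp_estimator}. Since $\nabla_\mathfrak{g}$ kills additive constants in $\log p_t$, the claim then follows at once, with no need to manipulate scores directly. Throughout, $\mu$ denotes a left Haar measure, and $k_t$ is the density of the noise variable $w_t$ with respect to $\mu$.

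\textbf{Step 1 (convolution form).} Using the multiplicative solution $g_t = g_0 w_t$ of \cref{eq:trivialized_forward_sde}, with $w_t\sim k_t$ independent of $g_0$, I will compute the conditional law of $g_t$ given $g_0$. For a test function $\varphi$,
\[
\mathbb{E}[\varphi(g_0 w_t)\mid g_0]=\int_G \varphi(g_0 w)k_t(w)\,\diff\mu(w)=\int_G\varphi(g)\,k_t(g_0^{-1}g)\,\diff\mu(g),
\]
where the second equality uses left invariance of $\mu$. Hence $p_{t|0}(g_t\mid g_0)=k_t(g_0^{-1}g_t)$, and therefore
\[
p_t(g_t)=\int_G p_0(g_0)\,k_t(g_0^{-1}g_t)\,\diff\mu(g_0).
\]
This is the same convolution representation used in the proof of \cref{prop:trivialized_tsi}.

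\textbf{Step 2 (change of variables).} For fixed $g_t$, I substitute $w=g_0^{-1}g_t$, i.e.\ $g_0=g_t w^{-1}$. This map is the composition of inversion $w\mapsto w^{-1}$ with left translation by $g_t$. Left translation preserves $\mu$, and inversion contributes the factor $\lambda(w)^{-1}$ by the definition $\diff\mu(w^{-1})=\lambda(w)^{-1}\diff\mu(w)$; on a unimodular group $\lambda\equiv 1$. Writing $p_0=e^{-E}/Z$, this gives
\[
p_t(g_t)=\frac{1}{Z}\int_G k_t(w)\,e^{-E(g_tw^{-1})}\lambda(w)^{-1}\,\diff\mu(w)=\frac1Z\int_G k_t(w)\,h(g_t,w)\,\diff\mu(w).
\]
Taking $\log$ and then $\nabla_\mathfrak{g}$ with respect to $g_t$ removes $\log Z$, which yields \cref{eq:logsumexp_estimator}. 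Here $\nabla_\mathfrak{g}$ is the left-trivialized derivative, computed through curves $g_t\exp(s\xi)$. As a consistency check, I will differentiate under the integral sign. This produces a $k_t h$-weighted average of $-\nabla_\mathfrak{g}E(g_tw^{-1})$, which is exactly \cref{eq:trivialized_tsi} with $b=w^{-1}$.

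\textbf{Main obstacle.} The delicate point is the measure-theoretic bookkeeping on a possibly non-unimodular group. One must consistently use left Haar measure for $k_t$, $p_0$ and $p_t$, and verify that the pushforward under $w\mapsto g_tw^{-1}$ produces exactly $\lambda(w)^{-1}$ rather than, for example, a modular factor evaluated at $g_t$. Checking this carefully settles that the correction depends only on $w$, and hence does not interfere with the gradient in $g_t$. A secondary step is justifying that the gradient passes inside the integral. I would handle this by dominated convergence, using smoothness of $E$ and integrability of $k_t h$ in a neighbourhood of $g_t$, which is already implicit in the positivity and smoothness assumption of \cref{prop:reverse_sde}.
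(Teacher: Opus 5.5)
Your proof is correct. It uses the same key ingredient as the paper: the convolution form $p_t(g_t)=\int_G k_t(w)\,p_0(g_tw^{-1})\,\lambda(w)^{-1}\,\diff\mu(w)$ from \cref{lemma:change_of_variables2}. You correctly attribute the modular factor to the inversion and none to the left translation by $g_t$.

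Where you differ is the order of operations. The paper starts from a score-level identity, obtained by differentiating this convolution under the integral sign. It writes $\nabla_\mathfrak{g}\log p_t$ as a ratio of two integrals, cancels $Z$ in the ratio, then moves $\nabla_\mathfrak{g}$ back outside and applies \cref{prop:trivialized_score} to the denominator. You stay at the level of densities: you note that $p_t = Z^{-1}\int_G k_t h\,\diff\mu$ holds pointwise, then take $\log$ and $\nabla_\mathfrak{g}$ once.

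Your route is shorter and never interchanges $\nabla_\mathfrak{g}$ with the integral, so it needs no domination hypothesis. The paper's detour buys only the intermediate ratio form, which links the result to \cref{prop:trivialized_tsi} and to the consistency argument for the Monte Carlo estimator.

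In particular, the dominated-convergence step you list as a secondary obstacle is not needed for the proposition; it matters only for your consistency check. There, $\nabla_\mathfrak{g}E(g_tw^{-1})$ must mean the trivialized gradient of $g_t\mapsto E(g_tw^{-1})$. That equals $\mathrm{Ad}(w)^\top(\nabla_\mathfrak{g}E)(g_tw^{-1})$, not $\nabla_\mathfrak{g}E$ evaluated at $g_tw^{-1}$.
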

A proof is in \cref{app:proof_trivialized_dem}, showing that substituting the Boltzmann form $p_0\propto e^{-E}$ into the convolution form of $p_t$ cancels out the normalizing constant in the score, leaving an expectation over the noise kernel $k_t$.
This expression is well-suited for practical estimation. We can efficiently draw samples $w \sim k_t$, and approximate \cref{eq:logsumexp_estimator} via Monte Carlo. We describe this procedure in detail and prove its consistency in \cref{apd:mc_estimation}.

\subsection{Practical implementation}

\begin{algorithm}[t!]
\DontPrintSemicolon
\SetNoFillComment
\caption{Sampling with TIED}
\label{alg:sampling}
\SetAlgoLined
\DontPrintSemicolon
\KwIn{Lie group $G$, energy $E$, noise schedule $\gamma$, step size $\Delta t$, Monte Carlo sample size $N$}
\KwOut{Sample $\hat{g}_0\sim p\propto e^{-E(\cdot)}$}
$M\leftarrow 1/\Delta t$\tcp*[r]{number of steps}
$\hat{g}_M \sim k_1$ using \cref{eq:noise_time_discretization}\tcp*[r]{initialization}
\For{$m \leftarrow M$ \KwTo $1$}{
$t \leftarrow m \Delta t$\tcp*[r]{current time}
\tcp*[l]{trivialized score estimation}
\For{$i \leftarrow 1$ \KwTo $N$}{
$w^{(i)} \sim k_t$ using \cref{eq:noise_time_discretization}\tcp*[r]{MC sample}
$l^{(i)}\leftarrow \log \lambda(w^{(i)})$; \cref{prop:mod}\tcp*[r]{correction}
}
$f(\cdot) \leftarrow \log \sum_i e^{-E\left((\cdot)\, w^{(i)-1}\right) - l^{(i)}}$\tcp*[r]{logsumexp}
$\hat{s}_m \leftarrow \nabla_{\mathfrak{g}} f(\hat{g}_m)$; \cref{prop:trivialized_grad}\tcp*[r]{autograd}
\tcp*[l]{reverse diffusion step}
$\bar{z}_m \sim \mathcal{N}(0, \Delta t I)$\tcp*[r]{noise}
$\hat{g}_{m-1}\!\leftarrow\!\hat{g}_m \exp( \gamma(t)^2 \hat{s}_m \Delta t \!+\! \gamma(t) \bar{z}_m )$\tcp*[r]{update}
}
\end{algorithm}

\cref{alg:sampling} presents a practical implementation of energy-based diffusion sampling with TIED.
It runs a time-discretized reverse SDE on the Lie group using energy-based MC estimation of the trivialized score.
The number of samples $N$ trades off computational cost and the quality of estimation.
In our experiments, we find that $N \lessapprox 10$ usually yields satisfactory performance and can be parallelized.
The implementation relies on a nice property of the score estimator that all its required components, including the volume correction $\lambda$ and the trivialized gradient $\nabla_\mathfrak{g}$, can be computed for general Lie groups using standard automatic differentiation tools.
We explain the computations in detail in \cref{sec:calculation} and \cref{sec:ensemble}.

%% file: experiments.tex
\section{Experiments}\label{sec:experiments}
We evaluate TIED on (i) a synthetic sampling problem on a high-dimensional Lie group; (ii) two image classification problems using a trained convolutional neural network under unknown affine and homography (perspective) transformations; and (iii) two partial differential equations (PDEs) solving problems using trained neural operators under unknown Lie point symmetry transformations.
Efficiency, cost-accuracy tradeoff, and ablation analysis can be found in \cref{apd:experiment}.

\subsection{Synthetic sampling task on \texorpdfstring{${\rm SO}(10)$}{SO(10)}}\label{sec:so10}
\begin{figure}[t!]
\centering
\begin{subfigure}[t]{0.45\textwidth}
\centering
\vspace{-2ex}
\includegraphics[width=\textwidth]{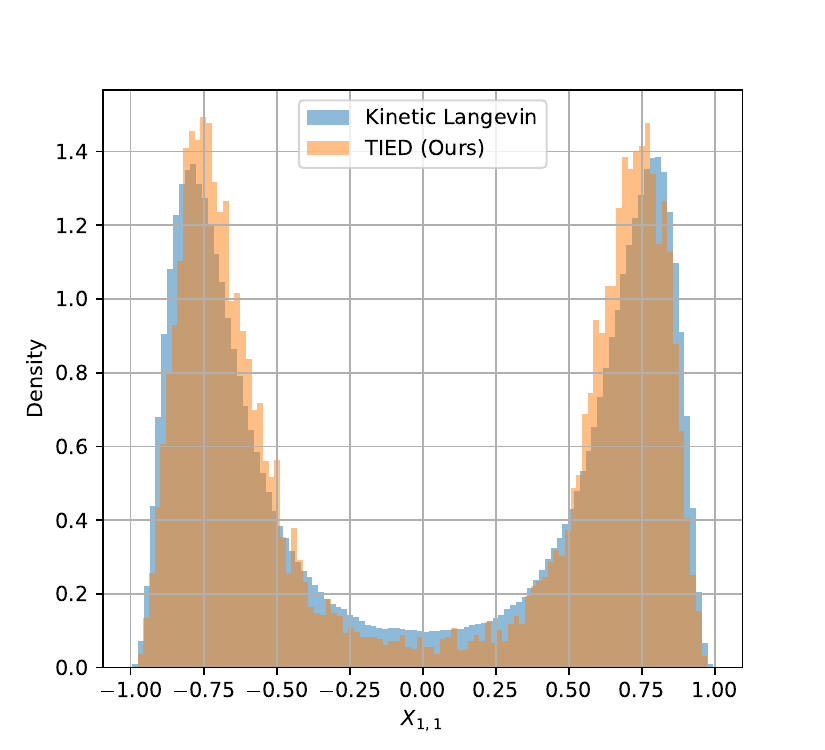}
\caption{Distributions of sampled ${\bf X}_{1,1}$.}
\end{subfigure}
\begin{subfigure}[t]{0.45\textwidth}
\centering
\includegraphics[width=\textwidth]{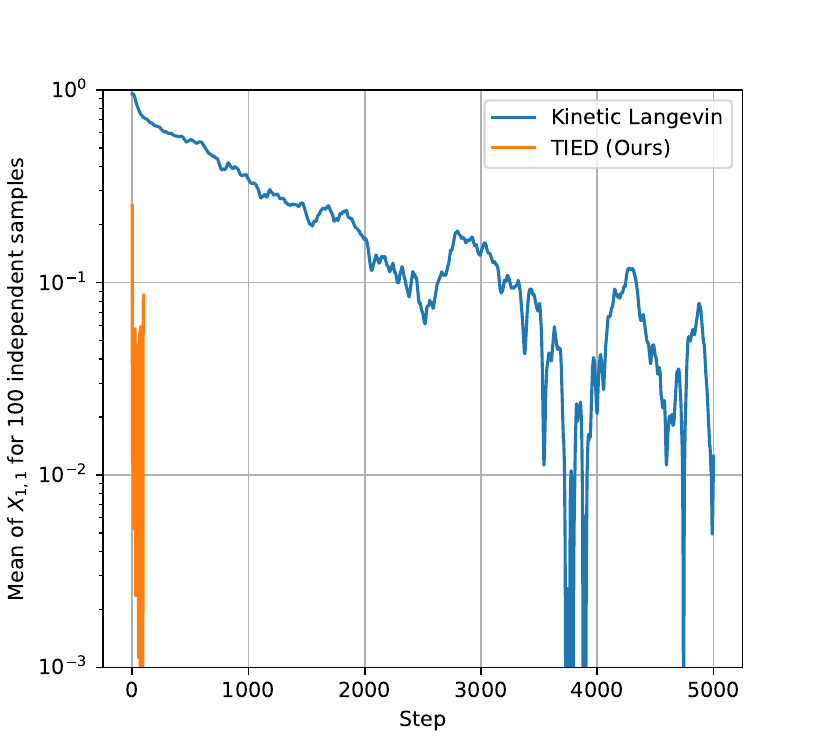}
\caption{Mean of ${\bf X}_{1,1}$ over timesteps (lower the better).}
\end{subfigure}
\caption{Sampling on ${\rm SO}(10)$ under energy $E:\v{X}\mapsto -10 \v{X}_{1,1}^2$ using a kinetic Langevin sampler \citep{kong2024convergence} and TIED (Ours).
The distribution of $\v{X}_{1,1}$ induced by the energy has two symmetric modes around zero, and thus the mean of $\v{X}_{1,1}$ approaches zero as the sampling converges.}
\label{fig:so10}
\vspace{-1ex}
\end{figure}

The first setting we consider is a synthetic energy-based sampling problem on a high-dimensional Lie group.
We adopt the setup of \citet{kong2024convergence} that considers the special orthogonal group ${\rm SO}(10)$ represented as a set of $10\times 10$ matrices $\v{X}$ with determinant $1$.
This group is chosen since its large intrinsic dimension $\dim {\rm SO}(10)=45$ offers a challenge for sampling methods (in contrast to e.g. $\dim {\rm SO}(3) = 3$).
The energy we consider is $E:\v{X}\mapsto -10\v{X}_{1,1}^2$ where $\v{X}_{1,1}$ is the value of the top-left matrix element.
This energy induces a multimodal density $p\propto e^{-E(\cdot)}$ on the group, thereby offering a testbed {aligned with our motivations in \cref{sec:challenges}.}

We compare TIED against the trivialized kinetic Langevin sampler proposed in \citet{kong2024convergence}, which only uses the energy gradient $\nabla_\mathfrak{g}E$.
We run the Langevin sampler for 100,000 steps to ensure convergence.
For TIED, we run 100 steps of diffusion, with sample size 100 for MC estimation of the score $\nabla_\mathfrak{g}\log p_t$.
We provide the results in \cref{fig:so10}, focusing on the top-left matrix element $\v{X}_{1,1}$ for visualization.
The results show that TIED samples from the correct multimodal distribution, and does so in far fewer sampling steps, thanks to the diffusion formulation.

\begin{table}
\caption{MNIST classification test accuracy and FID over five seeds.}
\vspace{-2.5ex}
\centering
\footnotesize
\begin{adjustbox}{max width=0.49\textwidth}
\begin{tabular}{lcccc}\label{table:image_classification}
\\\Xhline{2\arrayrulewidth}\\[-1em]
dataset & \multicolumn{4}{c}{MNIST}
\\\Xhline{2\arrayrulewidth}\\[-1em]
test transformations & \multicolumn{4}{c}{none} \\
 & \multicolumn{2}{c}{Acc. (\%)} & \multicolumn{2}{c}{FID}
\\\Xhline{2\arrayrulewidth}\\[-1em]
\centering ResNet18
& \multicolumn{2}{c}{99.35} & \multicolumn{2}{c}{-}
\\\Xhline{2\arrayrulewidth}\\[-1em]
test transformations
& \multicolumn{2}{c}{${\rm Aff}(2,\mathbb{R})$} & \multicolumn{2}{c}{${\rm PGL}(3,\mathbb{R})$} \\
& Acc. (\%) & FID & Acc. (\%) & FID
\\\Xhline{2\arrayrulewidth}\\[-1em]
affConv / homConv * & 95.08 & - & 95.71 & - \\
ResNet18 & 55.48 & - & 87.95 & - \\
\Xhline{2\arrayrulewidth}\\[-1em]
\multicolumn{5}{c}{Energy: VAE evidence lower bound (+ adv. reg.)} \\
\Xhline{2\arrayrulewidth}\\[-1em]
ResNet18 + ITS & 45.97$\pm$0.39 & 10.89 & n/a & n/a \\
ResNet18 + FoCal & 86.38$\pm$0.00 & 3.39 & 89.69$\pm$0.00 & 2.20 \\
ResNet18 + Langevin & 74.72$\pm$0.29 & 7.15 & 93.75$\pm$0.10 & 0.77 \\
ResNet18 + LieLAC & 94.38$\pm$0.17 & 0.91 & 97.39$\pm$0.08 & 0.58 \\
\Xhline{2\arrayrulewidth}\\[-1em]
ResNet18 + TIED (Ours) & \textbf{96.95$\pm$0.12} & \textbf{0.71} & \textbf{97.43$\pm$0.06} & \textbf{0.57} \\
\Xhline{2\arrayrulewidth}\\[-1em]
\multicolumn{5}{c}{Energy: Classifier logit confidence} \\
\Xhline{2\arrayrulewidth}\\[-1em]
ResNet18 + ITS & 67.18$\pm$0.14 & 9.90 & n/a & n/a \\
ResNet18 + FoCal & 66.97$\pm$0.00 & 10.92 & 86.45$\pm$0.00 & 3.89 \\
ResNet18 + Langevin & 53.98$\pm$0.11 & 15.29 & 88.25$\pm$2.48 & 2.48 \\
ResNet18 + LieLAC & 73.79$\pm$0.08 & 7.03 & 85.93$\pm$0.17 & 2.66 \\
\Xhline{2\arrayrulewidth}\\[-1em]
ResNet18 + TIED (Ours) & \textbf{82.64$\pm$0.11} & \textbf{5.05} & \textbf{89.84$\pm$0.17} & \textbf{1.83} \\
\Xhline{2\arrayrulewidth}
\end{tabular}
\end{adjustbox}
\par\noindent\tiny\raggedright
{\textbf{*} from the original paper's table}
\vspace{-2ex}
\end{table}

\begin{figure}[t!]
\centering
\begin{subfigure}[t]{0.2\textwidth}
\centering
\includegraphics[width=\linewidth]{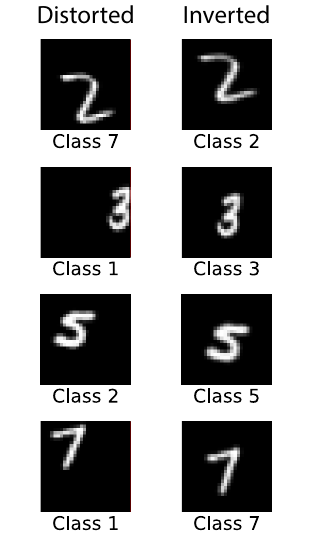}
\caption{${\rm Aff}(2,\mathbb{R})$}
\end{subfigure}%
\hspace{0.05in}
\begin{subfigure}[t]{0.2\textwidth}
\centering
\includegraphics[width=\linewidth]{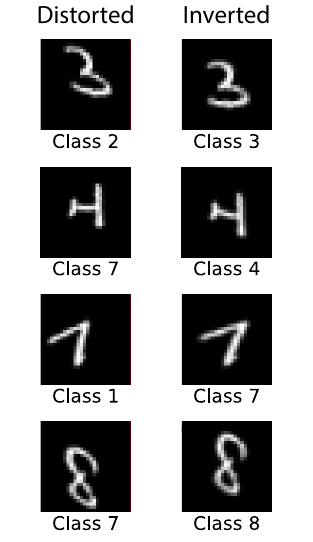}
\caption{${\rm PGL}(3,\mathbb{R})$}
\end{subfigure}
\vspace{-0.5ex}
\caption{For each transformation group, we show cases of misclassification by ResNet18 $f_\theta$. From left: $f_\theta$ prediction, $f_\theta$ prediction under test-time invariance via TIED (Ours) using classifier energy.}
\label{fig:mnist}
\vspace{-2ex}
\end{figure}

\subsection{Affine/homography invariant image classification}
\label{sec:mnist_experiments}
Next, we demonstrate TIED on image classification problems using a trained convolutional neural network under unknown affine and homography (perspective) transformations, closely following the experimental setup of \citet{shumaylov2024lie}.
As our pretrained network $f_\theta$, we use a ResNet18~\citep{he2016deep} trained to classify $40\times40$ padded MNIST images.
To test its robustness and generalization, we consider two challenging transformation groups.
The first is the group of affine transformations ${\rm Aff}(2,\mathbb{R})$, and the second is the group of homography (perspective) transformations, isomorphic to the projective general linear group ${\rm PGL}(3,\mathbb{R})$.
Despite their widespread use in computer vision, these groups have high mathematical complexities: both are noncompact, non-Abelian, and lack a bi-invariant metric.
We construct the respective test sets as 10,000-sized random subsets of affNIST and homNIST test images from \citet{macdonald2022enabling}.
In addition to classification accuracy of $f_\theta$, we measure the Fréchet inception distance (FID)~\citep{heusel2017gans, fatir2018metrics} between the inverse-transformed test images and the training images of $f_\theta$ as a supplementary metric.

The baselines include general optimization and sampling methods on Lie groups, as well as specialized methods for inverting transformations for robust neural perception.
For the former, we test trivialized kinetic Langevin \citep{kong2024convergence} and LieLAC \citep{shumaylov2024lie}, which respectively perform trivialized gradient-based sampling and optimization \citep{lezcano2019trivializations}.
For the latter, we test ITS \citep{schmidt2024tilt}, which performs an iterative search specifically designed for affine transforms, and FoCal~\citep{singhal2025test}, which performs Bayesian optimization on transformation spaces (we use the Lie algebra).

Notably, all the baselines use an energy function (sometimes referred to as a data prior) to identify an inverse transformation, making it natural to compare them under the same choices of energy.
For image classification, we experiment with two choices of energy, broadly representative of probabilistic and predictive ones.
For the probabilistic energy, we use the ELBO of a VAE trained on clean MNIST images augmented with adversarial regularization, adopted from~\citet{shumaylov2024lie}.
For the predictive energy, we use a confidence-based energy measured by \texttt{-logsumexp} of output logits~\citep{grathwohl2019your}, similarly to \citet{schmidt2024tilt,singhal2025test}.
For this, we use the same ResNet18 $f_\theta$ for measuring the classification performance.

The results are in \cref{table:image_classification}.
While trained ResNet18 achieves high accuracy on clean images, affine or homography transformations substantially degrade it.
Kinetic Langevin and LieLAC are based on energy gradients.
Both restore the accuracy reasonably well, but relatively underperform on affine using classifier energy.
This is possibly due to the large magnitudes of transformations (as evidenced by the ResNet18 accuracies) and anomalies in the classifier's energy landscape affecting the gradients.
ITS and FoCal, which are iterative search methods, attain decent results on affine with classifier energy.
We conjecture that their nonlocal iterative exploration helps handle the anomalies in the energy landscape.
Finally, our method, TIED, combines precise local optimization via gradients with exploration by diffusion at high noise levels.
As a result, it outperforms all baselines in all settings, significantly improving the accuracy under affine transforms and classifier energy (55.48\% $\to$ 82.64\%).
We emphasize that, with confidence energy, the classifier is essentially self-correcting its predictions (\cref{fig:mnist}).
With a proper choice of the energy, TIED also outperforms specialized equivariant networks affConv and homConv \citep{macdonald2022enabling}.

\subsection{Point symmetry equivariant PDE solving}

\begin{table*}[!t]
\caption{PDE solving relative test L2 error.}
\vspace{-2.5ex}
\centering
\footnotesize
\begin{adjustbox}{max width=\textwidth}
\begin{tabular}{lcccc}\label{table:pde_evolution}
\\\Xhline{2\arrayrulewidth}\\[-1em]
PDE & 1D Heat Eq. & 1D Heat Eq. + Data Aug. & 1D Burgers Eq.
\\\Xhline{2\arrayrulewidth}\\[-1em]
Test transformations & none & none & none
\\\Xhline{2\arrayrulewidth}\\[-1em]
DeepONet & 0.011 & 0.031 & 0.017
\\\Xhline{2\arrayrulewidth}\\[-1em]
Test transformations & ${\rm SL}(2,\mathbb{R}) \ltimes {\rm H}(1,\mathbb{R})$ & ${\rm SL}(2,\mathbb{R}) \ltimes {\rm H}(1,\mathbb{R})$ & ${\rm SL}(2,\mathbb{R}) \ltimes (\mathbb{R}^2,+)$ \\
\Xhline{2\arrayrulewidth}\\[-1em]
DeepONet & 0.690 & 0.081 & 0.867 \\
\Xhline{2\arrayrulewidth}\\[-1em]
\multicolumn{4}{c}{{Energy: Distance to training domain}} \\
\Xhline{2\arrayrulewidth}\\[-1em]
DeepONet + FoCal & {1.948 $\pm$ 0.608} & {1.000 $\pm$ 0.355} & {0.202 $\pm$ 0.017} \\
DeepONet + Kinetic Langevin & {0.565 $\pm$ 0.046} & {0.247 $\pm$ 0.059} & {0.739 $\pm$ 0.019} \\
DeepONet + LieLAC & {0.066 $\pm$ 0.019} & {0.078 $\pm$ 0.017} & {0.187 $\pm$ 0.004} \\
\Xhline{2\arrayrulewidth}\\[-1em]
DeepONet + TIED (Ours) & {\textbf{0.043 $\pm$ 0.002}} & {\textbf{0.052 $\pm$ 0.001}} & {\textbf{0.178 $\pm$ 0.013}} \\
\Xhline{2\arrayrulewidth}
\end{tabular}
\end{adjustbox}
\end{table*}

\begin{figure*}[t!]
\centering
\begin{subfigure}[t]{0.49\textwidth}
\centering
\includegraphics[width=\textwidth]{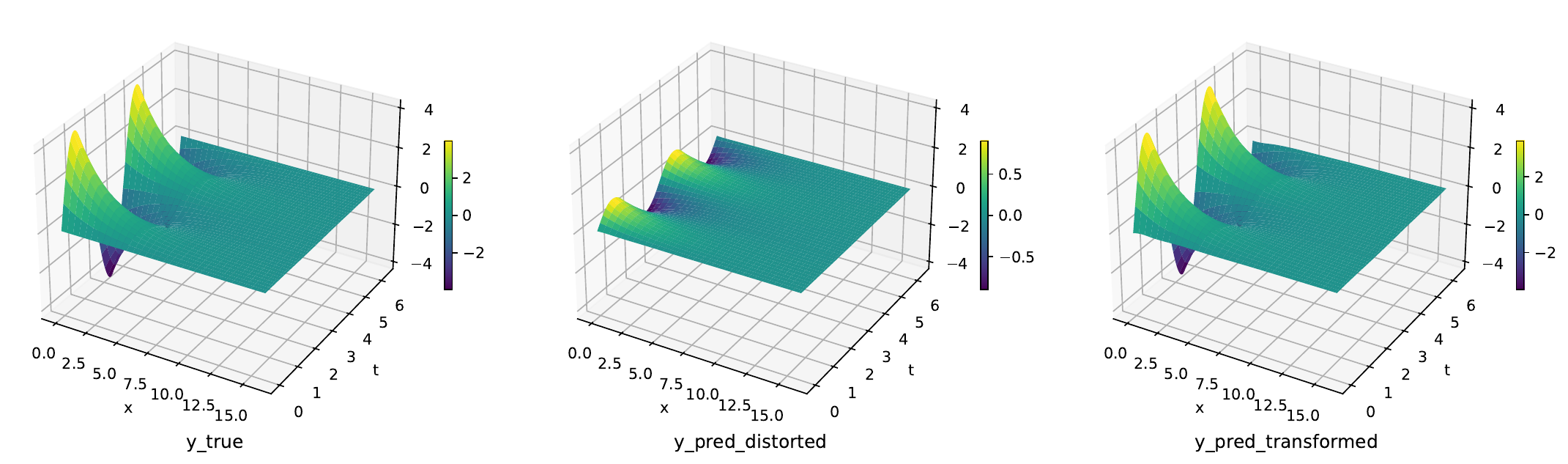}
\caption{1D heat equation.}
\end{subfigure}%
~ 
\begin{subfigure}[t]{0.49\textwidth}
\centering
\includegraphics[width=\textwidth]{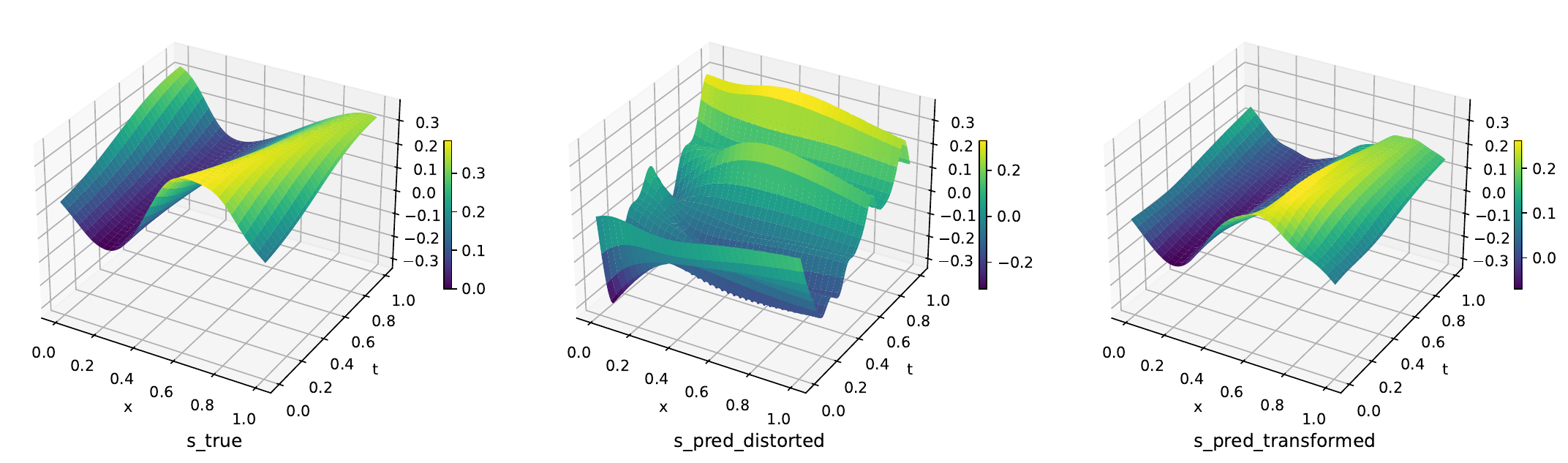}
\caption{1D Burgers equation.}
\end{subfigure}
\caption{For each PDE, we show an out-of-domain case for DeepONet $f_\theta$. From left: true solution, $f_\theta$ prediction, $f_\theta$ prediction under test-time equivariance via TIED (Ours).
\textbf{Zoom in} for a better view.}
\label{fig:pde}
\vspace{-2ex}
\end{figure*}

We test TIED on a challenging problem of solving partial differential equations (PDEs) with a trained neural operator under unknown Lie point symmetry transformations.
Here, the model $f_\theta$ performs function-valued regression.
The input is a function evaluation ${\bf u}_0$ at a set of space and time points $({\bf x}_0, t_0)$, which specifies an initial condition.
The task is to evaluate the function at another set of points $({\bf x}_f, t_f)$, after ${\bf u}_0({\bf x}_0,t_0)$ has evolved under the PDE of interest.
More details can be found in \citet{li2020fourier, lu2021learning}.

Many PDEs possess a symmetry $G$ that transforms a solution ${\bf u}({\bf x}, t)$, to create other valid solutions~\citep{olver1993applications,brandstetter2022lie,akhound2023lie}.
Since a neural operator $f_\theta$ is trained for a specific range of initial conditions ${\bf u}_0({\bf x}_0,t_0)$ and prediction points $({\bf x}_f, t_f)$, one expects it to fail outside the training domain.
Akin to our problem setup, this can be overcome by finding a symmetry transformation $g$ which acts on the input as $g^{-1}\cdot ({\bf u}_0({\bf x}_0,t_0), {\bf x}_f, t_f)$ to push it into the training domain of $f_\theta$.
Then, $f_\theta$ can make out-of-domain predictions as $g\cdot f_\theta(g^{-1}\cdot ({\bf u}_0({\bf x}_0,t_0), {\bf x}_f, t_f))$.
A challenge is the complexity of Lie point symmetries: in many cases, they are noncompact and non-Abelian.
This makes TIED an appealing candidate as it handles these cases.

We closely follow the setup of \citet{shumaylov2024lie}, using DeepONet neural operators \citep{lu2021learning} as the pretrained $f_\theta$.
To test the robustness and generalization, we use two PDEs with challenging symmetry groups.
The first is the 1D heat equation $u_t-\nu u_{xx}=0$ with the symmetry group ${\rm SL}(2,\mathbb{R})\ltimes {\rm H}(1,\mathbb{R})$, the semidirect product of special linear group and rank-one polarized Heisenberg group.
The second is the 1D Burgers' equation $u_t+uu_x-\nu u_xx=0$ with the symmetry group ${\rm SL}(2,\mathbb{R})\ltimes (\mathbb{R}^2,+)$.
The test sets are constructed by applying these transformations to send ${\bf u}_0$ out of the training domain of $f_\theta$.
For the choice of energy, we follow \citet{shumaylov2024lie} and use a distance measure between the input $({\bf u}_0({\bf x}_0,t_0), {\bf x}_f,t_f)$ and the training domain of $f_\theta$.

The results are in \cref{table:pde_evolution} and \cref{fig:pde}.
While all DeepONets excel in their training domain, their performance degrades when presented with out-of-domain transformations, with data augmentation reducing the degradation but not perfectly.
While FoCal performs poorly on heat equations, it has a strong performance on Burgers equation.
We conjecture that this is partially due to the Euclidean component $(\mathbb{R}^2,+)$ in the semidirect product, which may have created a more amenable environment for Bayesian optimization.
Although both are based on gradients, LieLAC outperforms kinetic Langevin.
This could be due to the Brownian motion term slowing down convergence.
Our method TIED can leverage smoothed energy landscape at high noise levels effectively to speed up convergence, and achieves the best performance in all cases.

%% file: conclusion.tex
\section{Conclusion}\label{sec:conclusion}

In this work, we studied the problem of inverting unknown data transformations from general Lie groups in a setting where we have access to a \emph{data-space prior} specified by an energy function. 
Our method TIED transforms a test-time input so that it becomes well aligned with the model’s training distribution and thereby achieves strong performance. This setting is increasingly relevant in the era of large foundation models, where robustness to out-of-distribution transformations is a practical concern. TIED operates by reversing a diffusion process over transformations in the Lie algebra. 
It applies to a large class of groups and does not require any training or finetuning.
Avenues for future work include improving the efficiency of the sampler. We could consider, e.g., consistency distillation methods \citep{song2023consistency}. Applications to other inverse problems, such as image registration, would also prove interesting.

%% file: appendix.tex
\appendix

\section{{Background}}
\label{apd:background}

{We provide an overview of the mathematical background, and refer the readers to \citet{lee2012introduction, tu2010introduction} for more details.}

\paragraph{{Lie group}}
{A Lie group $G$ is a group that is also a smooth manifold, such that multiplications of elements and taking inverses are smooth.
In deep learning, it is useful as an abstraction of a class of continuous data transformations such as perspective transformations of an image.
For any $g\in G$, we denote by $L_g:G\to G$ the left multiplication map $h\mapsto gh$ and by $R_g:G\to G$ the right multiplication map $h\mapsto hg$.
We denote by $\mu$ the left-invariant (unnormalized) Haar measure on $G$.}

\paragraph{{Tangent spaces}}
{As a manifold, a Lie group $G$ has a tangent space $T_gG$ at each point $g$, intuitively containing all possible directions of infinitesimal updates at $g$.
We can map between tangent spaces using the notion of differentials.
For any smooth function $f:G\to G$ and any $g\in G$, we denote by $\diff f_g:T_gG\to T_{f(g)}G$ the differential of $f$ evaluated at $g$.
For left multiplications $L_g$, the differentials $\diff (L_g)_h:T_hG\to T_{gh}G$ are bijective linear maps between tangent spaces called the tangent maps.
The tangent maps satisfy $\diff (L_g)_{hk}\circ\diff (L_h)_k=\diff (L_{gh})_k$ and $\diff (L_e)_g={\rm id}_{T_gG}$ for all $g,h,k\in G$.}

\paragraph{{Lie algebra}}
{While a Lie group is a curved manifold, a lot of its properties derive from its tangent space at the identity, the Lie algebra, defined by $\mathfrak{g}\equiv T_eG$.
For an $n$-dimensional Lie group, the Lie algebra $\mathfrak{g}$ is an $n$-dimensional vector space equipped with a binary operation $[\cdot,\cdot]: \mathfrak{g} \times \mathfrak{g} \to \mathfrak{g}$ called the Lie bracket.
We can always construct an inner product $\langle\cdot,\cdot\rangle_\mathfrak{g}$ on $\mathfrak{g}$ by choosing any basis $\{{\bf e}_1, ..., {\bf e}_n\}$ and defining $\langle\sum_iu_i{\bf e}_i,\sum_iv_i{\bf e}_i\rangle_\mathfrak{g}\equiv \sum_iu_iv_i$.
This is the unique inner product on $\mathfrak{g}$ for which the chosen basis is orthonormal.
The exponential map $\exp:\mathfrak{g}\to G$ is a smooth map that specifies group structure from the Lie algebra.
In general, this specification is local around the identity $e=\exp({\bf 0})$ as $\exp$ can be non-surjective for non-compact $G$.
Yet, for connected Lie groups, any element can be reached as a finite product of these local elements \citep[Prop. 1.24]{olver1993applications}.}

\paragraph{{Gradients}}
{We can define the notion of gradients on a Lie group by equipping it with a choice of metric.
For a finite-dimensional Lie group, one possible choice is a metric $\langle\cdot, \cdot\rangle$ that is left-invariant:
\begin{align*}
\langle \diff (L_h)_g{\bf u}, \diff (L_h)_g{\bf v} \rangle_{hg} = \langle {\bf u},{\bf v} \rangle_g.
\end{align*}
Such a metric can always be constructed by extending any inner product $\langle\cdot,\cdot\rangle_\mathfrak{g}$ on the Lie algebra:
\begin{align*}
\langle {\bf u},{\bf v}\rangle_g\equiv \langle \diff (L_{g^{-1}})_g{\bf u},\diff (L_{g^{-1}})_g{\bf v}\rangle_\mathfrak{g}.
\end{align*}
Then, for any smooth function $f:G\to\mathbb{R}$, the gradient $\nabla f(g)$ at each point $g\in G$ is given as the unique element of the tangent space $T_gG$ that satisfies the following for all ${\bf v}\in T_gG$:
\begin{align}\label{eq:riemannian_gradient}
\langle \nabla f(g), {\bf v} \rangle_g=\diff f_g({\bf v}),
\end{align}
where $\diff f_g:T_gG\to\mathbb{R}$ is the differential of $f$ evaluated at $g$ \citep[$\S$13]{lee2012introduction}.}

\paragraph{{Trivialization}}
\begin{figure}[t!]
\centering
\includegraphics[width=0.8\linewidth]{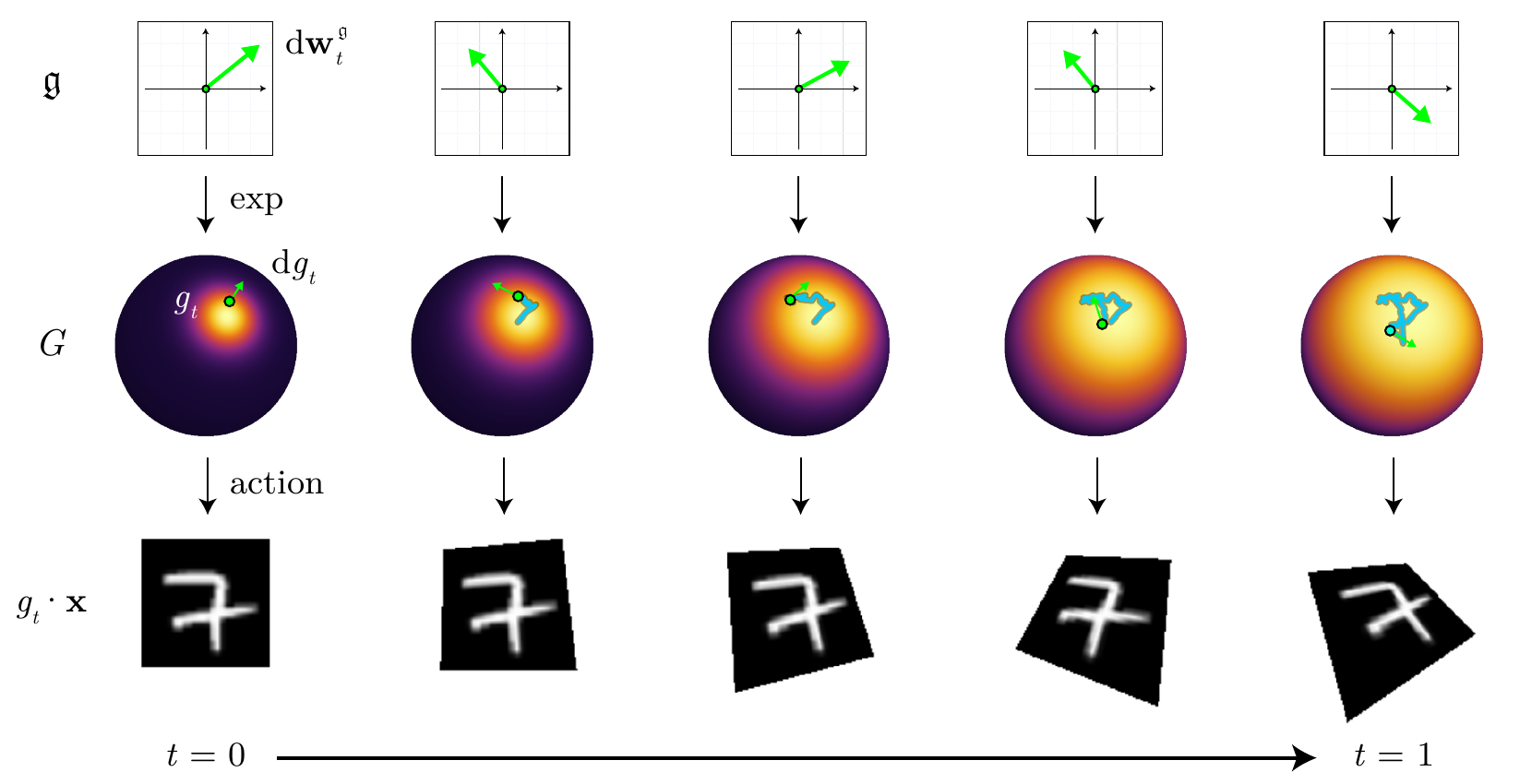}
\caption{An illustration of trivialized forward diffusion on the projective linear group acting on an image.}
\label{fig:trivialized_sde}
\end{figure}

{In a deep learning context, the curved geometry of Lie groups can be challenging to deal with.
We can narrow this down to two specific difficulties in our problem setup.
\begin{enumerate}[leftmargin=*]
\item In energy-based diffusion, one usually employs an expectation form of the score function that averages energy gradients evaluated across the state space \citep{de2024target, akhound2024iterated}.
However, on a Lie group, gradients of a function at different points live in different tangent spaces and hence are not directly compatible, e.g., cannot be added or averaged.
\item In general, during sampling on a Lie group, the update directions live in tangent spaces that change over time simultaneously as a sample is updated, requiring a careful handling.
\end{enumerate}
We can sidestep both difficulties by always working in the Lie algebra instead of arbitrary tangent spaces.
This technique is called (left-)trivialization \citep{lezcano2019trivializations, tao2020variational, kong2024convergence, zhu2025trivialized}.
We use two related tools that together address the difficulties.}

{Our first tool is the \textbf{trivialized gradient} that can be understood as gradient evaluated in the Lie algebra.
For any smooth function $f:G\to\mathbb{R}$, we define the trivialized gradient $\nabla_\mathfrak{g}f(g)\in \mathfrak{g}$ at each point $g\in G$ by mapping the standard gradient $\nabla f(g)\in T_gG$ through the tangent map $\diff(L_{g^{-1}})_g:T_gG\to\mathfrak{g}$:
\begin{align*}
\nabla_\mathfrak{g}f(g) \equiv \diff(L_{g^{-1}})_g\nabla f(g).
\end{align*}
Since trivialized gradients always live in the same vector space $\mathfrak{g}$, we can add or average them without restrictions.
This underlies our novel score identity on general Lie groups.}

Another nice property of trivialized gradients is that, despite their definition involving a tangent map, they can be computed directly using automatic differentiation without any explicit handling of tangent maps.
This knowledge is not new \citep{kobilarov2014discrete, kobilarov2014solvability}, but rarely used in deep learning contexts.
\begin{proposition}[{Trivialized gradient}]\label{prop:trivialized_grad}
{Let $G$ be a finite-dimensional Lie group.
Under any choice of a left-invariant metric, for any smooth function $f:G\to\mathbb{R}$, the following holds for all $g\in G$:}
\begin{align*}
\nabla_\mathfrak{g} f(g) = \left.\nabla_{\bf v}f(g\exp({\bf v}))\right|_{{\bf v}={\bf 0}}.
\end{align*}
\end{proposition}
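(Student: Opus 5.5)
The plan is to show that both sides are elements of $\mathfrak{g}$ that pair identically with every vector, once $\mathfrak{g}$ is equipped with the inner product $\langle\cdot,\cdot\rangle_\mathfrak{g}$ from which the left-invariant metric is built. Here $\nabla_{\bf v}$ is the Euclidean gradient in coordinates of the chosen orthonormal basis $\{{\bf e}_1,\dots,{\bf e}_n\}$. So the claim is equivalent to
\begin{align*}
\langle \nabla_\mathfrak{g} f(g), {\bf u}\rangle_\mathfrak{g} = \left.\frac{\diff}{\diff s} f(g\exp(s{\bf u}))\right|_{s=0}\quad\text{for all } {\bf u}\in\mathfrak{g}.
\end{align*}
The right-hand side is the directional derivative of ${\bf v}\mapsto f(g\exp({\bf v}))$ at ${\bf 0}$ in direction ${\bf u}$, which equals $\langle \nabla_{\bf v} f(g\exp({\bf v}))|_{{\bf v}={\bf 0}}, {\bf u}\rangle_\mathfrak{g}$.

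For the left-hand side, first unfold the definitions. We have $\nabla_\mathfrak{g} f(g) = \diff(L_{g^{-1}})_g\nabla f(g)$, and the left-invariant metric is defined by $\langle {\bf a},{\bf b}\rangle_g = \langle \diff(L_{g^{-1}})_g{\bf a}, \diff(L_{g^{-1}})_g{\bf b}\rangle_\mathfrak{g}$. Set ${\bf w} = \diff(L_g)_e{\bf u}\in T_gG$. Using $\diff(L_{g^{-1}})_g\circ\diff(L_g)_e = \diff(L_e)_e = {\rm id}$ from the chain rule stated in the background, we get
\begin{align*}
\langle \nabla_\mathfrak{g} f(g), {\bf u}\rangle_\mathfrak{g} = \langle \nabla f(g), {\bf w}\rangle_g = \diff f_g({\bf w}),
\end{align*}
where the last equality is the gradient definition \cref{eq:riemannian_gradient}.

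It remains to identify $\diff f_g(\diff(L_g)_e{\bf u})$ with $\frac{\diff}{\diff s} f(g\exp(s{\bf u}))|_{s=0}$. The curve $s\mapsto \exp(s{\bf u})$ passes through $e$ with velocity ${\bf u}$, since $\diff\exp_{\bf 0} = {\rm id}$. Hence $s\mapsto L_g(\exp(s{\bf u}))$ passes through $g$ with velocity $\diff(L_g)_e{\bf u}$. The chain rule then gives the identity. Combining the two computations, the pairings agree for every ${\bf u}$, and nondegeneracy of $\langle\cdot,\cdot\rangle_\mathfrak{g}$ yields the proposition.

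The only delicate point is to be explicit that $\nabla_{\bf v}$ is the gradient with respect to the same inner product on $\mathfrak{g}$ that defines the left-invariant metric. This means Euclidean coordinates in the chosen basis, and it is what makes the statement hold under any left-invariant metric. The other facts used, namely that $\diff\exp_{\bf 0}$ is the identity and the chain rule for tangent maps, are standard.
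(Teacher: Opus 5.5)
Your proposal is correct and is essentially the paper's argument. Both proofs pair each side with an arbitrary ${\bf u}\in\mathfrak{g}$, use the left-invariant metric to rewrite $\langle\nabla_\mathfrak{g}f(g),{\bf u}\rangle$ as $\diff f_g(\diff(L_g)_e{\bf u})$, identify this with the derivative of $f$ along the curve $s\mapsto g\exp(s{\bf u})$, and conclude by nondegeneracy. Your explicit remark that $\nabla_{\bf v}$ must be taken with respect to the same inner product on $\mathfrak{g}$ that generates the metric is a point the paper leaves implicit.
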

\begin{proof}
{The proof is given in \Cref{apd:proof_trivialized_grad}.}
\end{proof}

{Also, as a useful property, we can use the standard log-derivative trick on trivialized gradients:}
\begin{proposition}[{Trivialized score}]\label{prop:trivialized_score}
{Let $G$ be a finite-dimensional Lie group.
Under any choice of a left-invariant metric, for any positive and smooth function $p:G\to\mathbb{R}$, it holds that $\nabla_\mathfrak{g} p = p\,\nabla_\mathfrak{g} \log p$.}
\end{proposition}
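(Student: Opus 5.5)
The plan is to reduce the statement to the ordinary chain rule by way of the characterization of the trivialized gradient in \cref{prop:trivialized_grad}. Fix $g\in G$ and a left-invariant metric. Define $\tilde p(\mathbf{v}) \equiv p(g\exp(\mathbf{v}))$ for $\mathbf{v}$ in a neighbourhood of $\mathbf{0}$ in $\mathfrak{g}$, with $\mathfrak{g}$ identified with Euclidean space through the orthonormal basis that defines $\langle\cdot,\cdot\rangle_\mathfrak{g}$. Then $\tilde p$ is a smooth, positive function on an open subset of a Euclidean space, since $p$, $\exp$ and group multiplication are smooth. By \cref{prop:trivialized_grad}, applied to $p$ and to $\log p$ (the latter is smooth because $p>0$),
\begin{align*}
\nabla_\mathfrak{g} p(g) = \nabla_{\mathbf{v}}\tilde p(\mathbf{v})\big|_{\mathbf{v}=\mathbf{0}}, \qquad \nabla_\mathfrak{g}\log p(g) = \nabla_{\mathbf{v}}\log\tilde p(\mathbf{v})\big|_{\mathbf{v}=\mathbf{0}}.
\end{align*}

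The Euclidean log-derivative identity $\nabla_{\mathbf{v}}\tilde p = \tilde p\,\nabla_{\mathbf{v}}\log\tilde p$ holds by the chain rule, because $\tilde p>0$. Evaluating it at $\mathbf{v}=\mathbf{0}$ and using $\tilde p(\mathbf{0}) = p(g\exp(\mathbf{0})) = p(g)$ gives $\nabla_\mathfrak{g} p(g) = p(g)\,\nabla_\mathfrak{g}\log p(g)$. Since $g$ was arbitrary, the identity holds everywhere.

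The argument can also be done intrinsically. By \cref{eq:riemannian_gradient}, for every $\mathbf{u}\in T_gG$ we have $\langle\nabla\log p(g),\mathbf{u}\rangle_g = \diff(\log p)_g(\mathbf{u}) = p(g)^{-1}\diff p_g(\mathbf{u})$, so $\nabla p(g) = p(g)\nabla\log p(g)$ in $T_gG$. Applying the linear map $\diff(L_{g^{-1}})_g$ to both sides then gives the claim.

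There is no real obstacle. The only points to check are that $\log p$ is smooth, so that \cref{prop:trivialized_grad} applies, and that the scalar $p(g)$ commutes with the linear tangent map.
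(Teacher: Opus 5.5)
Your proposal is correct and takes essentially the paper's route. The paper also reduces the claim to the scalar chain rule along the curves $t\mapsto g\exp(t\mathbf{u})$, pairing both trivialized gradients with an arbitrary $\mathbf{u}\in\mathfrak{g}$, which is just the directional form of your gradient of $\tilde p(\mathbf{v})=p(g\exp(\mathbf{v}))$ at $\mathbf{v}=\mathbf{0}$. Your intrinsic variant is also valid and slightly more general: it shows $\nabla p = p\,\nabla\log p$ in $T_gG$ and then applies the linear map $\diff(L_{g^{-1}})_g$, so it never uses left-invariance of the metric.
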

\begin{proof}
{The proof is given in \Cref{apd:proof_trivialized_score}.}
\end{proof}

{Our second tool is a \textbf{trivialized SDE} that specifies diffusion on Lie groups using components in the Lie algebra.
Specifically, for an $n$-dimensional Lie group $G$ and any choice of an orthonormal basis of the Lie algebra $\mathfrak{g}$, we consider It\^o SDEs on the group having the following form:}
\begin{align}\label{eq:trivialized_sde}
{\diff g_t = \diff(L_{g_t})_e\left[\phi(g_t,t)\,\diff t+\gamma(t)\,\diff{\bf w}_t^\mathfrak{g}\right],}
\end{align}
{with $\diff{\bf w}_t^\mathfrak{g}$ the Brownian motion on Lie algebra given as $\diff{\bf w}_t^\mathfrak{g}=\sum_i {\bf e}_i \diff w_t^i$ where $\diff w_t^1,..., \diff w_t^n$ are independent standard Brownian motions on $\mathbb{R}$ and $\{{\bf e}_1, ..., {\bf e}_n\}$ is the chosen orthonormal basis of $\mathfrak{g}$.
In addition, $\phi(\cdot, t):G\to\mathfrak{g}$ is the drift coefficient, and $\gamma(t)\in\mathbb{R}$ is the diffusion coefficient.}

{The trivialized nature of the SDE motivates a natural Euler time-discretization scheme for sampling that eliminates the need to handle moving tangent spaces.
This is based on the fact that, for any fixed ${\bf v}\in\mathfrak{g}$, the trivialized dynamics $\diff g = \diff (L_g)_e{\bf v}\,\diff t$ has a closed-form solution $g(t) = g(0)\exp(t{\bf v})$ that can be computed using the exponential map \citep[(8.13) and Prop. 20.8]{lee2012introduction}.
Specifically, for step size $\Delta t>0$, an Euler time-discretization can be written as follows, where $m =0, ..., 1/\Delta t$:}
\begin{align}\label{eq:time_discretization}
{\hat{g}_{m+1}} & {= \hat{g}_m \, \exp\left( \phi(\hat{g}_m, m\Delta t)\,\Delta t + \gamma(m\Delta t)\,{\bf z}_m \right),\quad {\bf z}_m\sim\mathcal{N}({\bf 0}, \Delta t{\bf I}),\quad\hat{g}_0 \overset{d}{=} g_0.}
\end{align}
In \cref{fig:trivialized_sde}, we show an illustration of trivialized diffusion on the projective linear group acting on an image, with zero drift and diffusion coefficient one.

\section{Proofs}\label{apd:proofs}

\subsection{Proof of Proposition~\ref{prop:transformation_posterior}}
\label{app:proof_transformation_posterior}

\transformationposterior*

\begin{proof}
For the first statement, using independence of $\v{x}$ and $g$ and the relation $\tilde{\v{x}} = g\cdot \v{x}$, we have
\begin{align*}
& p\pr{\v{x}, \tilde{\v{x}}\mid g} = p_{\v{x}} \pr{\v{x}} p\pr{\tilde{\v{x}}\mid \v{x}, g} \\
& p\pr{\v{x}, \tilde{\v{x}}\mid g} = p_{\v{x}} \pr{\v{x}} \delta\pr{\tilde{\v{x}} - g\cdot \v{x}}
\end{align*}
Marginalizing with respect to $\v{x}$ yields
\begin{align*}
& p\pr{\tilde{\v{x}}\mid g} = \int p_{\v{x}} \pr{\v{x}} \delta\pr{\tilde{\v{x}} - g\cdot \v{x}} \diff \v{x}
\end{align*}
Since the group action is diffeomorphic, we have by the change-of-variables formula
\begin{align*}
& p\pr{\tilde{\v{x}}\mid g} = \int p_{\v{x}} \pr{\v{x}} \delta\pr{g^{-1}\cdot \tilde{\v{x}} - \v{x}} \abs{\det J_{g^{-1}}\pr{\tilde{\v{x}}}} \diff \v{x} \\
& p\pr{\tilde{\v{x}}\mid g} = p_{\v{x}} \pr{g^{-1}\cdot \tilde{\v{x}}} \abs{\det J_{g^{-1}}\pr{\tilde{\v{x}}}}
\end{align*}
With a uniform prior on $g$, we obtain via Bayes' rule
\begin{align*}
& p\pr{g\mid \tilde{\v{x}}} \propto p_{\v{x}} \pr{g^{-1}\cdot \tilde{\v{x}}} \abs{\det J_{g^{-1}}\pr{\tilde{\v{x}}}}
\end{align*}

For the second statement, for $\v{x}' = g^{-1}\cdot{\tilde{\v{x}}}$ and measurable $B \subseteq G\cdot \tilde{\v{x}}$, we have
\begin{align*}
&\mathbb{P}\pr{\v{x}' \in B \mid \tilde{\v{x}}} = \int_{g\in G} p \pr{g\mid \tilde{\v{x}}} \mathbbm{1}_{B}\pr{g^{-1}\cdot{\tilde{\v{x}}}} \diff \mu(g) \\
&\mathbb{P}\pr{\v{x}' \in B \mid \tilde{\v{x}}}\propto \int_{g\in G} p_{\v{x}} \pr{g^{-1}\cdot \tilde{\v{x}}} \abs{\det J_{g^{-1}}\pr{\tilde{\v{x}}}}\mathbbm{1}_{B}\pr{g^{-1}\cdot{\tilde{\v{x}}}} \diff \mu(g)
\end{align*}
Making the change of variable $\v{x}' = g^{-1}\cdot{\tilde{\v{x}}}$ and using the fact that the group action is diffeomorphic, we have by the change of variable formula
\begin{align*}
&\mathbb{P}\pr{\v{x}' \in B \mid \tilde{\v{x}}} \propto \int_{\v{x}' \in G\cdot \tilde{\v{x}}} p_{\v{x}} \pr{\v{x}'} \mathbbm{1}_{B}\pr{\v{x}'} \diff \mu_{G\cdot \tilde{\v{x}}}(\v{x}')
\end{align*}
Therefore we have the following density
\begin{align*}
p\pr{\v{x}' \mid \tilde{\v{x}}} \propto p_{\v{x}}\pr{\v{x}'} \mathbbm{1}_{G\cdot{\tilde{\v{x}}}} \pr{{\v{x}'}}
\end{align*}
with respect to $\mu_{G\cdot \tilde{\v{x}}}$.
\end{proof}

\paragraph{On the free action assumption}
The assumption that $G$ acts freely is mainly a technical condition used for the proof. In practice, TIED does not require freeness, as it only relies on defining and sampling an energy over $G$. When the action is not free, the main effect is redundancy: multiple group elements correspond to the same transformed sample. This does not affect recovery of $g^{-1}\cdot\tilde{\v{x}}$, which is invariant along stabilizer directions.

While noncompact stabilizers can lead to a non-normalizable posterior over $G$, where the density is not well-defined mathematically, even in this situation, the sampling scheme is still well-defined, and the induced sample $g^{-1}\cdot\tilde{\v{x}}$ remains unchanged along those directions, so this does not impact recovery in practice. Although the sampler may not visit all modes, it is not a problem in practice because the different modes are associated with identical copies of the data.

Finally, in the applications we consider, the freeness assumption holds. It is violated when the input data has perfect symmetry, which would be artificial. Overall, the freeness assumption simplifies the theory, but is not a limitation in practice.

\subsection{Proof of Proposition~\ref{prop:equiv}}
\label{app:proof_posterior_equivariance}

\posteriorequivariance*

\begin{proof}
By left-invariance of the Haar measure, the density $p\pr{g \mid \tilde{\v{x}}}$ is equivariant if
\begin{align*}
p\pr{h \cdot g \mid h\cdot \tilde{\v{x}}} = p\pr{ g \mid \tilde{\v{x}}}, \quad \forall g,h \in G, \tilde{\v{x}}\in\mathcal{X}
\end{align*}
We can verify this explicitly for the posterior given by
\begin{align*}
p\pr{g\mid \tilde{\v{x}}} = \frac{e^{{- E_\v{x}\pr{{g}^{-1}\cdot \tilde{\v{x}}}}}\abs{\det J_{g^{-1}}\pr{\tilde{\v{x}}}}}{Z\pr{\tilde{\v{x}}}} 
\end{align*}
We have
\begin{align*}
p\pr{h\cdot g\mid h\cdot \tilde{\v{x}}} &= \frac{e^{{- E_\v{x}\pr{{g}^{-1}\cdot {h}^{-1}\cdot {h}\cdot \tilde{\v{x}}}}}\abs{\det J_{{(h\cdot g)}^{-1}}\pr{h\cdot\tilde{\v{x}}}}}{Z\pr{h\cdot \tilde{\v{x}}}} \\
p\pr{h\cdot g\mid h\cdot \tilde{\v{x}}} &= \frac{e^{{- E_\v{x}\pr{{g}^{-1}\cdot \tilde{\v{x}}}}}\abs{\det J_{{(h\cdot g)}^{-1}}\pr{h\cdot\tilde{\v{x}}}}}{Z\pr{h\cdot \tilde{\v{x}}}} 
\end{align*}
Using the chain rule for the Jacobian, we obtain
\begin{align*}
p\pr{h\cdot g\mid h\cdot \tilde{\v{x}}} = \frac{e^{{- E_\v{x}\pr{{g}^{-1}\cdot \tilde{\v{x}}}}}\abs{\det J_{{g}^{-1}}\pr{\tilde{\v{x}}}}\abs{\det J_{{h}^{-1}}\pr{h\cdot \tilde{\v{x}}}}}{Z\pr{h\cdot \tilde{\v{x}}}} 
\end{align*}
We can absorb the factor $\abs{\det J_{{h}^{-1}}\pr{h\cdot \tilde{\v{x}}}}$ into the normalization constant by using the chain rule for Jacobians, then a change of variable $g'=h^{-1}g$ followed by left-invariance of the Haar measure
\begin{align*}
\frac{\abs{\det J_{h^{-1}}(h\cdot \tilde{\v{x}})}}{Z(h\cdot \tilde{\v{x}})} &= \frac{\abs{\det J_{h^{-1}}(h\cdot \tilde{\v{x}})}}{\int_G e^{-E_\v{x}(g^{-1} h\cdot \tilde{\v{x}})} \abs{\det J_{g^{-1}}(h\cdot \tilde{\v{x}})}\,{\rm d}\mu(g)} \nonumber\\
&= \frac{\cancel{\abs{\det J_{h^{-1}}(h\cdot \tilde{\v{x}})}}}{\int_G e^{-E_\v{x}(g^{-1} h\cdot \tilde{\v{x}})} \abs{\det J_{g^{-1}h}(\tilde{\v{x}})} \cancel{\abs{\det J_{h^{-1}}(h\cdot \tilde{\v{x}})}} \,{\rm d}\mu(g)} \nonumber\\
&= \frac{1}{\int_G e^{-E_\v{x}(g'^{-1}\cdot \tilde{\v{x}})} \abs{\det J_{g'^{-1}}(\tilde{\v{x}})} \,{\rm d}\mu(hg')} \nonumber\\
&= \frac{1}{Z(\tilde{\v{x}})}
\end{align*}
This gives us $p\pr{h \cdot g \mid h\cdot \tilde{\v{x}}} = p\pr{ g \mid \tilde{\v{x}}}$.
\end{proof}

\subsection{Proof of Proposition~\ref{prop:trivialized_grad}}\label{apd:proof_trivialized_grad}

\begingroup
\def\thetheorem{\ref{prop:trivialized_grad}}
\begin{proposition}[Trivialized gradient]
Let $G$ be a finite-dimensional Lie group.
Under any choice of a left-invariant metric, for any smooth function $f:G\to\mathbb{R}$, the following holds for all $g\in G$:
\begin{align}\label{eq:trivialized_gradient_proof}
\nabla_\mathfrak{g} f(g) = \left.\nabla_{\bf v}f(g\exp({\bf v}))\right|_{{\bf v}={\bf 0}}.
\end{align}
\end{proposition}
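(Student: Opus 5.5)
The strategy is to characterize both sides of \cref{eq:trivialized_gradient_proof} by the same linear functional on $\mathfrak{g}$, and then use nondegeneracy of the inner product $\langle\cdot,\cdot\rangle_\mathfrak{g}$. Here $\nabla_{\bf v}$ denotes the Euclidean gradient with respect to the coordinates of ${\bf v}$ in the chosen orthonormal basis $\{{\bf e}_1,\dots,{\bf e}_n\}$ of $\mathfrak{g}$. Equivalently, it is the gradient of the function $F({\bf v})\equiv f(g\exp({\bf v}))$ on the inner product space $(\mathfrak{g},\langle\cdot,\cdot\rangle_\mathfrak{g})$. It is therefore characterized by
\[
\langle \nabla_{\bf v}F({\bf 0}),{\bf u}\rangle_\mathfrak{g}=\diff F_{\bf 0}({\bf u})\quad\text{for all }{\bf u}\in\mathfrak{g}.
\]

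First, compute $\diff F_{\bf 0}$ by the chain rule. Since $F=f\circ L_g\circ\exp$, we have $\diff F_{\bf 0}=\diff f_g\circ\diff(L_g)_e\circ\diff\exp_{\bf 0}$. We use the standard fact that $\diff\exp_{\bf 0}=\mathrm{id}_\mathfrak{g}$, which holds because $t\mapsto\exp(t{\bf u})$ is the one-parameter subgroup with initial velocity ${\bf u}$. Hence $\diff F_{\bf 0}({\bf u})=\diff f_g(\diff(L_g)_e{\bf u})$.

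Second, rewrite this using the Riemannian gradient \cref{eq:riemannian_gradient}:
\[
\diff f_g(\diff(L_g)_e{\bf u})=\langle\nabla f(g),\diff(L_g)_e{\bf u}\rangle_g .
\]
By left-invariance of the metric, applying $\diff(L_{g^{-1}})_g$ to both arguments gives
\[
\langle\diff(L_{g^{-1}})_g\nabla f(g),\,\diff(L_{g^{-1}})_g\diff(L_g)_e{\bf u}\rangle_e .
\]
The composition rule for tangent maps gives $\diff(L_{g^{-1}})_g\circ\diff(L_g)_e=\diff(L_e)_e=\mathrm{id}$. The metric at $e$ coincides with $\langle\cdot,\cdot\rangle_\mathfrak{g}$, since any left-invariant metric is the extension of its restriction to $\mathfrak{g}$. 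The expression therefore equals $\langle\nabla_\mathfrak{g}f(g),{\bf u}\rangle_\mathfrak{g}$.

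Combining the two steps, $\langle\nabla_{\bf v}F({\bf 0})-\nabla_\mathfrak{g}f(g),{\bf u}\rangle_\mathfrak{g}=0$ for all ${\bf u}$, so the two vectors coincide. The main point needing care is the identification of $\nabla_{\bf v}$, computed by autodiff in coordinates, with the gradient with respect to $\langle\cdot,\cdot\rangle_\mathfrak{g}$. This requires the coordinates to be taken in a basis that is orthonormal for the inner product inducing the left-invariant metric. I would state this explicitly, since otherwise a Gram-matrix factor appears. The fact $\diff\exp_{\bf 0}=\mathrm{id}$ I would cite from \citet{lee2012introduction}.
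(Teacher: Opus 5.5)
Your proof is correct and follows essentially the same argument as the paper. Both identify each side with the linear functional $\mathbf{u}\mapsto \mathrm{d}f_g(\mathrm{d}(L_g)_e\mathbf{u})$ on $\mathfrak{g}$, use left-invariance of the metric to pull $\nabla f(g)$ back to $\mathfrak{g}$, and conclude by nondegeneracy of the inner product. The differences are cosmetic: you obtain $\mathrm{d}(L_g)_e\mathbf{u}$ from the chain rule and $\mathrm{d}\exp_{\mathbf{0}}=\mathrm{id}$, whereas the paper uses the velocity of the curve $t\mapsto g\exp(t\mathbf{u})$. You also state explicitly the orthonormal-basis requirement that the paper uses only implicitly when it writes the directional derivative as $\langle \nabla_{\mathbf{v}} f(g\exp(\mathbf{v}))|_{\mathbf{v}=\mathbf{0}}, \mathbf{u}\rangle_e$.
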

\endgroup
\begin{proof}
Let ${\bf u}\in\mathfrak{g}$ be an arbitrary unit vector and let $\gamma(t)\equiv g\exp(t{\bf u})$ be an associated smooth curve on the group parameterized by $t\in\mathbb{R}$.
Then from \citet[Corollary 3.25]{lee2012introduction} the following holds
\begin{align*}
\diff f_g(\dot\gamma(0)) = \left.\frac{\diff }{\diff t}f(\gamma(t))\right|_{t=0}
\end{align*}
Starting from the left-hand side, we get
\begin{align*}
\diff f_g(\dot\gamma(0)) &= \langle\nabla f(g), \dot\gamma(0)\rangle_g \nonumber\\
&= \langle \nabla f(g), \diff (L_g)_e{\bf u}\rangle_g \nonumber\\
&= \langle \diff (L_{g^{-1}})_g\nabla f(g), \diff (L_{g^{-1}})_g\circ \diff (L_g)_e{\bf u}\rangle_e \nonumber\\
&= \langle \nabla_\mathfrak{g} f(g), {\bf u}\rangle_e
\end{align*}
where we use $\dot\gamma(0)=\diff (L_{\gamma(0)})_e{\bf u}=\diff (L_g)_e{\bf u}$ \citep[$\S$8, $\S$9, $\S$20]{lee2012introduction} for the second equality.

Starting from the right-hand side, we get
\begin{align*}
\left.\frac{\diff }{\diff t}f(\gamma(t))\right|_{t=0} &= \left.\frac{\diff }{\diff t}f(g\exp({\bf 0} +t{\bf u}))\right|_{t=0} \nonumber\\
&= \nabla_{\bf u}(f\circ L_g\circ \exp)({\bf 0}) \nonumber\\
&= \langle \left.\nabla_{\bf v}f(g\exp({\bf v}))\right|_{{\bf v}={\bf 0}}, {\bf u} \rangle_e
\end{align*}
where, for the second equality, we use the definition of directional derivative of the composite function $f\circ L_g\circ\exp:\mathfrak{g}\to\mathbb{R}$ along the direction ${\bf u}$ evaluated at ${\bf 0}$.

Therefore, for arbitrary unit vector ${\bf u}\in\mathfrak{g}$ it holds that
\begin{align*}
\langle \nabla_\mathfrak{g} f(g), {\bf u}\rangle_e = \langle \left.\nabla_{\bf v}f(g\exp({\bf v}))\right|_{{\bf v}={\bf 0}}, {\bf u} \rangle_e
\end{align*}
and we have \cref{eq:trivialized_gradient_proof}.
\end{proof}

\subsection{Proof of Proposition~\ref{prop:trivialized_score}}\label{apd:proof_trivialized_score}

\begingroup
\def\thetheorem{\ref{prop:trivialized_score}}
\begin{proposition}[Trivialized score]
Let $G$ be a finite-dimensional Lie group.
Under any choice of a left-invariant metric, for any positive and smooth function $p:G\to\mathbb{R}$, it holds that $\nabla_\mathfrak{g} p = p\,\nabla_\mathfrak{g} \log p$.
\end{proposition}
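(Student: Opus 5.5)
The plan is to reduce the statement to the ordinary Euclidean log-derivative identity by using \cref{prop:trivialized_grad}, which expresses the trivialized gradient as a Euclidean gradient in the Lie algebra. Fix $g\in G$ and define the composite map $F_g:\mathfrak{g}\to\mathbb{R}$ by $F_g({\bf v})\equiv p(g\exp({\bf v}))$. Since $p$ is smooth and positive, and $\exp$ and $L_g$ are smooth, $F_g$ is a smooth, strictly positive function on the finite-dimensional inner-product space $(\mathfrak{g},\langle\cdot,\cdot\rangle_\mathfrak{g})$. Moreover $\log F_g({\bf v}) = (\log p)(g\exp({\bf v}))$, where $\log p$ is itself a smooth function on $G$ because $p>0$.

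The steps, in order, are as follows.

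\begin{enumerate}
\item Apply \cref{prop:trivialized_grad} to $p$, which gives $\nabla_\mathfrak{g} p(g) = \nabla_{\bf v}F_g({\bf v})|_{{\bf v}={\bf 0}}$.
\item Apply \cref{prop:trivialized_grad} to $\log p$, which gives $\nabla_\mathfrak{g}\log p(g) = \nabla_{\bf v}\log F_g({\bf v})|_{{\bf v}={\bf 0}}$.
\item Use the Euclidean chain rule for $\log\circ F_g$ on $\mathfrak{g}$, namely $\nabla_{\bf v}\log F_g({\bf v}) = F_g({\bf v})^{-1}\nabla_{\bf v}F_g({\bf v})$, and evaluate at ${\bf v}={\bf 0}$, where $F_g({\bf 0}) = p(g\exp({\bf 0})) = p(g)$ since $\exp({\bf 0}) = e$.
\end{enumerate}

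Combining the three steps gives $\nabla_\mathfrak{g}\log p(g) = p(g)^{-1}\nabla_\mathfrak{g} p(g)$. Multiplying both sides by $p(g)>0$ yields the claim, and since $g$ was arbitrary the identity holds on all of $G$.

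An alternative route avoids \cref{prop:trivialized_grad}. The Riemannian gradient satisfies $\nabla\log p = p^{-1}\nabla p$, because $\diff(\log p)_g = p(g)^{-1}\diff p_g$ and the defining relation \cref{eq:riemannian_gradient} is linear. Applying the linear map $\diff(L_{g^{-1}})_g$ to both sides then gives the result directly.

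There is no real obstacle here. The only care needed is to check that both gradients in steps 1 and 2 are taken with respect to the same left-invariant metric and the same inner product on $\mathfrak{g}$, so that they are identified consistently, and to note that positivity of $p$ is what guarantees smoothness of $\log p$.
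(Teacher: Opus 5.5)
Your proof is correct and follows essentially the paper's route. The paper likewise reduces the claim to the scalar chain rule along $t\mapsto g\exp(t\mathbf{u})$: it uses $\langle\nabla_\mathfrak{g} f(g),\mathbf{u}\rangle_\mathfrak{g} = \left.\frac{\diff}{\diff t} f(g\exp(t\mathbf{u}))\right|_{t=0}$ with $f=\log p$ and arbitrary $\mathbf{u}$, which is the directional form of the trivialized-gradient formula you invoke, and your alternative route via linearity of the Riemannian-gradient relation followed by applying $\diff(L_{g^{-1}})_g$ is also valid.
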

\endgroup

\begin{proof}
For any smooth function $f:G\rightarrow\mathbb{R}$, we can turn its differential $\diff f_g$ evaluated at $g$ into a linear functional on $\mathfrak{g}$ by pre-composing with $\diff(L_{g^{-1}})_g^{-1}$. Concretely, for any $g\in G$ and ${\bf u}\in\mathfrak{g}$,
\begin{align*}
\langle\nabla_{\mathfrak{g}}f(g),{\bf u}\rangle_{\mathfrak{g}} = \diff f_g(\diff(L_{g^{-1}})_g^{-1}{\bf u}) = \left.\frac{\diff}{\diff t}f(g\exp(t{\bf u}))\right|_{t=0}
\end{align*}
This is exactly the directional derivative of $f$ along the left-invariant vector field generated by ${\bf u}$.

We can apply the above equation to the left-trivialized score since we assumed the smoothness of each $p$.
With $f = \log p$, we get
\begin{align*}
(\diff(L_{g^{-1}})_g\nabla \log p(g))({\bf u}) = \left.\frac{\diff}{\diff t}\log p(g\exp(t{\bf u}))\right|_{t=0}
\end{align*}
By the chain rule on $\mathbb{R}$,
\begin{align*}
\frac{\diff }{\diff  t}\log p(g \exp(t{\bf u})) = \frac{1}{p(g\exp(t{\bf u}))} \frac{\diff }{\diff t}p(g\exp(t{\bf u}))
\end{align*}
Evaluating at $t=0$ gives
\begin{align*}
\langle\nabla_{\mathfrak{g}}\log p(g),{\bf u}\rangle_{\mathfrak{g}} = \frac{1}{p(g)} \left.\frac{\diff}{\diff t}p(g\exp(t{\bf u}))\right|_{t=0} = \frac{\langle\nabla_{\mathfrak{g}}p(g),{\bf u}\rangle_{\mathfrak{g}}}{p(g)}
\end{align*}
Since this holds for arbitrary ${\bf u}\in\mathfrak{g}$, we get $\nabla_\mathfrak{g} p(g) = p(g)\,\nabla_\mathfrak{g} \log p(g)$ for all $g\in G$.
\end{proof}

\subsection{Proof of Proposition~\ref{prop:reverse_sde}}
\label{app:proof_reverse_trivialized_sde}

The proof is inspired by the generator-based approach of \citet{zhu2025trivialized, holderrieth2024generator}.

For an $n$-dimensional Lie group $G$ with any choice of orthonormal basis $\{{\bf e}_1,...,{\bf e}_n\}$ of Lie algebra~$\mathfrak{g}$, we recall the SDE given in \cref{eq:trivialized_sde}, with Brownian motion in the Lie algebra $\diff {\bf w}_t^\mathfrak{g}=\sum_i {\bf e}_i \diff w_t^i$, drift coefficient $\phi(\cdot,t):G\to\mathfrak{g}$, and diffusion coefficient $\gamma(t)\in\mathbb{R}$:
\begin{align*}
\diff g_t = \diff (L_{g_t})_e\left[\phi(g_t,t)\,\diff t + \gamma(t)\,\diff {\bf w}_t^\mathfrak{g}\right].
\end{align*}
We show some useful lemmas related to the SDE.
We first derive its infinitesimal generator $\mathcal{L}_t$, which is a linear operator defined for every smooth function $f:G\to \mathbb{R}$ as the following for each $g\in G$:
\begin{align*}
(\mathcal{L}_tf)(g)\equiv \lim_{h\to 0^+}\frac{\mathbb{E}[f(g_{t+h}) - f(g_t)|g_t=g]}{h}.
\end{align*}
\begin{lemma}\label{lemma:infinitesimal_generator}
Assume the SDE in \cref{eq:trivialized_sde} has a smooth drift.
Then its infinitesimal generator $\mathcal{L}_t$ satisfies the following for any smooth $f:G\to\mathbb{R}$, where $\Delta$ is the Laplace-Beltrami operator:
\begin{align}\label{eq:infinitesimal_generator}
\mathcal{L}_tf = \langle \nabla_\mathfrak{g} f, \phi(\cdot,t) \rangle_\mathfrak{g} + \frac{\gamma(t)^2}{2} \Delta f.
\end{align}
\end{lemma}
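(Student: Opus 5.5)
We compute the generator directly by a second-order Taylor expansion along the left-invariant flows, rather than through local coordinates on $G$.

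\textbf{Step 1: one-step increment.} Fix $g_t=g$ and a small $h>0$. Over $[t,t+h]$ the SDE \cref{eq:trivialized_sde} is approximated, to the order relevant for the generator, by its Euler step \cref{eq:time_discretization}:
\begin{equation*}
g_{t+h}\approx g\exp(\mathbf{v}_h),\qquad \mathbf{v}_h=\phi(g,t)\,h+\gamma(t)\,\mathbf{z},\qquad \mathbf{z}\sim\mathcal{N}(\mathbf{0},h\mathbf{I}).
\end{equation*}
Here $\mathbf{z}$ is expressed in the orthonormal basis $\{\mathbf{e}_i\}$ of $\mathfrak{g}$. Writing $F(\mathbf{v})\equiv f(g\exp(\mathbf{v}))$, a smooth function on the vector space $\mathfrak{g}$, we get
\begin{equation*}
\mathbb{E}[f(g_{t+h})-f(g)]=\mathbb{E}[F(\mathbf{v}_h)-F(\mathbf{0})]+o(h).
\end{equation*}
To justify the $o(h)$ error, compare the true solution with the Euler step. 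The drift is smooth and the noise is additive in the trivialized frame, so the local error is $O(h^{3/2})$ in mean square. For compact support or bounded derivatives of $f$, this suffices after taking expectations.

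\textbf{Step 2: Taylor expansion in $\mathfrak{g}$.} Expand $F$ to second order at $\mathbf{0}$. By \cref{prop:trivialized_grad}, $\nabla F(\mathbf{0})=\nabla_\mathfrak{g} f(g)$, so the first-order term contributes
\begin{equation*}
\langle \nabla_\mathfrak{g} f(g),\phi(g,t)\rangle_\mathfrak{g}\,h,
\end{equation*}
because $\mathbb{E}\mathbf{z}=\mathbf{0}$. The second-order term contributes
\begin{equation*}
\tfrac{\gamma(t)^2}{2}\,h\,\mathrm{tr}\,\nabla^2F(\mathbf{0})+o(h),
\end{equation*}
since $\mathbb{E}[\mathbf{z}\mathbf{z}^\top]=h\mathbf{I}$ and the cross terms between drift and noise are $O(h^{3/2})$ or higher. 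Dividing by $h$ and letting $h\to0$ gives
\begin{equation*}
\mathcal{L}_tf=\langle\nabla_\mathfrak{g} f,\phi\rangle_\mathfrak{g}+\tfrac{\gamma(t)^2}{2}\sum_i \partial_{s}^2 f(g\exp(s\mathbf{e}_i))\big|_{s=0}.
\end{equation*}
We used that the diagonal entries of the Hessian of $F$ at $\mathbf{0}$ equal the second derivatives along the one-parameter curves $s\mapsto g\exp(s\mathbf{e}_i)$, since $F(s\mathbf{e}_i)=f(g\exp(s\mathbf{e}_i))$.

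\textbf{Step 3: identification with $\Delta$ (main obstacle).} It remains to show
\begin{equation*}
\sum_i E_i^2 f=\Delta f,
\end{equation*}
where $E_i$ is the left-invariant vector field generated by $\mathbf{e}_i$ and $\Delta$ is the Laplace--Beltrami operator of the left-invariant metric. In general these two operators differ: a direct computation gives
\begin{equation*}
\Delta f=\sum_i E_i^2 f-\sum_i E_{\nabla_{E_i}E_i}f,
\end{equation*}
with $\nabla$ the Levi-Civita connection. The Koszul formula for left-invariant fields gives
\begin{equation*}
\sum_i\nabla_{E_i}E_i=\sum_i \mathrm{ad}^*_{\mathbf{e}_i}\mathbf{e}_i,
\end{equation*}
whose inner product with any $\mathbf{u}\in\mathfrak{g}$ equals $-\mathrm{tr}(\mathrm{ad}_{\mathbf{u}})$. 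This vanishes exactly when $G$ is unimodular.

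So I would first settle what the statement's $\Delta$ means. Either I would interpret $\Delta$ as the left-invariant sub-Laplacian $\sum_i E_i^2$, the Hörmander-form operator of the trivialized SDE. Or I would verify, using the modular function $\lambda$ of \cref{prop:mod}, that the correction term is a first-order drift proportional to $\nabla_\mathfrak{g}\log\lambda$. In the second case I would state the lemma with $\Delta\equiv\sum_iE_i^2$, noting that this coincides with the Laplace--Beltrami operator for unimodular groups. The later Fokker--Planck and time-reversal argument uses the adjoint of the generator with respect to Haar measure, and that adjoint is what actually matters downstream.
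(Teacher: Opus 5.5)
Your Steps 1--2 correctly compute the generator of a different process: the one generated by the group-exponential step $g\exp(\mathbf{v}_h)$, namely $\langle\nabla_\mathfrak{g}f,\phi\rangle_\mathfrak{g}+\frac{\gamma(t)^2}{2}\sum_iE_i^2f$. Your Step 3 correctly shows that this differs from the claimed formula by $\frac{\gamma(t)^2}{2}\bigl(\sum_i\nabla_{E_i}E_i\bigr)f$, which vanishes iff $G$ is unimodular. There is a small sign slip: Koszul gives $\langle\sum_i\nabla_{E_i}E_i,\mathbf{u}\rangle_\mathfrak{g}=+\mathrm{tr}(\mathrm{ad}_\mathbf{u})$.

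You then change the statement instead of proving it, so there is a genuine gap, and it sits in Step 1. The paper reads \cref{eq:trivialized_sde} as an It\^o SDE on the Riemannian manifold $G$ (left-invariant metric), driven by the vector fields $\Phi=\diff(L_g)_e\phi$ and $\Gamma_i=\gamma(t)E_i$. For such an SDE the manifold It\^o formula gives $\mathcal{L}_tf=\diff f(\Phi)+\frac12\sum_i\mathrm{Hess}_f(\Gamma_i,\Gamma_i)$ with the Levi-Civita Hessian. Since $\mathrm{Hess}_f(E_i,E_i)=E_i^2f-(\nabla_{E_i}E_i)f$ on the orthonormal frame $\{E_i\}$, the trace is exactly $\Delta f$ for every Lie group, unimodular or not. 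That is your own Step 3 identity read in the other direction, and it is the entire content of the paper's proof.

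Under this reading, the one-step approximation must follow Riemannian geodesics, not one-parameter subgroups. Writing $E_\mathbf{v}$ for the left-invariant field generated by $\mathbf{v}$, one has $g\exp(\mathbf{v}_h)=\mathrm{Exp}_g\bigl(\diff(L_g)_e\mathbf{v}_h+\tfrac12\nabla_{E_{\mathbf{v}_h}}E_{\mathbf{v}_h}+O(|\mathbf{v}_h|^3)\bigr)$. The correction term has mean $\frac{\gamma(t)^2h}{2}\sum_i\nabla_{E_i}E_i$, so your displayed $o(h)$ identity in Step 1 fails at order $h$ precisely when $G$ is non-unimodular.

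What you have found is a real tension between this lemma and the paper's own discretization \cref{eq:time_discretization} and \cref{alg:sampling}. These use the group exponential and so realize the $\sum_iE_i^2$ generator, e.g.\ on the non-unimodular $\mathrm{Aff}(2,\mathbb{R})$. But redefining $\Delta\equiv\sum_iE_i^2$ is not a harmless repair. On a non-unimodular group the fields $E_i$ have divergences $-\mathrm{tr}(\mathrm{ad}_{\mathbf{e}_i})$ with respect to left Haar measure, not all zero. Hence $\sum_iE_i^2$ is not symmetric with respect to $\mu$, and the self-adjointness step in the subsequent adjoint computation, and with it the reverse-SDE argument, would no longer go through as written.
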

\begin{proof}
Let us denote $E_i\equiv \diff (L_{[\cdot]})_e{\bf e}_i$ and rewrite \cref{eq:trivialized_sde} as follows
\begin{align*}
\diff g_t = \Phi(g_t, t)\,\diff t+ \sum_{i=1}^n \Gamma_i(g_t, t)\,\diff w_t^i,
\quad
\Phi(g,t)\equiv \diff (L_g)_e \phi(g,t),
\quad
\Gamma_i(g, t)\equiv \gamma(t)E_i
\end{align*}
Since $\Phi(\cdot, t),\Gamma_1(\cdot,t), ..., \Gamma_n(\cdot,t)$ are smooth vector fields on $G$, this expression makes it explicit that \cref{eq:trivialized_sde} is an It\^o SDE taking $G$ as the state space.
Then, by applying \citet[Thm. 2 and Prop. 1]{lee2025geometric}\footnote{{Here we consider an extension of the results in \citet{lee2025geometric} to time-dependent drift and diffusion coefficients.
Such an extension can be obtained since the results are based on applying the chain rule to a standard form of Stratonovich SDE with respect to only the state variable and hence we can regard $t$ as a constant.}} we get
\begin{align*}
\mathcal{L}_tf &= \diff f_{[\cdot]}(\Phi(\cdot, t)) + \frac{1}{2}\sum_{i=1}^n {\rm Hess}_f(\Gamma_i(\cdot,t),\Gamma_i(\cdot,t)) \nonumber\\
&= \diff f_{[\cdot]}(\Phi(\cdot, t)) + \frac{\gamma(t)^2}{2} \sum_{i=1}^n {\rm Hess}_f(E_i,E_i) \nonumber\\
&= \diff f_{[\cdot]}(\Phi(\cdot, t)) + \frac{\gamma(t)^2}{2} \Delta f \nonumber\\
&= \langle \nabla f, \Phi(\cdot,t) \rangle + \frac{\gamma(t)^2}{2} \Delta f 
\end{align*}
where we use bilinearity of Hessian for the second equality and use \cref{eq:riemannian_gradient} for the last equality.
With left-invariance of the metric we obtain \cref{eq:infinitesimal_generator}.
\end{proof}

We now derive the adjoint operator $\mathcal{L}_t^*$ of the infinitesimal generator, which is defined through the following relationship for any smooth, compactly supported test function $f:G\to\mathbb{R}$ and density $\rho$ with respect to the Haar measure $\mu$ \citep[$\S$A.3]{holderrieth2024generator}:
\begin{align}\label{eq:adjoint_relation}
\int_G (\mathcal{L}_tf)\,\rho\,\diff \mu = \int_G f\,(\mathcal{L}_t^*\rho)\,\diff \mu.
\end{align}
\begin{lemma}\label{lemma:adjoint_of_generator}
Assume the SDE in \cref{eq:trivialized_sde} has a smooth drift.
The adjoint $\mathcal{L}_t^*$ of its infinitesimal generator satisfies the following for any bounded and smooth density $\rho$, where ${\rm div}$ is the divergence:
\begin{align}\label{eq:adjoint}
\mathcal{L}_t^*\rho = -{\rm div}(\rho\,\Phi(\cdot,t)) + \frac{\gamma(t)^2}{2}\Delta \rho.
\end{align}
\end{lemma}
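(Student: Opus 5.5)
The plan is to start from the generator in \cref{lemma:infinitesimal_generator} and move every derivative off the test function $f$ and onto $\rho$ by integration by parts on the Riemannian manifold $(G,\langle\cdot,\cdot\rangle)$. Here $\langle\cdot,\cdot\rangle$ is the left-invariant metric used throughout.

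\textbf{Step 1 (measures).} First I would identify the Haar measure $\mu$ with the Riemannian volume measure of the left-invariant metric, up to a positive constant. The Riemannian volume form of a left-invariant metric is itself left-invariant, so by uniqueness of left Haar measure the two agree up to scaling. A constant factor appears on both sides of \cref{eq:adjoint_relation} and cancels. This lets me use the divergence theorem and Green's identity with respect to $\diff\mu$. I expect this identification to be the main subtle point, since it is what allows the divergence and Laplace--Beltrami operator (defined via the Riemannian volume) to appear in the result. Unimodularity is not needed, because only left-invariance is used.

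\textbf{Step 2 (drift term).} Using $\langle \nabla_\mathfrak{g} f,\phi\rangle_\mathfrak{g} = \langle \nabla f,\Phi\rangle$, which holds by left-invariance of the metric as in the last line of the proof of \cref{lemma:infinitesimal_generator}, I would write
\begin{align*}
\int_G \langle\nabla f,\Phi\rangle\rho\,\diff\mu = \int_G \langle \nabla f,\rho\Phi\rangle\,\diff\mu .
\end{align*}
Next I would apply the product rule ${\rm div}(f\rho\Phi) = \langle\nabla f,\rho\Phi\rangle + f\,{\rm div}(\rho\Phi)$. By the divergence theorem, $\int_G {\rm div}(f\rho\Phi)\,\diff\mu = 0$, since $f\rho\Phi$ is smooth and compactly supported (because $f$ is). Together these give $-\int_G f\,{\rm div}(\rho\Phi)\,\diff\mu$. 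The smoothness of $\phi$ and of $\rho$ assumed in the lemma guarantees that $\rho\Phi$ is a smooth vector field.

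\textbf{Step 3 (diffusion term).} Green's second identity states that $\int_G (\Delta f)\rho\,\diff\mu = \int_G f\,\Delta\rho\,\diff\mu$ when $f$ is compactly supported. To obtain it, I would apply the divergence theorem twice, to $\rho\nabla f$ and to $f\nabla\rho$; both vanish on integration because their support lies inside ${\rm supp} f$. Multiplying by $\gamma(t)^2/2$ and adding the result of Step 2 gives
\begin{align*}
\int_G (\mathcal{L}_t f)\rho\,\diff\mu = \int_G f\Bigl(-{\rm div}(\rho\Phi) + \tfrac{\gamma(t)^2}{2}\Delta\rho\Bigr)\diff\mu .
\end{align*}
Since this holds for every test function $f$, it identifies $\mathcal{L}_t^*\rho$ as in \cref{eq:adjoint}. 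Boundedness of $\rho$ is only needed to make each integral finite; no decay of $\rho$ at infinity is required, because compact support of $f$ kills all boundary terms even when $G$ is noncompact.
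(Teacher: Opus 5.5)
Your proposal is correct and takes essentially the same route as the paper. Both integrate the generator from \cref{lemma:infinitesimal_generator} against $\rho$. Both treat the drift term with the product rule and the divergence theorem applied to the compactly supported field $f\rho\Phi$, and the diffusion term with self-adjointness of $\Delta$. You additionally make explicit the identification of the left Haar measure with the Riemannian volume of the left-invariant metric, which the paper uses tacitly, and you derive Green's identity from two applications of the divergence theorem rather than citing it.
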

\begin{proof}
From \cref{eq:adjoint_relation}, \cref{lemma:infinitesimal_generator}, and bilinearity of the metric, we get
\begin{align}\label{eq:adjoint_simple}
\int_G f\,(\mathcal{L}_t^* \rho)\,\diff \mu = \int_G \langle \nabla f, \rho\,\Phi(\cdot,t)\rangle\,\diff \mu + \frac{\gamma(t)^2}{2} \int_G \rho\,\Delta f\,\diff \mu
\end{align}
For the first term, we use the following relationship
\begin{align}\label{eq:divergence}
0 &=\int_G{\rm div}(f\,\rho\, \Phi(\cdot, t))\,\diff \mu \nonumber\\
&= \int_G f\,{\rm div}(\rho\,\Phi(\cdot, t))\,\diff \mu+\int_G \langle \nabla f, \rho\,\Phi(\cdot,t) \rangle \, \diff\mu
\end{align}
where the first equality follows from the divergence theorem for the compactly supported vector field $f\,\rho\,\Phi(\cdot, t)$, and the second equality follows from the identity ${\rm div}(f\,X) = f\,{\rm div}X + \langle \nabla f,X \rangle$ with the vector field $X=\rho\,\Phi(\cdot, t)$~\citep[Theorem III.7.3 and p.150]{chavel2006riemannian}.

For the second term, we use the fact that $\Delta$ is self-adjoint~\citep[Theorem III.7.4]{chavel2006riemannian}, that is
\begin{align}\label{eq:self_adjoint}
\int_G \rho\,\Delta f\,\diff \mu = \int_G f\,\Delta \rho\,\diff \mu
\end{align}
Plugging \cref{eq:divergence} and \cref{eq:self_adjoint} into \cref{eq:adjoint_simple}, we get
\begin{align*}
\int_G f\,(\mathcal{L}_t^* \rho)\,\diff \mu = - \int_G f\,{\rm div}(\rho\,\Phi(\cdot, t))\,\diff \mu + \frac{\gamma(t)^2}{2} \int_G f\,\Delta \rho\,\diff \mu
\end{align*}
Since this holds for every test function $f$, we get \cref{eq:adjoint}.
\end{proof}

We are now ready to prove \cref{prop:reverse_sde}.
We recall the forward trivialized SDE in \cref{eq:trivialized_forward_sde}:
\begin{align*}
\diff g_t = \diff (L_{g_t})_e\left[\gamma(t)\,\diff {\bf w}_t^\mathfrak{g} \right]. \nonumber
\end{align*}
The proof uses the adjoint Kolmogorov forward equation that describes density evolution $\rho_t$ of a stochastic process using the adjoint of its infinitesimal generator~\citep{zhu2025trivialized, holderrieth2024generator}:
\begin{align*}
\frac{\partial}{\partial t}\rho_t = \mathcal{L}_t^* \rho_t.
\end{align*}

\reversesde*

\begin{proof}
Denoting the density of the forward process by $p_t$ and of the reverse by $\bar p_t$, we prove $p_t=\bar p_{1-t}$ for all $t\in [0, 1]$ given the initial condition $p_1=\bar p_0$.

From \cref{lemma:adjoint_of_generator}, the adjoint Kolmogorov forward equation of the forward process is
\begin{align*}
\frac{\partial}{\partial t}p_t = \frac{\gamma(t)^2}{2}\Delta p_t
\end{align*}
Likewise, the adjoint Kolmogorov forward equation of the proposed reverse process is
\begin{align*}
\frac{\partial}{\partial t}\bar{p}_t = -\gamma(1-t)^2 {\rm div}(\bar{p}_t\, \nabla\log \bar{p}_t) + \frac{\gamma(1-t)^2}{2}\Delta \bar{p}_t = -\frac{\gamma(1-t)^2}{2}\Delta \bar{p}_t
\end{align*}
where we used ${\rm div}(\bar{p}_t\, \nabla\log \bar{p}_t)={\rm div}(\nabla\bar{p}_t)=\Delta\bar{p}_t$ for the second equality.

Then, the partial derivative of $\bar{p}_{1-t}$ with respect to $t$ is
\begin{align*}
\frac{\partial}{\partial t}\bar{p}_{1-t} =\frac{\gamma(t)^2}{2}\Delta \bar{p}_{1-t}
\end{align*}
Since this precisely matches the partial derivative of $p_t$ with respect to $t$, given the initial condition $p_1=\bar p_0$ we have that $p_t=\bar{p}_{1-t}$ for all $t\in[0,1]$, completing the proof.
\end{proof}

\subsection{Proof of Proposition~\ref{prop:trivialized_tsi}}
\label{app:proof_trivialized_tsi}

Before proving our main score identity, we state two lemmas regarding conditional random variables on general Lie groups.
We distinguish between random variables and their realizations for clarity.
\begin{lemma}\label{lemma:change_of_variable}
{Let $G$ be a finite-dimensional Lie group, and let $X,W$ be independent random variables with densities $p_X,p_W$ with respect to the left-invariant measure $\mu$.
Let $Y\equiv XW$.}
Then we have:
\begin{align}\label{eq:conditional_density}
p_{Y|X}(y|x)=p_W(x^{-1}y).
\end{align}
\end{lemma}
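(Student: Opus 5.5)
We argue directly from the definition of a conditional density: for fixed $x$, we compute $\mathbb{P}(Y\in B\mid X=x)$ for every measurable $B\subseteq G$ and identify the integrand against $\mu$. By independence, conditioning on $X=x$ leaves $W$ with its marginal density $p_W$. Hence
\begin{align*}
\mathbb{P}(Y\in B\mid X=x)=\mathbb{P}(xW\in B)=\int_G \mathbbm{1}_B(xw)\,p_W(w)\,\diff\mu(w).
\end{align*}
To make this rigorous, we would check it against the joint law. For measurable $A,B$ we compute $\mathbb{P}(X\in A, Y\in B)=\int_A\int_G \mathbbm{1}_B(xw)\,p_W(w)\,p_X(x)\,\diff\mu(w)\,\diff\mu(x)$, using Fubini and the product form of the joint density of $(X,W)$.

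The key step is the change of variables $y=xw$, that is, $w=x^{-1}y=L_{x^{-1}}(y)$, for fixed $x$. Because $\mu$ is the left-invariant Haar measure, the pushforward of $\mu$ under $L_x$ is $\mu$ itself. Therefore $\int_G F(xw)\,\diff\mu(w)=\int_G F(y)\,\diff\mu(y)$ for any integrable $F$, with no Jacobian or modular factor. Applying this with $F(y)=\mathbbm{1}_B(y)\,p_W(x^{-1}y)$ gives
\begin{align*}
\mathbb{P}(Y\in B\mid X=x)=\int_B p_W(x^{-1}y)\,\diff\mu(y).
\end{align*}
This holds for all $B$, so $p_{Y|X}(y\mid x)=p_W(x^{-1}y)$ holds $\mu$-a.e. in $y$, for $p_X$-a.e. $x$.

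The main point to be careful about is that we multiply on the right ($Y=XW$) but translate by $x$ on the left. Left-invariance of $\mu$ is exactly what makes the substitution free of any modular function $\lambda$. If the roles were reversed (e.g.\ $Y=WX$), a factor of the modular function would appear; this is presumably where the $\lambda$ in \cref{prop:mc_score_estimator} will later enter.

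One remaining technicality is interpreting the conditional density pointwise rather than almost everywhere. We would handle this by noting that when $p_W$ is continuous (e.g.\ smooth, as for the heat kernels $k_t$), the formula gives a continuous version. We would therefore take that version as the definition.
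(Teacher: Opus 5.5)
Your proposal is correct and is essentially the paper's proof: independence reduces $\mathbb{P}(Y\in B\mid X=x)$ to $\mathbb{P}(xW\in B)$, and the substitution $y=xw$ introduces no modular factor because $\mu$ is left-invariant. Your extra checks against the joint law and the choice of a continuous version are harmless refinements the paper omits; one small correction to your aside is that in the paper $\lambda$ does not come from reversing the order of multiplication, but from the next lemma, where $p_Y$ is rewritten as an integral over $w$ via the substitution $x=yw^{-1}$, which involves an inversion.
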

\begin{proof}
For any measurable set $A\subseteq G$, we have
\begin{align*}
\mathbb{P}\,(Y\in A|X=x)=\mathbb{P}\,(xW\in A)=\mathbb{P}\,(W\in x^{-1}A)=\int_{x^{-1}A}p_W(w)\,\diff \mu(w)
\end{align*}
where the first equality is due to independence of $X$ and $W$.

Let us apply change of variable $y\equiv xw$.
Since $\diff \mu(w)=\diff \mu(x^{-1}y)=\diff \mu(y)$, we have
\begin{align*}
\mathbb{P}\,(Y\in A|X=x)=\int_{x^{-1}A}p_W(w)\,\diff \mu(w)=\int_Ap_W(x^{-1}y)\,\diff \mu(y)
\end{align*}
Since this holds for every measurable set $A\subseteq G$, we get \cref{eq:conditional_density}.
\end{proof}

For the next lemma, we introduce the concept of the modular function $\lambda:G\to\mathbb{R}_{>0}$ of a Lie group $G$ that specifies how the left-invariant measure $\mu$ changes under right multiplication.
Specifically, for every measurable set $A\subseteq G$ and every $g\in G$, it holds that $\mu(Ag) = \lambda(g)\mu(A)$, and in particular, we have that $\diff \mu(g^{-1}) = \lambda(g^{-1})\,\diff \mu(g) = \lambda(g)^{-1}\,\diff \mu(g)$~\citep[$\S$2.4]{folland2015course}.

\begin{lemma}\label{lemma:change_of_variables2}
Let $G$ be a finite-dimensional Lie group, and let $X,W$ be independent random variables with densities $p_X,p_W$ with respect to $\mu$.
Let $Y\equiv XW$.
Then we have, with $\lambda$ the modular function:
\begin{align}\label{eq:marginalization}
p_Y(y) = \int_G p_W(w)\,p_X(yw^{-1})\,\lambda(w)^{-1}\,\diff \mu(w).
\end{align}
\end{lemma}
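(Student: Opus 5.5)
The plan is to marginalize over $X$ using the conditional density from \cref{lemma:change_of_variable}, and then change variables so that the integral runs over the noise variable $w$ rather than over $x$. The only delicate point is the Jacobian factor from that change of variables, which on a non-unimodular group is supplied by the modular function $\lambda$.

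First, by \cref{lemma:change_of_variable} and the law of total probability with respect to $\mu$,
\begin{align*}
p_Y(y) = \int_G p_X(x)\,p_{Y|X}(y|x)\,\diff\mu(x) = \int_G p_X(x)\,p_W(x^{-1}y)\,\diff\mu(x).
\end{align*}
To justify this, for measurable $A$ we write $\mathbb{P}(Y\in A)=\int_G p_X(x)\int_A p_W(x^{-1}y)\,\diff\mu(y)\,\diff\mu(x)$. Fubini (Tonelli, since all integrands are nonnegative) then identifies the density of $Y$.

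Second, we substitute $w \equiv x^{-1}y$ for fixed $y$, so that $x = yw^{-1}$, and we must express $\diff\mu(x)$ in terms of $\diff\mu(w)$. We split the map $w\mapsto yw^{-1}$ into two steps: inversion $w\mapsto w^{-1}$ followed by left multiplication by $y$. Left multiplication preserves $\mu$ by left-invariance. For inversion we use the stated property $\diff\mu(w^{-1}) = \lambda(w)^{-1}\,\diff\mu(w)$, which follows from $\mu(Ag)=\lambda(g)\mu(A)$. Together these give $\diff\mu(yw^{-1}) = \lambda(w)^{-1}\,\diff\mu(w)$, and hence
\begin{align*}
p_Y(y) = \int_G p_X(yw^{-1})\,p_W(w)\,\lambda(w)^{-1}\,\diff\mu(w),
\end{align*}
which is \cref{eq:marginalization}.

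The main obstacle is this second step: the inversion rule for the left Haar measure has to be stated carefully, because the direction of $\lambda$ versus $\lambda^{-1}$ depends on the convention $\mu(Ag)=\lambda(g)\mu(A)$. I would verify it on a test set $A$. With the inversion map $\iota(w)=w^{-1}$, the pushforward $\iota_*\mu$ is a right Haar measure, and the standard identity $\diff\mu(w^{-1}) = \lambda(w^{-1})\,\diff\mu(w)$ (as in \citet[$\S$2.4]{folland2015course}) together with $\lambda(w^{-1})=\lambda(w)^{-1}$ gives the claimed factor. Composing with left translation by $y$ then finishes the argument.
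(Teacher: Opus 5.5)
Your proposal is correct and follows the paper's proof essentially step for step. You marginalize over $x$ via \cref{lemma:change_of_variable}, then substitute $w = x^{-1}y$ using left-invariance of $\mu$ and the inversion rule $\diff\mu(w^{-1}) = \lambda(w)^{-1}\diff\mu(w)$ from Folland, and your extra Tonelli justification and convention check for $\lambda$ versus $\lambda^{-1}$ are sound refinements that the paper leaves implicit.
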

\begin{proof}
From \cref{lemma:change_of_variable}, we obtain
\begin{align*}
p_Y(y) = \int_G p_X(x)\,p_{Y|X}(y|x)\,\diff \mu(x) =\int_G p_X(x)\,p_W(x^{-1}y)\,\diff \mu(x)
\end{align*}
By applying change of variable $w= x^{-1}y$ with $\diff \mu(x)=\diff \mu(yw^{-1})=\diff \mu(w^{-1}) = \lambda(w)^{-1}\diff \mu(w)$, we get \cref{eq:marginalization}.
\end{proof}

We are now ready to prove the main score identity.

\trivializedtsi*

\begin{proof}
We note that $g_0$ and $g_t$ as random variables satisfy a relationship $g_t=g_0w_t$ where $g_0\sim p_0$ and $w_t\sim k_t$ are independent.
By applying \cref{lemma:change_of_variables2}, we get
\begin{align}\label{eq:noise_marginalization}
p_t(g_t) = \int_G k_t(w_t)\,p_0(g_tw_t^{-1})\,\lambda(w_t)^{-1}\,\diff \mu(w_t)
\end{align}
By taking trivialized gradient $\nabla_{\mathfrak{g}}$ with respect to the variable $g_t$ and noting that it is a linear operator,
\begin{align}\label{eq:trivialized_tsi_wrt_noise}
\nabla_\mathfrak{g} p_t(g_t) = \int_G k_t(w_t)\,\nabla_{\mathfrak{g}}p_0(g_tw_t^{-1})\,\lambda(w_t)^{-1}\,\diff \mu(w_t)
\end{align}
Then, using the identity $\nabla_{\mathfrak{g}} p = p\,\nabla_{\mathfrak{g}}\log p$ from \cref{prop:trivialized_score} we get
\begin{align*}
\nabla_{\mathfrak{g}}\log p_t(g_t)
= \int_G \nabla_{\mathfrak{g}}\log p_0(g_tw_t^{-1})\, \frac{k_t(w_t)\, p_0(g_tw_t^{-1})}{p_t(g_t)}\,  \lambda(w_t)^{-1} \,\diff \mu(w_t)
\end{align*}
By change of variable $g_0=g_tw_t^{-1}$ with $\lambda(w_t)^{-1}\,\diff \mu(w_t) = \diff \mu(w_t^{-1}) = \diff \mu(g_tw_t^{-1}) = \diff \mu(g_0)$, and using the relationship $p_{t|0}(g_t|g_0)=k_t(g_0^{-1}g_t)=k_t(w_t)$ obtained by \cref{lemma:change_of_variable}, we get
\begin{align*}
\nabla_{\mathfrak{g}}\log p_t(g_t)
= \int_G \nabla_{\mathfrak{g}}\log p_0(g_0) \,\frac{p_{t|0}(g_t|g_0)\, p_0(g_0)}{p_t(g_t)}\, \diff \mu(g_0)
\end{align*}
where we remark that $\nabla_\mathfrak{g}$ is taken with respect to $g_t$, so $\nabla_{\mathfrak{g}}\log p_0(g_0)$ is understood as $\nabla_{\mathfrak{g}}\log p_0(g_tb)$ for $b$ having the value of $g_t^{-1}g_0$.
Using Bayes' rule we get \cref{eq:trivialized_tsi}.
\end{proof}

\subsection{Proof of Proposition~\ref{prop:mc_score_estimator}}
\label{app:proof_trivialized_dem}

The proof is inspired by the convolution argument of \citet{akhound2024iterated}, or equivalently an importance sampling estimation applied to target score identity \citep{de2024target}.

\mcscoreestimator*

\begin{proof}
We start at \cref{eq:trivialized_tsi_wrt_noise} and use $\nabla_{\mathfrak{g}} p_t = p_t\,\nabla_{\mathfrak{g}}\log p_t$ with \cref{eq:noise_marginalization} to get
\begin{align*}
\nabla_{\mathfrak{g}}\log p_t(g)
= \frac{\int_G k_t(w)\, \nabla_{\mathfrak{g}} p_0(gw^{-1}) \, \lambda(w)^{-1} \,\diff \mu(w)}{\int_G k_t(w)\,p_0(gw^{-1})\,\lambda(w)^{-1}\,\diff \mu(w)}
\end{align*}
Then using $p_0(\cdot) = e^{-E(\cdot)}/Z$ with the normalization constant $Z$ canceling out, we get
\begin{align*}
\nabla_{\mathfrak{g}}\log p_t(g) = \frac{\int_G k_t(w)\, \nabla_{\mathfrak{g}} e^{-E(g_tw^{-1})-\log \lambda(w)} \,\diff \mu(w)}{\int_G k_t(w)\,e^{-E(g_tw^{-1})-\log \lambda(w)}\,\diff \mu(w)}
\end{align*}
Using linearity to move $\nabla_\mathfrak{g}$ with respect to $g$ out of the integration, and then applying the identity $\nabla_\mathfrak{g}f=f\nabla_\mathfrak{g}\log f$ with $f$ the denominator, we get \cref{eq:logsumexp_estimator} (by writing $g_t$ instead of $g$).
\end{proof}

\subsection{{Monte Carlo estimation and consistency}}\label{apd:mc_estimation}

We discuss sample-based estimation of the trivialized score based on \cref{prop:mc_score_estimator} and prove its consistency.
Starting from \cref{eq:logsumexp_estimator} and using Monte Carlo estimation for the integral with $N$ samples, we get:
\begin{align}\label{eq:mc_estimator}
\nabla_\mathfrak{g}\log p_t(g) \approx \nabla_\mathfrak{g}\log \frac{1}{N} \sum_{i=1}^N\exp\bigl( -E(gw^{(i)-1}) - \log\lambda(w^{(i)}) \bigr),\quad w^{(i)}\sim k_t.
\end{align}
The $1/N$ factor does not affect the gradient, and hence can be removed.
We then obtain an expression of the form $\log\sum_i\exp$ inside the gradient, which allows for a numerically stable evaluation using the \texttt{logsumexp} trick \citep{akhound2024iterated}.

We note that the estimator is not unbiased because it has sample mean inside the concave $\log$ function.
The bias can be understood as the Jensen gap.
Nevertheless, we show that it is a consistent estimator.
\begin{proposition}\label{prop:trivialized_dem}
For the forward SDE in \cref{eq:trivialized_forward_sde} where $p_0$ is a Boltzmann density specified by a smooth energy $E$, under proper integrability and positivity conditions, \cref{eq:mc_estimator} is a consistent estimator of $\nabla_\mathfrak{g}\log p_t$.
\end{proposition}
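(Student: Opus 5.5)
The plan is to reduce consistency of the estimator to the strong law of large numbers applied to two sample means. We then pass to the ratio through the continuous mapping theorem. First we expand the trivialized gradient of the log-sum. Write $h(g,w) = \exp(-E(gw^{-1}) - \log\lambda(w))$ as in \cref{prop:mc_score_estimator}. Using \cref{prop:trivialized_score} (the chain rule for $\log$ along $g\exp(t\mathbf{u})$) and linearity of $\nabla_\mathfrak{g}$ over the finite sum, the estimator in \cref{eq:mc_estimator} equals
\begin{align*}
\hat{s}_N(g) = \frac{\frac{1}{N}\sum_{i=1}^N \nabla_\mathfrak{g} h(g, w^{(i)})}{\frac{1}{N}\sum_{i=1}^N h(g, w^{(i)})},\qquad w^{(i)}\overset{\text{iid}}{\sim} k_t,
\end{align*}
where $\nabla_\mathfrak{g}$ acts on $g$ for fixed $w^{(i)}$. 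By \cref{prop:trivialized_grad}, this is an ordinary Euclidean gradient in $\mathbf{v}\in\mathfrak{g}$ at $\mathbf{v}=\mathbf{0}$, so each summand is a well-defined random vector in the fixed vector space $\mathfrak{g}$. The $1/N$ factors cancel in the ratio, so they are harmless.

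Second, we identify the limits. The ``proper integrability'' assumptions will be taken to mean two things. First, $\mathbb{E}_{k_t}[h(g,w)] < \infty$ and $\mathbb{E}_{k_t}[\|\nabla_\mathfrak{g} h(g,w)\|_\mathfrak{g}] < \infty$. Second, there is a dominating integrable function allowing differentiation under the integral. With these, the strong law of large numbers in $\mathbb{R}$ and in $\mathfrak{g}\cong\mathbb{R}^n$ gives almost sure convergence. The denominator converges to $D(g) = \int_G k_t(w) h(g,w)\,\diff\mu(w)$, and the numerator to $\int_G k_t(w)\nabla_\mathfrak{g} h(g,w)\,\diff\mu(w)$. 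Domination lets us exchange $\nabla_\mathfrak{g}$ and the integral, exactly as was done in the proof of \cref{prop:trivialized_tsi}, so the numerator limit is $\nabla_\mathfrak{g} D(g)$. By \cref{eq:noise_marginalization} together with $p_0 = e^{-E}/Z$, we have $D(g) = Z\,p_t(g)$.

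Third, we conclude. Positivity of $p_t(g)$, which holds under the positivity condition since $h>0$ and $k_t$ has positive mass, gives $D(g) > 0$. The map $(a,b)\mapsto a/b$ is continuous on $\mathfrak{g}\times(0,\infty)$, so $\hat{s}_N(g)\to \nabla_\mathfrak{g} D(g)/D(g)$ almost surely. By \cref{prop:trivialized_score} this limit is $\nabla_\mathfrak{g}\log D(g) = \nabla_\mathfrak{g}\log p_t(g)$, because the constant $Z$ drops out under the logarithm. This yields consistency, and in fact almost-sure consistency, pointwise in $g$.

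The main obstacle is making the integrability conditions precise enough to justify the interchange of $\nabla_\mathfrak{g}$ and $\int$. We also need the numerator's first moment to be finite, which is not automatic on a noncompact group: $E(gw^{-1})$ may decay or grow along the heavy tails of $k_t$, and the modular factor $\lambda(w)^{-1}$ can be unbounded. I would state these as explicit hypotheses. One is local uniform domination, $\sup_{\|\mathbf{v}\|\le\epsilon}\|\nabla_\mathbf{v} h(g\exp(\mathbf{v}),w)\| \le M_g(w)$ with $M_g\in L^1(k_t\,\diff\mu)$. The other is $h(g,\cdot)\in L^1(k_t\,\diff\mu)$. I would remark that these hold, for example, when $E$ is bounded below with bounded trivialized gradient and $G$ is unimodular. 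Bias from Jensen's inequality is irrelevant here, since only the limit matters.
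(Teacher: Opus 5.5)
Your proposal is correct and follows the paper's proof. Both apply the strong law of large numbers separately to the empirical means of $\nabla_\mathfrak{g}h(g,w^{(i)})$ and $h(g,w^{(i)})$, use the continuous mapping theorem for the ratio, and identify the limit as $\nabla_\mathfrak{g}\log p_t(g)$ because the constant $Z$ drops out.

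Your version is slightly more careful. The paper silently uses $\nabla_\mathfrak{g}\int_G k_t h\,\diff\mu=\int_G k_t\nabla_\mathfrak{g}h\,\diff\mu$, whereas you state the local domination hypothesis that justifies this exchange. One caveat on your closing example: the trivialized gradient of $g\mapsto E(gw^{-1})$ is ${\rm Ad}(w)^{\top}\nabla_\mathfrak{g}E(gw^{-1})$, so bounded $\nabla_\mathfrak{g}E$ plus unimodularity also needs $\mathbb{E}_{k_t}\|{\rm Ad}(w)\|<\infty$. That moment bound does hold here, since ${\rm Ad}(w_t)$ satisfies a linear matrix SDE with bounded coefficients.
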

\begin{proof}
For each $g\in G$, let us denote $h(g,w)\equiv e^{-E(gw^{-1})-\log\lambda(w)}$ and define
\begin{align*}
J(g)\equiv \int_G k_t(w)\,\nabla_\mathfrak{g} h(g,w)\,\diff\mu(w),\quad Z(g)\equiv \int_G k_t(w)\,h(g,w)\,\diff\mu(w)
\end{align*}
and define the corresponding empirical means
\begin{align*}
\hat{J}_N(g)\equiv \frac{1}{N} \sum_{i=1}^N \nabla_\mathfrak{g} h(g, w^{(i)}),\quad \hat{Z}_N(g)\equiv \frac{1}{N} \sum_{i=1}^N h(g,w^{(i)}), \quad w^{(i)}\sim k_t
\end{align*}
Then \cref{eq:logsumexp_estimator} and \cref{eq:mc_estimator} can be written respectively as
\begin{align*}
\nabla_\mathfrak{g}\log p_t(g) = \nabla_\mathfrak{g} \log Z(g) = \frac{J(g)}{Z(g)},\quad 
\nabla_\mathfrak{g}\log \hat{Z}_N(g) = \frac{\hat{J}_N(g)}{\hat{Z}_N(g)}
\end{align*}
Assume integrability and positivity conditions
\begin{align*}
\int_G k_t(w)\,\|\nabla_\mathfrak{g} h(g,w)\|\,\diff\mu(w) < \infty,
\quad
0 < \int_G k_t(w)\,h(g,w)\,\diff\mu(w) < \infty
\end{align*}
By the strong law of large numbers
\begin{align*}
\hat{J}_N(g) \overset{\rm a.s.}{\longrightarrow} J(g), \quad \hat{Z}_N(g) \overset{\rm a.s.}{\longrightarrow} Z(g)
\end{align*}
Then, by continuous mapping theorem~\citep[Theorem 1.10 and Example 1.30]{shao2003mathematical}
\begin{align*}
\frac{\hat{J}_N(g)}{\hat{Z}_N(g)} \overset{\rm a.s.}{\longrightarrow} \frac{J(g)}{Z(g)}
\end{align*}
thus we have $\nabla_\mathfrak{g}\log \hat{Z}_N(g) \overset{\rm a.s.}{\longrightarrow} \nabla_\mathfrak{g} \log Z(g) = \nabla_\mathfrak{g} \log p_t(g)$.
\end{proof}

\subsection{Implementation details for diffusion sampling}\label{sec:calculation}

We discuss numerical computation of diffusion sampling $\hat{g}_0\sim p_0\propto e^{-E(g)}$ given $E:G\to\mathbb{R}$.
The sampling follows an Euler time-discretization \cref{eq:time_discretization} of the reverse diffusion \cref{eq:trivialized_reverse_sde}.
For step size $\Delta t>0$ we use the following for decreasing step indices $m = M, ..., 1$ with $M=1/\Delta t$:
\begin{align*}
\hat{g}_{m-1} &= \hat{g}_m \, \exp\left(\gamma(m\Delta t)^2 \nabla_\mathfrak{g}\log p_{m\Delta t}(\hat{g}_m)\,\Delta t + \gamma(m\Delta t)\,\bar{\bf z}_m \right),\quad \bar{\bf z}_m \sim \mathcal{N}({\bf 0}, \Delta t {\bf I}),\\
\quad \hat{g}_M &\overset{d}{=} \hat{w}_M\sim k_1,
\end{align*}
where we approximate $k_1\approx p_1$ for initialization as usually done for variance-exploding diffusion \citep{song2020score}.
To run the sampling, we first need to sample from $k_1$ and at each step compute the MC estimator of the score $\nabla_\mathfrak{g}\log p_t$ \cref{eq:mc_estimator}.
This translates to the following requirements:

\begin{enumerate}
\item A method to sample $w\sim k_t$,
\item A method to calculate $\log \lambda(w)$,
\item A method to compute trivialized gradient $\nabla_\mathfrak{g}$ of general $f:G\to\mathbb{R}$.
\end{enumerate}
For the first item, we recall the relationship $g_t=g_0w_t$ with $g_0\sim p_0$ and $w_t\sim k_t$ independent (proof of \cref{prop:trivialized_tsi}), and recall the forward process \cref{eq:trivialized_forward_sde}.
Together, these imply that $w_t\sim k_t$ is described by the following SDE which is identical to the forward SDE but starts at the identity:
\begin{align*}
\diff w_t=\diff (L_{w_t})_e\left[ \gamma(t)\,\diff {\bf w}_t^\mathfrak{g} \right],\quad w_0=e.
\end{align*}
This suggests a natural Euler time-discretization for sampling.
For step indices $m = 1, ..., M$ we use:
\begin{align}\label{eq:noise_time_discretization}
\hat{w}_{m+1} &= \hat{w}_m \, \exp\left(\gamma(m\Delta t)\,{\bf z}_m \right),
\quad {\bf z}_m\sim\mathcal{N}({\bf 0}, \Delta t {\bf I}), \quad \hat{w}_0 = e,
\end{align}
and treat $\hat{w}_m\sim k_{m\Delta t}$.

For the second item, to evaluate the modular function $\lambda(\hat{w}_m)$ for $\hat{w}_m\sim k_{m\Delta t}$ we use the fact that $\hat{w}_m$ can always be written as the following, thanks to the structure of time-discretization \cref{eq:noise_time_discretization}:
\begin{align*}
\hat{w}_m = \exp({\bf v}_1)\exp({\bf v}_2)\cdots\exp({\bf v}_{m-1}),
\end{align*}
where each ${\bf v}_i=\gamma(i\Delta t){\bf z}_i$ is a Lie algebra element obtained at each sampling step.
This allows us to evaluate the modular function only using the knowledge of the Lie bracket $[\cdot,\cdot]: \mathfrak{g}\times\mathfrak{g}\to\mathfrak{g}$:
\begin{proposition}\label{prop:mod}
Let $G$ be a connected $n$-dimensional Lie group.
Under a choice of orthonormal basis $\{{\bf e}_1,...,{\bf e}_n\}$, with structure constants $c_{ij}^k$ satisfying $[{\bf e}_i,{\bf e}_j]=\sum_{k=1}^n c_{ij}^k {\bf e}_k$, let us denote:
\begin{align*}
{\bf a}\equiv \left[\begin{array}{cc}
\sum_{j=1}^nc_{1j}^j \\
\cdots \\
\sum_{j=1}^nc_{nj}^j
\end{array}\right]\in\mathbb{R}^n,
\quad {\bf b}_1 \equiv \left[\begin{array}{cc}
\langle{\bf e}_1,{\bf v}_1\rangle_\mathfrak{g} \\
\cdots \\
\langle{\bf e}_n,{\bf v}_1\rangle_\mathfrak{g}
\end{array}\right]
\quad \cdots
\quad {\bf b}_{m-1} \equiv \left[\begin{array}{cc}
\langle{\bf e}_1,{\bf v}_{m-1}\rangle_\mathfrak{g} \\
\cdots \\
\langle{\bf e}_n,{\bf v}_{m-1}\rangle_\mathfrak{g}
\end{array}\right]\in\mathbb{R}^n.
\end{align*}
Then the following holds:
\begin{align}\label{eq:modular_function}
\lambda(\hat{w}_m) = e^{-{{\bf a}^\top {\bf b}_1-{\bf a}^\top {\bf b}_2 - ... -{\bf a}^\top {\bf b}_{m-1}}}.
\end{align}
\end{proposition}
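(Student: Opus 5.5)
The plan is to reduce everything to a single-exponential computation, using two standard facts: the modular function $\lambda:G\to\mathbb{R}_{>0}$ is a continuous group homomorphism, and it can be written via the adjoint representation. Since $\hat{w}_m=\exp({\bf v}_1)\cdots\exp({\bf v}_{m-1})$, multiplicativity gives
\begin{align*}
\lambda(\hat{w}_m)=\prod_{i=1}^{m-1}\lambda(\exp({\bf v}_i)).
\end{align*}
So it suffices to show $\lambda(\exp({\bf v}))=e^{-{\bf a}^\top{\bf b}}$, where ${\bf b}$ collects the coordinates $\langle{\bf e}_k,{\bf v}\rangle_\mathfrak{g}$ of ${\bf v}$ in the chosen orthonormal basis. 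The sum in the exponent of \cref{eq:modular_function} then follows by taking the product. I will verify the homomorphism property directly from the defining relation $\mu(Ag)=\lambda(g)\mu(A)$: we have $\mu(Agh)=\lambda(h)\mu(Ag)=\lambda(h)\lambda(g)\mu(A)$.

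The key step is to identify $\lambda(g)=\abs{\det{\rm Ad}(g^{-1})}$ under the paper's convention $\mu(Ag)=\lambda(g)\mu(A)$, with $\mu$ left-invariant. I will argue as follows. Fix a left-invariant volume form $\omega$ representing $\mu$, determined by its value $\omega_e$ on $\mathfrak{g}$. Right translation $R_g$ pushes $\mu$ forward to another left-invariant measure, because left and right translations commute. Hence $R_g$ scales $\mu$ by a constant, and this constant can be read off at the identity.

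To compute the constant, write $R_g=L_g\circ C_{g^{-1}}$, where $C_{g^{-1}}(h)=g^{-1}hg$ is conjugation. Left translation preserves $\omega$. Conjugation fixes $e$ and has differential ${\rm Ad}(g^{-1})$ there, so it scales $\omega_e$ by $\abs{\det{\rm Ad}(g^{-1})}$. Therefore $\mu(Ag)=\abs{\det{\rm Ad}(g^{-1})}\,\mu(A)$. As a sanity check, I will compare against Folland's $\S$2.4 convention and against the affine group ${\rm Aff}(1)$, where the sign of the exponent can be checked by hand. I expect this sign and convention bookkeeping to be the main obstacle, since the inverse in $g^{-1}$ is exactly what produces the minus sign in \cref{eq:modular_function}.

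Finally, I use ${\rm Ad}(\exp(-{\bf v}))=\exp(-{\rm ad}_{\bf v})$ together with $\det\exp(M)=e^{{\rm tr}M}$. This gives
\begin{align*}
\lambda(\exp({\bf v}))=e^{-{\rm tr}\,{\rm ad}_{\bf v}},
\end{align*}
and no absolute value is needed because the exponential of a real number is positive. It remains to compute the trace in the basis $\{{\bf e}_j\}$. Writing ${\bf v}=\sum_i b_i{\bf e}_i$, we have
\begin{align*}
{\rm ad}_{\bf v}{\bf e}_j=\sum_i b_i[{\bf e}_i,{\bf e}_j]=\sum_{i,k}b_i c_{ij}^k{\bf e}_k,
\end{align*}
so the diagonal entries give ${\rm tr}\,{\rm ad}_{\bf v}=\sum_i b_i\sum_j c_{ij}^j={\bf a}^\top{\bf b}$. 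Substituting this for each ${\bf v}_i$ into the product completes \cref{eq:modular_function}.

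The orthonormality of the basis is used only to identify $b_k=\langle{\bf e}_k,{\bf v}\rangle_\mathfrak{g}$. Connectedness is not really needed, since the formula is applied only to products of exponentials.
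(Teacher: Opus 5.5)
Your proposal is correct and follows essentially the same route as the paper: reduce to single exponentials by multiplicativity, use $\lambda(g)=\det{\rm Ad}(g^{-1})$ together with $\det e^{{\rm ad}({\bf v})}=e^{{\rm tr}\,{\rm ad}({\bf v})}$, and then carry out the same trace computation in the chosen basis. The only difference is that you derive $\lambda(g)=\lvert\det{\rm Ad}(g^{-1})\rvert$ yourself from $R_g=L_g\circ C_{g^{-1}}$, and your sign agrees with the paper's convention, whereas the paper simply cites Folland, Prop.~2.30.
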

\begin{proof}
For every connected Lie group $G$, we have \citep[Prop. 2.30]{folland2015course}
\begin{align*}
\lambda(g) = \det{\rm Ad}(g^{-1})
\end{align*}
where ${\rm Ad}(\cdot):\mathfrak{g}\to\mathfrak{g}$ is the adjoint action of $G$ on the Lie algebra.
Using its properties as a linear group representation \citep[Example 4.8]{kirillov2008introduction}, we get
\begin{align*}
\lambda(\hat{w}_m) &= \frac{1}{\det{\rm Ad}(\hat{w}_m)} = \frac{1}{\det{\rm Ad}(\exp({\bf v}_1)\cdots \exp({\bf v}_{m-1}))} \nonumber\\
&= \frac{1}{\det{\rm Ad}(\exp({\bf v}_1))}\times \cdots \times\frac{1}{\det{\rm Ad}(\exp({\bf v}_{m-1}))}
\end{align*}
Then, using that $\det{\rm Ad}(\exp({\bf v})) = \det e^{{\rm ad}({\bf v})} = e^{{\rm tr}({\rm ad}({\bf v}))}$, where ${\rm ad}({\bf v}):{\bf u}\mapsto [{\bf v}, {\bf u}]$ is a linear map on $\mathfrak{g}$ \citep[Lem. 3.14]{kirillov2008introduction} with properties following from bilinearity of the Lie bracket
\begin{align*}
{\rm ad}({\bf v})({\bf e}_i) &= [{\bf v},{\bf e}_i] = \sum_j \langle {\bf e}_j,{\bf v}\rangle_\mathfrak{g} [{\bf e}_j, {\bf e}_i] = \sum_j \langle {\bf e}_j,{\bf v}\rangle_\mathfrak{g} \sum_k c_{ji}^k{\bf e}_k \\
{\rm tr}({\rm ad}({\bf v})) &= \sum_i \langle {\bf e}_i, {\rm ad}({\bf v})({\bf e}_i) \rangle_\mathfrak{g} = \sum_i \langle {\bf e}_i,{\bf v}\rangle_\mathfrak{g} \sum_j c_{ij}^j
\end{align*}
we obtain \cref{eq:modular_function}.
\end{proof}

Finally, for the last item, we use \cref{prop:trivialized_grad} to directly evaluate the trivialized gradient using automatic differentiation.

\begin{table}[!t]
\centering
\caption{{Configurations of the TIED sampler used in our experiments.}}\label{table:hyperparameters}
\vspace{-0.4cm}
\begin{adjustbox}{max width=\textwidth}
\footnotesize
\begin{tabular}{cccccc}
\\\Xhline{2\arrayrulewidth}\\[-1em]
Group & Domain & Energy & Noise schedule $(\gamma_{\min}, \gamma_{\max})$ & Step size $\Delta t$ & MC sample size $N$
\\\Xhline{2\arrayrulewidth}\\[-1em]
${\rm SO}(10)$ & Synthetic & Quadratic & $(0.01, 10)$ & $1/100$ & $100$ \\
${\rm Aff}(2,\mathbb{R})$ & Image & VAE & $(0.1, 1)$ & $1/50$ & $4$ \\
${\rm Aff}(2,\mathbb{R})$ & Image & Classifier & $(0.1, 1)$ & $1/50$ & $2$ \\
${\rm PGL}(3,\mathbb{R})$ & Image & VAE & $(0.05, 0.5)$ & $1/50$ & $4$ \\
${\rm PGL}(3,\mathbb{R})$ & Image & Classifier & $(0.01, 0.5)$ & $1/50$ & $2$ \\
${\rm SL}(2,\mathbb{R}) \ltimes {\rm H}(1,\mathbb{R})$ & PDE & Boundary & $(0.01, 0.5)$ & $1/50$ & $10$ \\
${\rm SL}(2,\mathbb{R}) \ltimes (\mathbb{R}^2,+)$ & PDE & Boundary & $(0.01, 0.5)$ & $1/50$ & $10$
\\\Xhline{2\arrayrulewidth}\\[-1em]
\end{tabular}
\end{adjustbox}
\end{table}

We finish the section with a description of the choice of noise schedule~$\gamma$ and sampler configurations.
We adopt the geometric noise schedule from \citet{song2020score}, defined for a choice of $\gamma_{\min}<\gamma_{\max}$ as $\gamma(t)\equiv \gamma_{\min}\cdot (\gamma_{\max}/\gamma_{\min})^t$.
It starts at $\gamma(0)=\gamma_{\min}$ and gradually increases to $\gamma(1)=\gamma_{\max}$.
We provide the specific configurations of the TIED sampler used in the experiments in \cref{table:hyperparameters}.

\subsection{Other implementation details}\label{sec:ensemble}

Ensembling over augmentations generally improves the performance of pretrained neural networks \citep{shanmugam2021better}.
In the context of test-time equivariance, an ensemble of the form \cref{eq:ensemble} can be used \citep{kim2023learning}, which performs Monte Carlo averaging of the estimator $g\cdot f\pr{g^{-1}\cdot \tilde{\v{x}}}$ with respect to $p\pr{g \mid \tilde{\v{x}}}$.
We use an alternative ensemble scheme that, after diffusion sampling from $p\pr{g \mid \tilde{\v{x}}}$, selects the sample with lowest energy (or equivalently highest likelihood).
This can be interpreted as sampling from a sharpened distribution (CDF raised to a power) then performing a one-sample Monte Carlo estimate. It preserves equivariance:

\begin{proposition}
    Given a $G$-equivariant $p\pr{g \mid \v{x}}$, denote by $p^*\pr{g \mid \v{x}}$ the density of maximum-likelihood element among independent samples $g_1, ..., g_K\sim p\pr{\cdot \mid \v{x}}$, which we assume is unique almost surely. Then $p^*\pr{g \mid \v{x}}$ is $G$-equivariant.
\end{proposition}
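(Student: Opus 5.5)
Recall from the proof of \cref{prop:equiv} that equivariance of a density on $G$ (w.r.t.\ the left-invariant Haar measure $\mu$) means $p(h g\mid h\cdot\v{x}) = p(g\mid \v{x})$ for all $g,h\in G$ and all $\v{x}$. The plan is to show that the law of the maximum-likelihood element among $K$ samples transforms in exactly this way. The argument is at the level of random variables: a pushforward by left multiplication, combined with invariance of the selection rule. No explicit density for $p^*$ needs to be computed.

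\textbf{Step 1: the samples transform correctly.} Fix $h$ and $\v{x}$. Take $g_1,\dots,g_K$ i.i.d.\ from $p(\cdot\mid\v{x})$ and set $g_i' = h g_i$. For measurable $A$, left-invariance of $\mu$ and the change of variables $g = h^{-1}g'$ give
\[
\mathbb{P}(hg_i\in A)=\int_{h^{-1}A} p(g\mid\v{x})\,\diff\mu(g)=\int_A p(h^{-1}g'\mid \v{x})\,\diff\mu(g') .
\]
By equivariance, $p(h^{-1}g'\mid \v{x}) = p(g'\mid h\cdot\v{x})$. Hence the $g_i'$ are i.i.d.\ from $p(\cdot\mid h\cdot\v{x})$.

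\textbf{Step 2: the selection rule commutes with $h$.} For input $\v{x}$, the selection is $\arg\max_i p(g_i\mid\v{x})$. For input $h\cdot\v{x}$ with samples $g_i'$, equivariance gives $p(g_i'\mid h\cdot\v{x}) = p(hg_i\mid h\cdot \v{x}) = p(g_i\mid\v{x})$. So the likelihood values are identical termwise, and the same index $i^*$ is selected, almost surely unique by assumption. The selected element is therefore $h g_{i^*}$.

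\textbf{Step 3: conclude.} By Step 1, the maximizer under input $h\cdot\v{x}$ has the law of $h\cdot g^*$, where $g^*\sim p^*(\cdot\mid\v{x})$. Repeating the change-of-variables computation of Step 1 with $p^*$ in place of $p$ yields
\[
p^*(hg\mid h\cdot\v{x}) = p^*(g\mid\v{x}) ,
\]
which is equivariance.

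The main subtlety is existence of the density $p^*$ and the measurability and a.s.\ uniqueness of the argmax. Since the maximizer is a measurable function of $(g_1,\dots,g_K)$ and is absolutely continuous w.r.t.\ $\mu$, one can obtain its density directly in the form
\[
p^*(g\mid\v{x}) = K\, p(g\mid \v{x})\, F\!\bigl(p(g\mid\v{x})\bigr)^{K-1},
\qquad F(s)=\mathbb{P}\bigl(p(g_1\mid\v{x})<s\bigr).
\]
As a cross-check, note that $F$ is unchanged when $\v{x}$ is replaced by $h\cdot\v{x}$, by Steps 1–2. Equivariance of $p^*$ can then also be read off this formula directly.
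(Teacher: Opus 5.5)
Your proposal is correct and takes essentially the same route as the paper. You push the i.i.d.\ samples forward by left multiplication, use equivariance to see that the likelihood values, and hence the selected index, are unchanged, and conclude that the maximizer's law transforms by $L_h$, which by left-invariance of $\mu$ is exactly $p^*(hg\mid h\cdot\mathbf{x}) = p^*(g\mid\mathbf{x})$; your explicit change of variables and the closed form $K\,p\,F(p)^{K-1}$ just make rigorous what the paper states in words, including its ``CDF raised to a power'' remark.
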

\begin{proof}
    By equivariance, the samples from $p\pr{\cdot \mid h\cdot\v{x}}$ are distributed as $hg_1, ..., hg_K$ where $g_1, ..., g_K\sim p\pr{\cdot \mid \v{x}}$.
    Moreover, $p\pr{hg_i \mid h\cdot\v{x}}=p\pr{g_i \mid \v{x}}$ so the ordering of the likelihoods of $g_1, ..., g_K$ is unchanged.
    Hence, if $g^*$ is the maximizer among $g_1, ..., g_K$, then $hg^*$ is the maximizer among $hg_1, ..., hg_K$.
    Therefore, $g^*\mid h \cdot \v{x} \stackrel{d}{=} h(g^*\mid \v{x})$.
    Equivalently, in terms of densities with respect to the left Haar measure, $p^*\pr{g \mid \v{x}}=p^*\pr{hg \mid h\cdot\v{x}}$.
\end{proof}

We use this scheme in TIED throughout, using $K=64$, matching multiple initializations in LieLAC~\citep{shumaylov2024lie}.
We take this cost of ensembling into account when comparing to other methods.

Additionally, when computing the energy \cref{eq:energy} we drop the log-determinant of the Jacobian, leading to $E(g)\approx E_{\bf x}(g^{-1}\cdot\tilde{\bf x})$.
This approximation makes the computation more efficient while having a negligible impact on performance.

\section{Supplementary results}\label{apd:experiment}

\begin{table}[!t]
\centering
\caption{{MNIST classification test wall-clock runtime measured on a single NVIDIA A6000 GPU.}}
\vspace{-0.4cm}
\footnotesize
\begin{adjustbox}{max width=\textwidth}
\begin{tabular}{lcc}\label{table:image_classification_runtime}
\\\Xhline{2\arrayrulewidth}\\[-1em]
dataset & \multicolumn{2}{c}{MNIST}
\\\Xhline{2\arrayrulewidth}\\[-1em]
\centering test transformations
& ${\rm Aff}(2,\mathbb{R})$ 
& ${\rm PGL}(3,\mathbb{R})$ \\
\Xhline{2\arrayrulewidth}\\[-1em]
\hyperlink{cite.macdonald2022enabling}{affConv / homConv} (training time) & 36 hours & 36 hours \\
\Xhline{2\arrayrulewidth}\\[-1em]
\multicolumn{3}{c}{Energy: VAE evidence lower bound (+ adv. reg.)} \\
\Xhline{2\arrayrulewidth}\\[-1em]
ResNet18 + \hyperlink{cite.schmidt2024tilt}{ITS} & 13 mins & n/a \\
ResNet18 + \hyperlink{cite.singhal2025test}{FoCal} & 7 hours & 5 hours \\
ResNet18 + \hyperlink{cite.kong2024convergence}{Kinetic Langevin} & 90 mins & 93 mins \\
ResNet18 + \hyperlink{cite.shumaylov2024lie}{LieLAC} & 75 mins & 75 mins \\
\Xhline{2\arrayrulewidth}\\[-1em]
ResNet18 + TIED (Ours) & 50 mins & 50 mins \\
\Xhline{2\arrayrulewidth}\\[-1em]
\multicolumn{3}{c}{Energy: Classifier logit confidence} \\
\Xhline{2\arrayrulewidth}\\[-1em]
ResNet18 + \hyperlink{cite.schmidt2024tilt}{ITS} & 12 mins & n/a \\
ResNet18 + \hyperlink{cite.singhal2025test}{FoCal} & 7 hours & 4.5 hours \\
ResNet18 + \hyperlink{cite.kong2024convergence}{Kinetic Langevin} & 65 mins & 73 mins \\
ResNet18 + \hyperlink{cite.shumaylov2024lie}{LieLAC} & 30 mins & 32 mins \\
\Xhline{2\arrayrulewidth}\\[-1em]
ResNet18 + TIED (Ours) & 37 mins & 36 mins \\
\Xhline{2\arrayrulewidth}
\end{tabular}
\end{adjustbox}
\end{table}

\begin{table}[!t]
\caption{{PDE solving wall-clock runtime measured on a single NVIDIA 6000 GPU.}}
\vspace{-0.4cm}
\centering
\footnotesize
\begin{adjustbox}{max width=\textwidth}
\begin{tabular}{lcccc}\label{table:pde_evolution_runtime}
\\\Xhline{2\arrayrulewidth}\\[-1em]
PDE & 1D Heat Eq. & 1D Heat Eq. + Data Aug. & 1D Burgers Eq.
\\\Xhline{2\arrayrulewidth}\\[-1em]
Test transformations & ${\rm SL}(2,\mathbb{R}) \ltimes {\rm H}(1,\mathbb{R})$ & ${\rm SL}(2,\mathbb{R}) \ltimes {\rm H}(1,\mathbb{R})$ & ${\rm SL}(2,\mathbb{R}) \ltimes (\mathbb{R}^2,+)$ \\
\Xhline{2\arrayrulewidth}\\[-1em]
\multicolumn{4}{c}{Energy: Distance to training domain} \\
\Xhline{2\arrayrulewidth}\\[-1em]
DeepONet + \hyperlink{cite.singhal2025test}{FoCal} & 13 mins & 17 mins & 7.5 mins \\
DeepONet + \hyperlink{cite.kong2024convergence}{Kinetic Langevin} & 75 mins & 75 mins & 1 hour \\
DeepONet + \hyperlink{cite.shumaylov2024lie}{LieLAC} & 20 secs & 20 secs & 17 secs \\
\Xhline{2\arrayrulewidth}\\[-1em]
DeepONet + TIED (Ours) & 30 secs & 30 secs & 20 secs \\
\Xhline{2\arrayrulewidth}
\end{tabular}
\end{adjustbox}
\end{table}

\begin{table}[!t]
\centering
\caption{{Test accuracy of TIED across MC sample size ($N$) and sampling steps ($M$) in affNIST using the classifier confidence energy.}}
\label{table:tied_acc_mc_sampling}
\vspace{-0.4cm}
\footnotesize
\begin{adjustbox}{max width=\textwidth}
\begin{tabular}{lcccc}
\\\Xhline{2\arrayrulewidth}\\[-1em]
& $N=1$
& $N=2$
& $N=5$
& $N=10$ \\
\Xhline{2\arrayrulewidth}\\[-1em]
$M=10$  & 75.45\% & 76.50\% & 77.99\% & 78.36\% \\
$M=20$  & 79.62\% & 80.37\% & 81.23\% & 81.86\% \\
$M=50$  & 82.00\% & 82.63\% & 83.15\% & 83.41\% \\
$M=100$ & 82.88\% & 83.21\% & 83.33\% & 83.78\% \\
\Xhline{2\arrayrulewidth}
\end{tabular}
\end{adjustbox}
\end{table}

\begin{table}[!t]
\centering
\caption{Test accuracy of TIED across noise schedules in affNIST classification using the classifier confidence energy.}
\label{table:tied_acc_noise_schedules}
\vspace{-0.4cm}
\footnotesize
\begin{adjustbox}{max width=\textwidth}
\begin{tabular}{cc}
\\\Xhline{2\arrayrulewidth}\\[-1em]
Noise schedule $(\gamma_{\min}, \gamma_{\max})$ & Accuracy \\
\Xhline{2\arrayrulewidth}\\[-1em]
$(0.05, 1.0)$ & 83.24\% \\
$(0.1, 1.0)$  & 82.63\% \\
$(0.2, 1.0)$  & 81.45\% \\
$(0.1, 0.5)$  & 79.74\% \\
$(0.1, 2.0)$  & 77.97\% \\
$(1.0, 1.0)$  & 60.00\% \\
$(0.1, 0.1)$  & 59.80\% \\
\Xhline{2\arrayrulewidth}
\end{tabular}
\end{adjustbox}
\end{table}

We provide supplementary results that could not be included in the main text due to space constraints.

\paragraph{Runtime efficiency.}
In \cref{table:image_classification_runtime,table:pde_evolution_runtime}, we provide the wall-clock runtime of MNIST classification and PDE solving experiments, respectively.
Time cost of TIED is lower than FoCal and Langevin, thanks to parallelizability, and comparable to LieLAC, while outperforming all baselines in accuracy.
TIED is slightly more costly than ITS, although we remark that ITS is specialized for affine transformations.
We use 8 search levels for ITS, which we found necessary for a reasonable result, a more generous resource allocation than the original paper~\citep{schmidt2024tilt} which considered 3--5 levels.
Increasing the search levels or other parameters did not lead to further gains, consistent with \citet[Fig.~11]{schmidt2024tilt}.

\paragraph{Cost-accuracy tradeoff.}
While we restrict the cost of our method to be comparable with baselines, we find that its performance scales smoothly with compute.
To illustrate this, we perform a study on affNIST with classifier energy (\cref{sec:mnist_experiments}), varying the MC sample size $N$ for score estimation and the number of sampling steps $M$. The results are in \cref{table:tied_acc_mc_sampling}. Even in the most resource-limited regime ($M=10, N=1$), it achieves 75.45\% accuracy and outperforms the baselines in \cref{table:image_classification}.

\paragraph{Ablation analysis.}
For an ablation study, we consider the diffusion schedule $\gamma(t) = \gamma_{\min}\cdot (\gamma_{\max}/\gamma_{\min})^t$.
The results are in \cref{table:tied_acc_noise_schedules}. Setting $\gamma(0) = \gamma_{\min}$ to be reasonably small, and $\gamma(1) = \gamma_{\max}$ to be reasonably large, offers stable results, while enforcing $\gamma(0) = \gamma(1)$ leads to degraded accuracy. This is expected from our variance-exploding formulation, which works best when it anneals between large-diffusion noise and small perturbations of clean samples~\citep{song2020score}. Enforcing it to only perform large diffusion does not allow diffusion to consolidate towards the clean samples, while only allowing small perturbations makes it inaccurate as the noising kernel no longer approximates the marginal noise well (\cref{sec:diffusion_sampling_with_tsi}).

\paragraph{Additional visualization.}
\cref{fig:diffusion,fig:lielac,fig:focal,fig:langevin,fig:its} visualize affNIST and homNIST images transformed by our method and baselines under the classifier confidence energy.

\FloatBarrier

\begin{figure}[p]
    \centering
    \begin{subfigure}[b]{0.45\textwidth}
        \centering
        \includegraphics[height=18cm]{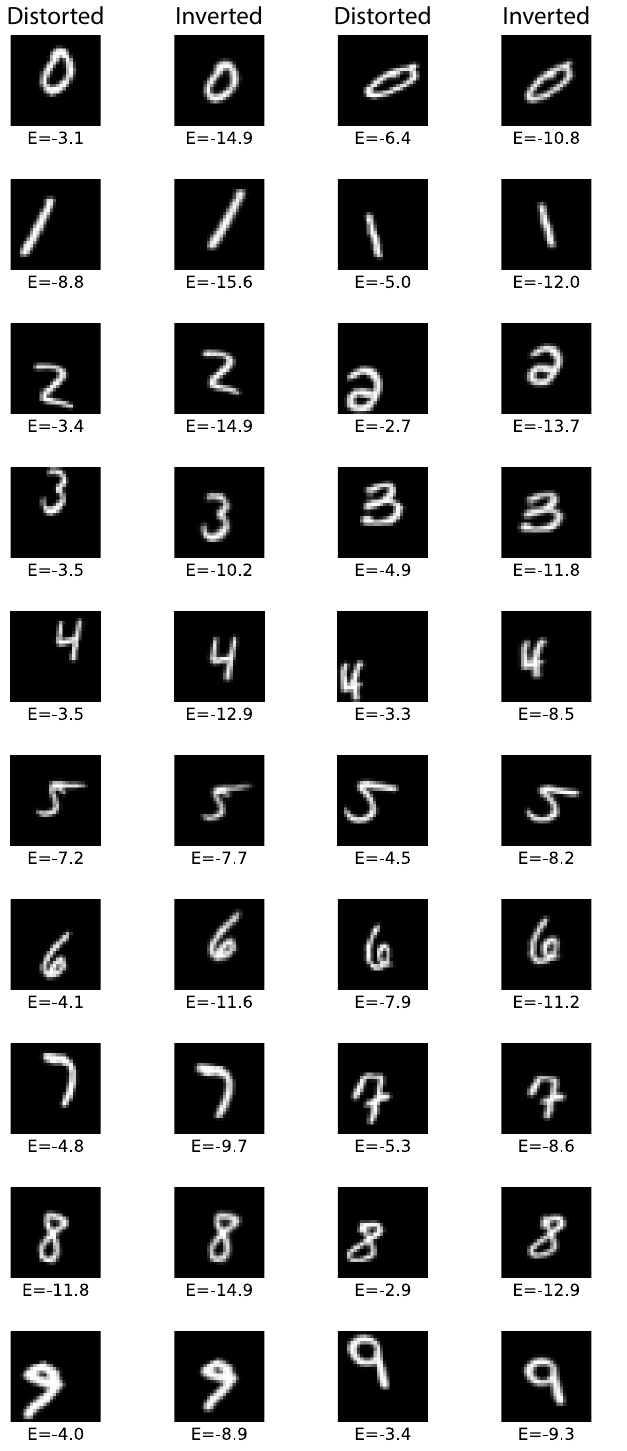}
        \caption{affNIST}
        \label{fig:aff_diffusion}
    \end{subfigure}
    \hfill
    \begin{subfigure}[b]{0.45\textwidth}
        \centering
        \includegraphics[height=18cm]{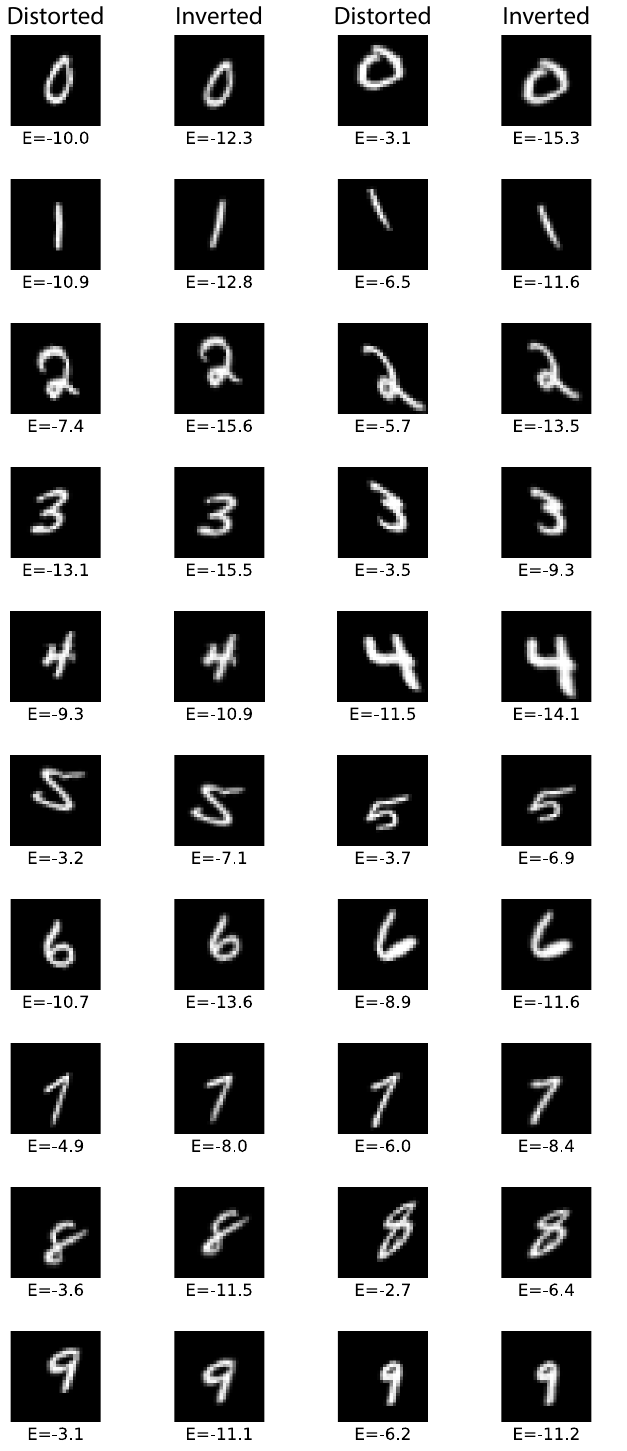}
        \caption{homNIST}
        \label{fig:hom_diffusion}
    \end{subfigure}
    \caption{TIED examples.}
    \label{fig:diffusion}
\end{figure}

\begin{figure}[p]
    \centering
    \begin{subfigure}[b]{0.45\textwidth}
        \centering
        \includegraphics[height=18cm]{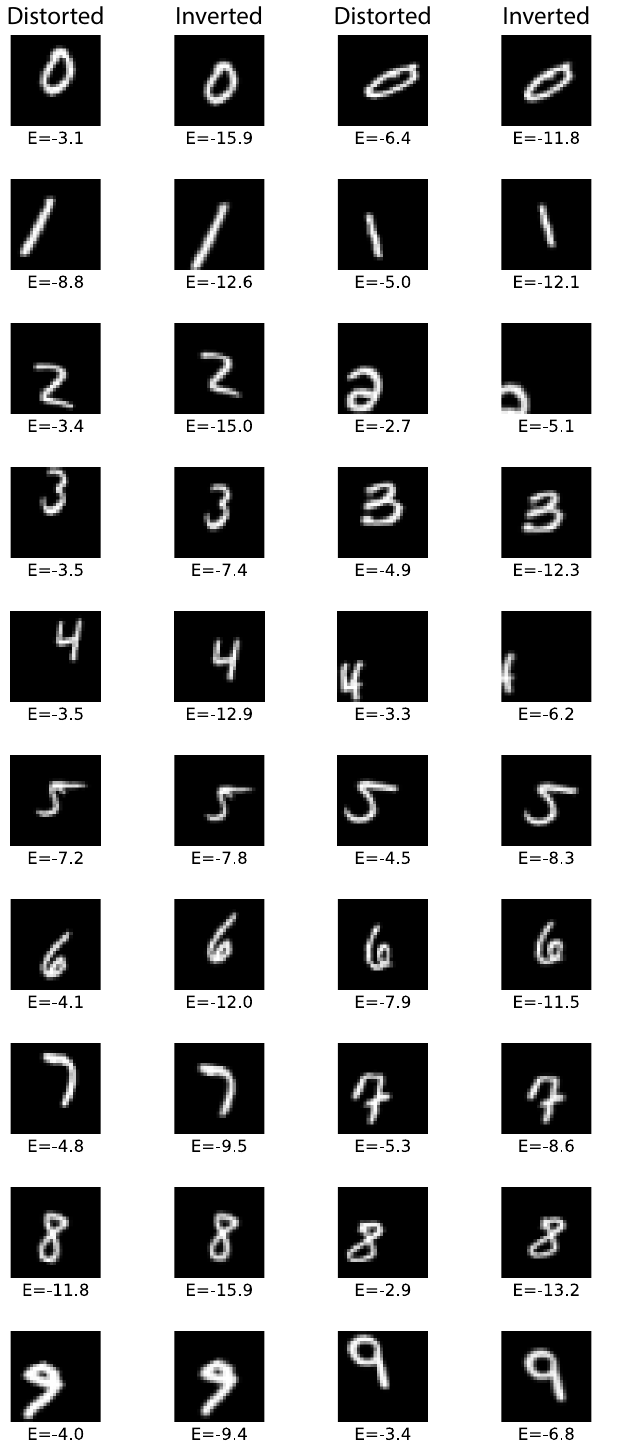}
        \caption{affNIST}
        \label{fig:aff_lielac}
    \end{subfigure}
    \hfill
    \begin{subfigure}[b]{0.45\textwidth}
        \centering
        \includegraphics[height=18cm]{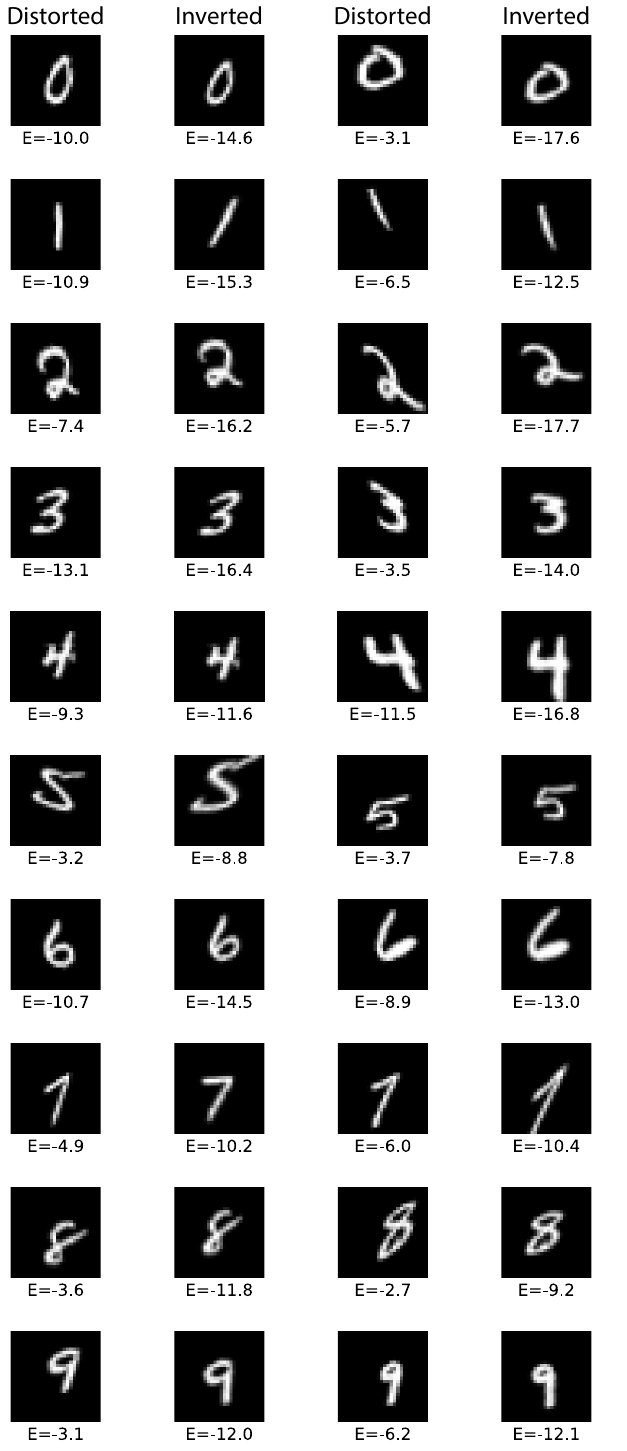}
        \caption{homNIST}
        \label{fig:hom_lielac}
    \end{subfigure}
    \caption{LieLAC examples.}
    \label{fig:lielac}
\end{figure}

\begin{figure}[p]
    \centering
    \begin{subfigure}[b]{0.45\textwidth}
        \centering
        \includegraphics[height=18cm]{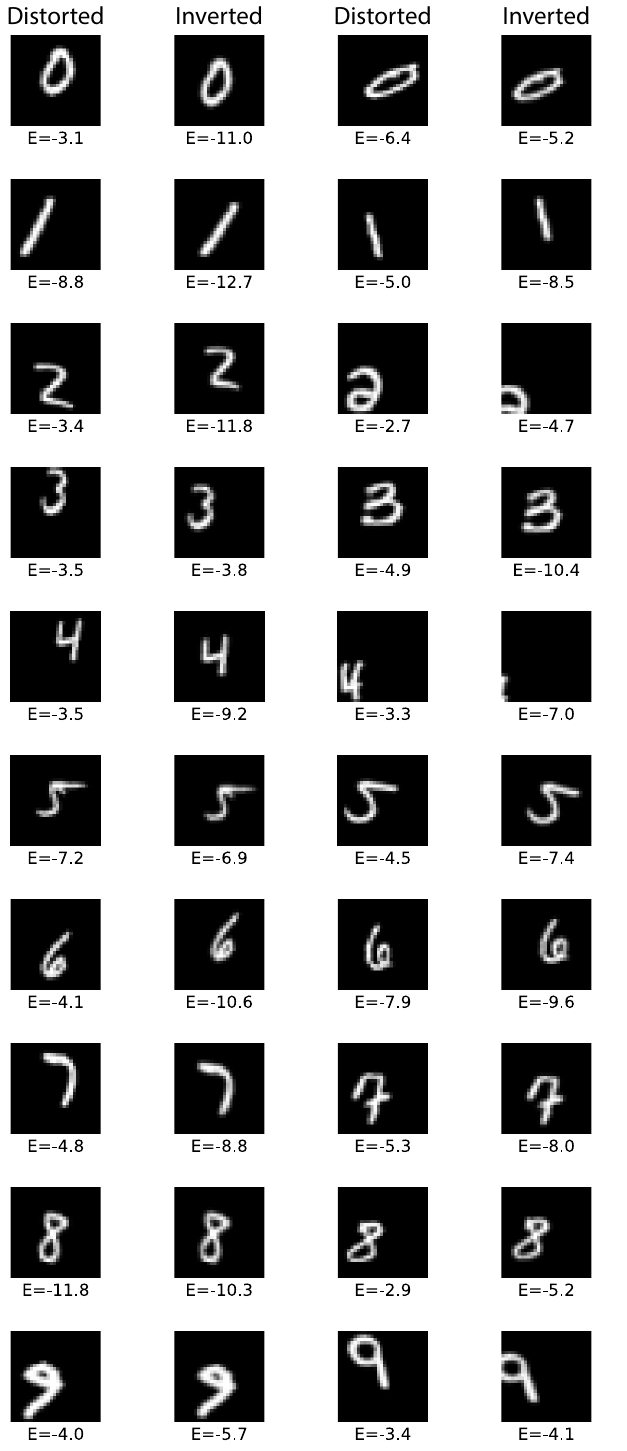}
        \caption{affNIST}
        \label{fig:aff_focal}
    \end{subfigure}
    \hfill
    \begin{subfigure}[b]{0.45\textwidth}
        \centering
        \includegraphics[height=18cm]{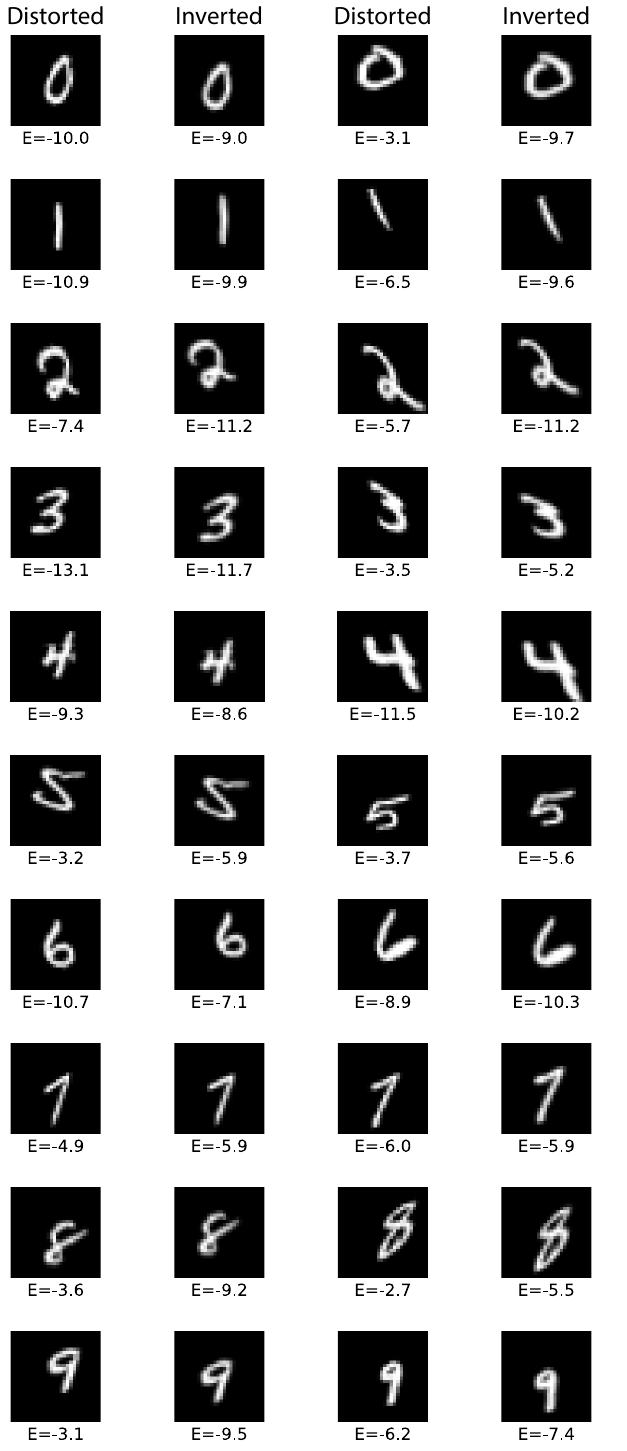}
        \caption{homNIST}
        \label{fig:hom_focal}
    \end{subfigure}
    \caption{FoCal examples.}
    \label{fig:focal}
\end{figure}

\begin{figure}[p]
    \centering
    \begin{subfigure}[b]{0.45\textwidth}
        \centering
        \includegraphics[height=18cm]{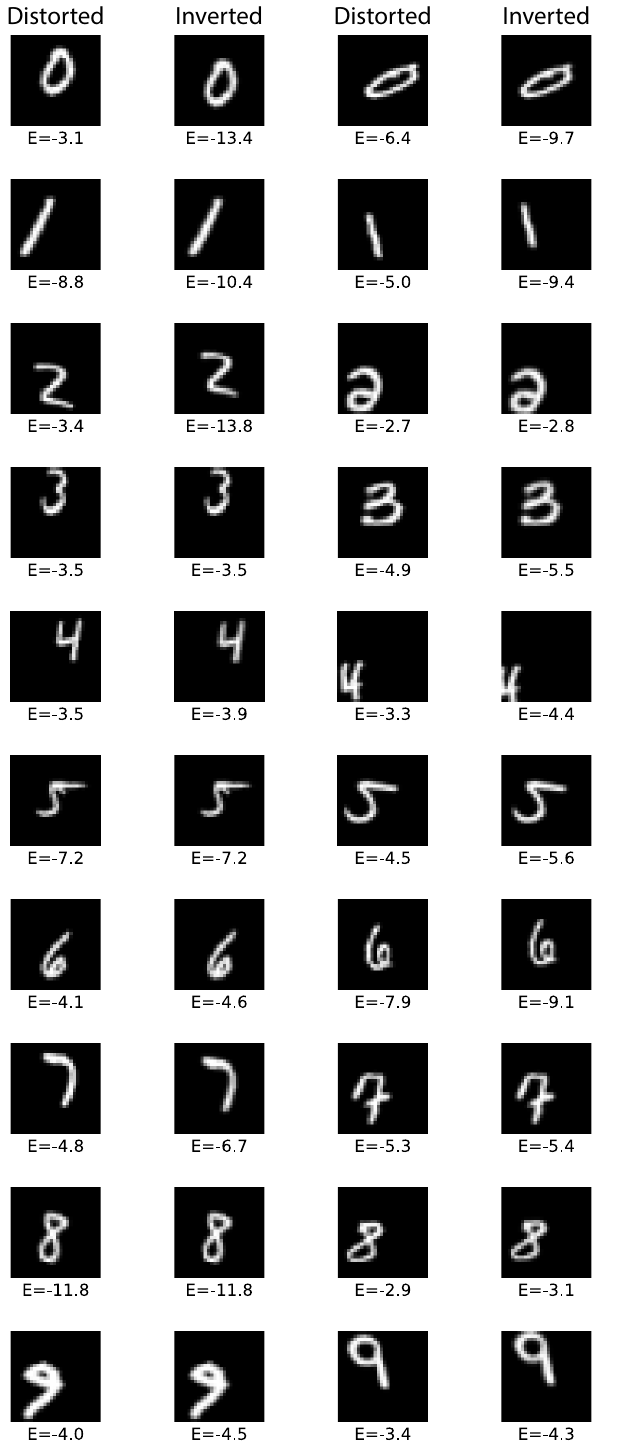}
        \caption{affNIST}
        \label{fig:aff_langevin}
    \end{subfigure}
    \hfill
    \begin{subfigure}[b]{0.45\textwidth}
        \centering
        \includegraphics[height=18cm]{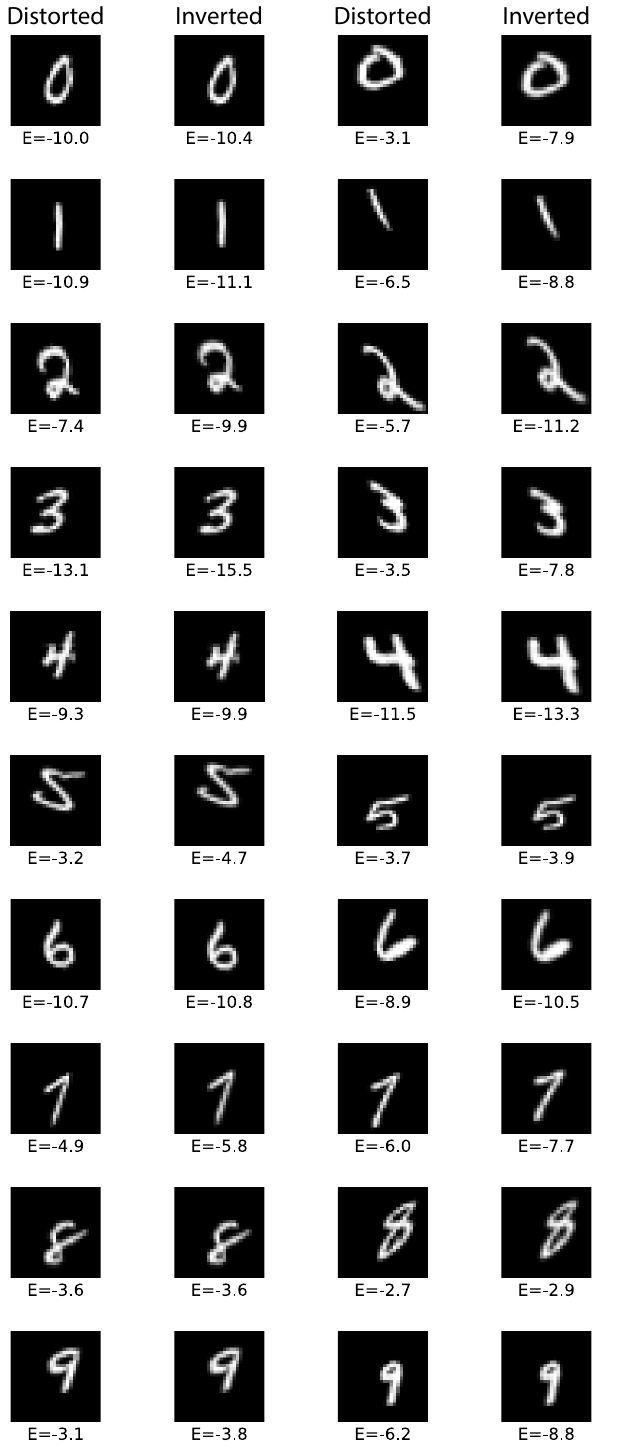}
        \caption{homNIST}
        \label{fig:hom_langevin}
    \end{subfigure}
    \caption{Kinetic Langevin examples.}
    \label{fig:langevin}
\end{figure}

\begin{figure}[p]
    \centering
    \begin{subfigure}[b]{0.45\textwidth}
        \centering
        \includegraphics[height=18cm]{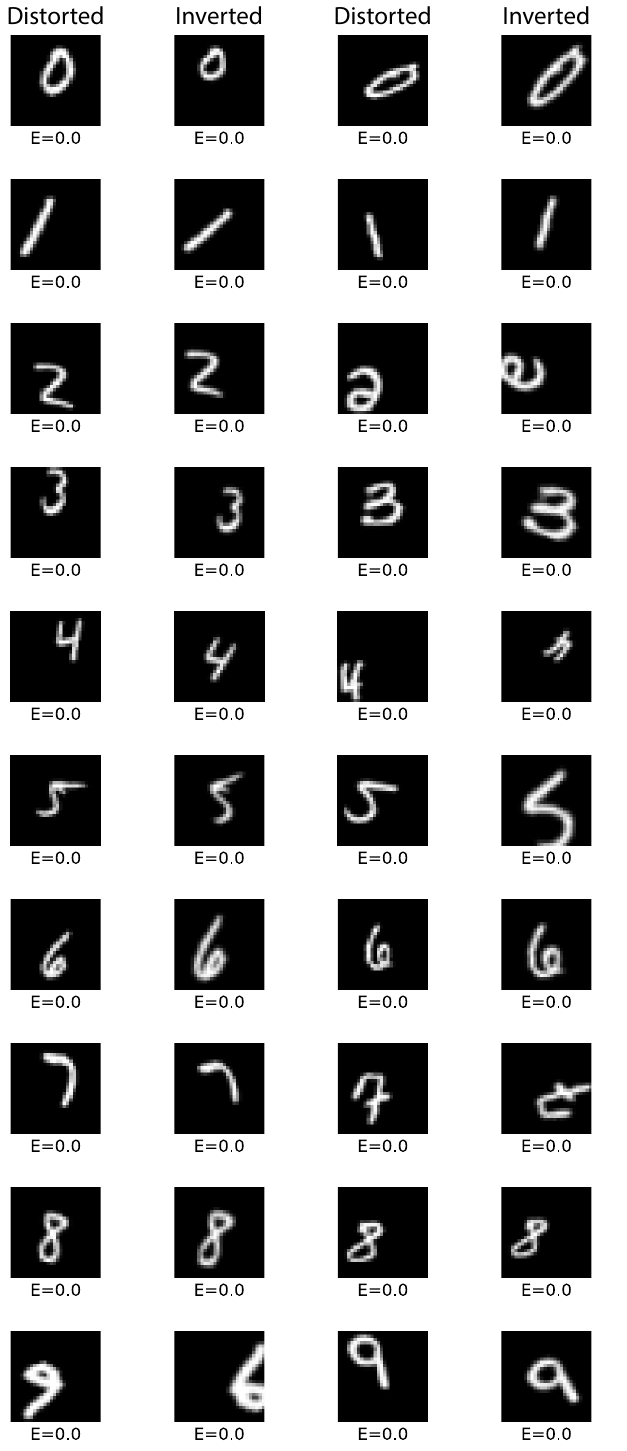}
        \caption{affNIST}
        \label{fig:aff_its}
    \end{subfigure}
    \caption{ITS examples.}
    \label{fig:its}
\end{figure}